
\documentclass{article}

\usepackage{microtype}
\usepackage{graphicx}
\usepackage{booktabs} 
\usepackage{wrapfig}

\usepackage{appendix}
\usepackage{bbm}
\usepackage{array} 
\usepackage{multirow} 
\usepackage{subcaption}
\usepackage{caption}
\captionsetup[figure]{font=small,}
\captionsetup[table]{font=small,}
\usepackage[parfill]{parskip}
\newenvironment{retheorem}[1]{\par\noindent\textbf{Theorem \ref{#1}.}\itshape}{\par}

\usepackage[status=draft,inline,index]{fixme}
\fxsetup{theme=color,margin=false,mode=multiuser}
\FXRegisterAuthor{rt}{rte}{RT}
\FXRegisterAuthor{us}{use}{\color{blue}US}
\FXRegisterAuthor{dn}{dne}{\color{magenta}DN}

\usepackage{hyperref}



\usepackage[accepted]{icml2025}

\usepackage{amsmath}
\usepackage{amssymb}
\usepackage{mathtools}
\usepackage{amsthm}
\usepackage{amsfonts}

\usepackage[capitalize,noabbrev]{cleveref}

\theoremstyle{plain}
\newtheorem{theorem}{Theorem}[section]

\newtheorem{lemma}[theorem]{Lemma}

\theoremstyle{definition}
\newtheorem{definition}[theorem]{Definition}

\theoremstyle{remark}

\usepackage[textsize=tiny]{todonotes}

\icmltitlerunning{Set Valued Predictions For Robust Domain Generalization}

\begin{document}

\twocolumn[
\icmltitle{Set Valued Predictions For Robust Domain Generalization}




\begin{icmlauthorlist}
\icmlauthor{Ron Tsibulsky}{TAU-CS}
\icmlauthor{Daniel Nevo}{TAU-STATS}
\icmlauthor{Uri Shalit}{TAU-STATS,TECHNION}

\end{icmlauthorlist}

\icmlaffiliation{TAU-CS}{Department of Computer Science, Tel Aviv University, Tel Aviv, Israel}
\icmlaffiliation{TAU-STATS}{Department of Statistics and Operations Research, Tel Aviv University, Tel Aviv, Israel}
\icmlaffiliation{TECHNION}{Department of Data and Decisions Science, Technion, Haifa, Israel}

\icmlcorrespondingauthor{Ron Tsibulsky}{ront65@gmail.com}

\icmlkeywords{Machine Learning, ICML}

\vskip 0.3in
]



\printAffiliationsAndNotice{}  

\begin{abstract}
Despite the impressive advancements in modern machine learning, achieving robustness in Domain Generalization (DG) tasks remains a significant challenge. In DG, models are expected to perform well on samples from unseen test distributions (also called domains), by learning from multiple related training distributions. Most existing approaches to this problem rely on single-valued predictions, which inherently limit their robustness. We argue that set-valued predictors could be leveraged to enhance robustness across unseen domains, while also taking into account that these sets should be as small as possible. 
We introduce a theoretical framework defining successful set prediction in the DG setting, focusing on meeting a predefined performance criterion across as many domains as possible, and provide theoretical insights into the conditions under which such domain generalization is achievable. We further propose a practical optimization method compatible with modern learning architectures, that balances robust performance on unseen domains with small prediction set sizes. We evaluate our approach on several real-world datasets from the WILDS benchmark, demonstrating its potential as a promising direction for robust domain generalization.

\end{abstract}

\section{Introduction}
    \label{sec: intro}
    In recent years, Machine Learning (ML) methods, particularly deep neural networks (DNNs), have achieved remarkable success in various tasks, including language understanding and image recognition. However, despite these advancements, many models struggle in real-world scenarios when the data comes from a different distribution than the data the model
was trained on  \citep{nagarajan2020understanding, miller2020effect, recht2019imagenet}. 
These distributions are referred to as ``domains'', and the challenge of generalizing to domains unseen during training time is known as Out-Of-Distribution (OOD) generalization, or Domain Generalization (DG) \citep{li2018learning, krueger2021out}. DG is especially important in high-stakes fields like medicine, where, for instance, a disease diagnosis system might be trained on data from a specific group of patients, each with records from multiple visits to health centers, or multiple medical scans. Such a system must be effective on new patients for which it has no prior data.
Moreover, beyond merely performing well on average, it is desirable that such a system provide robust performance across all new patients, requiring worst-case performance guarantees across domains rather than average-case performance.

Recent efforts to address this challenge have focused on learning ``stable'' predictors that focus only on domain-invariant features, discarding information that might be domain-specific \citep{peters2016causal,arjovsky2019invariant, heinze2021conditional, NEURIPS2021_8710ef76, NEURIPS2021_118bd558}.
While these methods have made strides in maintaining stability across various domains, achieving this stability remains challenging, and sometimes infeasible \citep{chen2021iterativefeaturematchingprovable, kamath2021does}. Furthermore, stability often comes at the cost of sub-optimal and even poorly-performing predictors, as ``overly stable'' predictors might ignore valuable parts of the data \citep{rosenfeld2021risksinvariantriskminimization, rosenfeld2022domainadjustedregressionorerm, zhao2019learning, NEURIPS2021_cfc5d942}. As an illustration, a disease diagnosis system that consistently achieves only 60\% accuracy, despite being stable across different patient populations, may not be practically viable.


We propose an alternative approach for dealing with the considerable challenge of DG: employing predictors with \emph{set-valued outputs}.
Our set-valued framework for OOD problems recognizes that a key challenge in distribution-shift problems is the absence of a single, universally optimal prediction across all domains, as different domains induce different probabilistic relationships between features $X$ and label $Y$. To handle these variations, our approach provides set-valued outputs that capture a range of potential relationships learned from the training domains. Such an approach is sensible for multi-domain settings, where expecting a single predictor to perform optimally across all domains is often impractical, yet worst-case performance guarantees are still desirable.


A useful set-valued prediction should not only include the correct label with high probability, it should ideally output small prediction sets.
Consider a degenerate predictor that always predicts the entire label space $\mathcal{Y}$. While it would always include the correct label, such a predictor will have no practical value. Therefore, it is essential to aim for a predictor that achieves a desirable performance level, while keeping the prediction set size to a minimum.
This observation represents our learning objective: instead of seeking to maximize performance with a singleton set, we suggest targeting a predefined performance level while minimizing the prediction set size. 

To summarize, our contributions in the paper are as follows:
In \textbf{Section \ref{sec: proposed framework}} we introduce a learning paradigm for DG problems with set-valued predictors, where the goal is to achieve a predefined coverage level across unseen domains.\\
In \textbf{Section \ref{sec: generalization bounds}} we formulate and prove generalization bounds with respect to the above paradigm, giving conditions which guarantee robust performance on unseen domains. \\
In \textbf{Section \ref{sec: practical methods}} we propose practical methods for learning set-valued predictors within the above paradigm.\\
Finally, in \textbf{Section \ref{sec: experiments}} we demonstrate the effectiveness of our proposed methods using real-world datasets from the WILDS benchmark \citep{koh2021wildsbenchmarkinthewilddistribution}.

\section{Multi Domain Set Valued Learning}\label{sec: proposed framework}
    
    \subsection{Problem Setting}

Consider features  $X \in \mathcal{X}$, a label $ Y \in \mathcal{Y}$, and a domain $e \in \mathcal{E}$ coupled with a distribution $P_e$ over $(\mathcal{X}, \mathcal{Y})$.  In this work, we focus on classification tasks, meaning $\mathcal{Y}$ is a known finite set. Let also $D$ be a distribution over the set of domains $\mathcal{E}$.  This setting can be viewed as a meta-learning problem, where the data generation process is hierarchical, involving distributions at both the domain and instance levels. 

We assume that during training we observe a set $\mathcal{E}_{train}$ of $m$ domains  drawn according to $D$. From each domain $e \in \mathcal{E}_{train}$, we observe a sample $S_e$ of $n_e$ instances, sampled according to $P_e$. This brings us to an overall sample $S = \mathop{\bigcup_{e \in \mathcal{E}_{train}}}S_e$ of $N = \sum_{i=1}^m n_e$ instances of $(X_i, Y_i, e_i)$.
At test time, we observe an instance $X$ sampled from some domain $e_{test}$ (sampled according to $D$), which may differ from any of the training domains. We note that while we observe the value of $e$ during training, it is neither required nor utilized at test time.

Throughout this paper we consider set-valued predictors. Such predictors take inputs from $\mathcal{X}$, and output a subset of $\mathcal{Y}$, 
i.e $h: \mathcal{X} \longrightarrow \mathcal{P}(\mathcal{Y})$.
        
    \subsection{Learning Objective}\label{sec: learning objective}

To formalize our learning objective, we first define a per-domain loss function $\mathcal{L}(h,e)$, which measures the performance of a set-valued predictor $h$ within a specific domain $e$. 
For set-valued predictors, the loss function should evaluate the \textit{coverage} of the prediction set. Since set-valued predictors can be decomposed into per-label binary classifiers, it is natural to require that each of these classifiers performs well, especially when worst-case performance is desirable. For instance, in disease diagnosis it is crucial to prioritize per-label recall for each  $y\in \mathcal{Y}$, to avoid missed diagnoses. The following loss function captures this focus on per-label recall:
\[ \mathcal{L}_{recall}(h, e) = \max_{y \in \mathcal{Y}}P_e[y \notin h(X) | Y = y].\]
Balancing this loss with the size of the prediction set ensures that set-valued predictors include each correct label $Y$ when appropriate while refraining from including incorrect ones, ensuring that the prediction sets are both informative and concise. This balance mirrors the trade-off  between recall and precision in single valued predictors. \\
The focus on recall loss for set-valued predictors is also adapted by other works on set-valued predictors, such as \citet{doi:10.1080/01621459.2017.1395341, wang2018learning, BSOPC}.

Returning to the disease prediction example, when developing a model to predict diseases using data from multiple patients each with multiple visits, it is desirable that the model performs robustly on new patients, rather than merely perform well on average.
To obtain this, we begin by defining a performance indicator for a loss function $\mathcal{L}$ and a threshold $\gamma$ as:
$\mathbbm{1}_{\mathcal{L}, \gamma}(h,e) = \mathbbm{1}[\mathcal{L}(h,e) \ge \gamma]$.
Our objective then becomes:
\[ \min_{h \in \mathcal{H}} \ 
\mathop{\mathbbm{E}}_{e \sim D}\mathbbm{1}_{\mathcal{L}, \gamma}(h,e) = 
\min_{h \in \mathcal{H}} \mathop{P}_{e \sim D} [\mathcal{L}(h,e) \ge \gamma].
\]
Achieving a zero objective implies that our predictor delivers a satisfying performance (according to $\mathcal{L}$, $\gamma$) across all domains. For example, with $\mathcal{L}_{recall}$ loss, achieving an objective value of zero would mean our predictor achieves at least $1- \gamma$ recall for all labels in $\mathcal{Y}$, across all domains. In practice, we aim for a low objective value, though not necessarily zero. 
With generalization bounds on
\[ 
\Big| 
\mathop{\mathbbm{E}}_{e \sim D}\mathbbm{1}_{\mathcal{L}, \gamma}(h,e) 
- 
\frac{1}{N}\sum_{e \in \mathcal{E}_{train}}\mathbbm{1}_{\mathcal{L}, \gamma}(h,e)
\Big|,
\]
we can bound the expected fraction of domains where a predictor does not deliver satisfying performance. 
A significant advantage of this approach is that it frames the robustness problem as a mean optimization problem, allowing to leverage established methods from classical ML and adapt them to the multi-domain context. An example using the concept of VC dimension is presented in Section \ref{sec: generalization bounds}.

    \subsection{Related Work}\label{sec: related work}
        Set-valued predictions are well studied in the single-domain framework, often through the lens of conformal prediction - a widely used method for constructing set-valued predictions with a predefined level of coverage guarantee \citep{shafer2008tutorial, vovk2005algorithmic}; see also Section \ref{sec:conformal}. \\
In the single-domain scenario,  \citet{doi:10.1080/01621459.2017.1395341} and \citet{classification_with_confidence} tackle the challenge of aiming at set-valued predictions with a pre-defined level of coverage, while keeping the prediction sets small. They present that challenge as a constrained optimization problem and define its solution using acceptance regions, leveraging Neyman-Pearson Lemma to construct optimal regions. \\
\citet{wang2018learning, wang2023set} tackle the same constrained optimization problem using support vector machines, building another method for set-valued predictions in the single-domain scenario. 

\citet{dunn2018distribution} suggest set-valued predictions for DG tasks, however they focus on optimizing average risk over domains, instead of worst-case type risk as we do in our work. In addition, the work of \citet{dunn2018distribution} does not consider the size of the prediction set, and does not aim at small prediction sets.
\\
\citet{JMLR:v24:23-0712} also use set-valued predictions in OOD setting, continuing the work on support vector machines for optimal set-valued predictions. However, they assume the conditional distribution $P_e[X|Y]$ is constant across domains, and specifically tackle OOD detection of unseen classes, when unlabeled data from the test domain is available. For this setting, they derive an empirical optimization problem similar to the one we derive for our setting, which is presented in section \ref{sec:optimized method}.\\
\citet{BSOPC} tackle an OOD problem with a single train domain and a single test domain, where unlabeled data from the test domain is available, and use conformal prediction to achieve per-label recall guarantee. 

One work that approaches worst-case robustness in DG tasks with a single-valued predictor is that of \citet{eastwood2022probable} which suggests the Quantile Risk Minimization (QRM) method for minimizing the $\alpha$-quantile risk among domains. This method is built upon an assumption that there exists a high-level distribution over the domains, same as we do in our work. 
However, the applicability of this method is constrained by its reliance on single-valued predictions, which, as discussed in Section \ref{sec: intro}, can lead to suboptimal and degraded performance in certain domains.

\section{Generalization Bounds}\label{sec: generalization bounds}

    In the previous section we introduced the concept of a performance indicator $\mathbbm{1}_{\mathcal{L}, \gamma}(h,e)$ measuring whether the predictor $h$ gives satisfactory performance in domain $e$ (w.r.t. $\mathcal{L}, \gamma$), and set forth the objective of minimizing $\mathop{\mathbbm{E}}[\mathbbm{1}_{\mathcal{L}, \gamma}(h,e)]$. We argued that bounding $
| 
\mathop{\mathbbm{E}}_{e \sim D}\mathbbm{1}_{\mathcal{L}, \gamma}(h,e) 
- 
\frac{1}{N}\sum_{e \in \mathcal{E}_{train}}\mathbbm{1}_{\mathcal{L}, \gamma}(h,e)
|$ will allow us to focus on minimizing the empirical average of the performance indicator over training domains.

This concept mirrors the way Probably Approximately Accurate (PAC) bounds provide a foundation for Empirical Risk Minimization (ERM) in classical ML \citep{10.5555/2621980}. We therefore explore how classical machine learning tools can be adapted to the hierarchical structure of domains. In this section, we will formally define the concept of VC-dimension within our hierarchical framework and derive generalization bounds based on this definition. While definitions and generalization bounds adapt naturally from classical ML to our setting, we show that in the multi-domain set-valued context, even basic hypothesis sets like linear hypotheses, which have a finite VC-dimension in standard supervised learning, have infinite VC-dimension when no assumptions on $\mathcal{E}$ are made. Nevertheless, we also demonstrate that under certain assumptions on $\mathcal{E}$, such hypothesis sets retain finite VC-dimension.
    
    \subsection{VC-dim In Meta Learning - Definitions and Generalization Bounds}
        One of the classical results in the context of standard ML considers generalization of binary classifiers with regard to 0-1 loss $\mathcal{L}_{0-1}(h, (x,y))$. The $\mathcal{L}_{0-1}$ loss serves as a measure for whether an hypothesis $h$ performs well on a data point $(x,y)$. The VC-dimension quantifies the ability of an hypothesis set $\mathcal{H}$ to exactly classify  any subset of points from a finite set in $\mathcal{X} \times \mathcal{Y}$. To adapt the idea of VC-dimension, and its implications on generalization, we start by replacing the $\mathcal{L}_{0-1}$ loss with the performance indicator $\mathbbm{1}_{\mathcal{L},\gamma}(h,e)$ as the measure of how well an hypothesis $h$ performs on a domain $e$. Then, we define the VC-dimension such that it  measures the ability of an hypothesis set $\mathcal{H}$ to perform perfectly on any subset of domains from a finite set in $\mathcal{E}$.

\begin{definition}[Shattering] Given a hypotheses set $\mathcal{H}$, a finite set $C \subseteq \mathcal{E}$, a loss function $\mathcal{L}$, and a threshold $\gamma$, we say that $\mathcal{H}$ shatters $C$ with respect to $\mathcal{L}, \gamma$ if for any subset $\hat{C} \subseteq C$ there exists $h \in \mathcal{H}$ such that
    \[e \in \hat{C} \longrightarrow \mathbbm{1}_{\mathcal{L}, \gamma}(h,e) = 1
    \quad 
    e \in C \setminus \hat{C} \longrightarrow \mathbbm{1}_{\mathcal{L}, \gamma}(h,e) = 0.
    \]
\end{definition}
\begin{definition}[VC-Dimension] \label{vcdim}
    The VC-dimension of a hypothesis class $\mathcal{H}$ with respect to $\mathcal{L}, \gamma$ is the maximum size $M$ such that there exists a set $C \subset \mathcal{E}$ with $|C| = M$ that $\mathcal{H}$ can shatter with respect to $\mathcal{L}, \gamma$. We denote it by $VCdim_{\mathcal{L}, \gamma}(\mathcal{H})$.
\end{definition}
\begin{definition}[Uniform Convergence]\label{uniform convergence}
    We say that a hypothesis class $\mathcal{H}$ has the uniform convergence property with respect to loss function $\mathcal{L}$ and threshold $\gamma$ if,  for every $\delta, \epsilon \in (0,1)$, there exists an integer $m(\delta, \epsilon)$ such that for every distribution $D$ over $\mathcal{E}$, if $S \subset \mathcal{E}$ is a sample of $|S| > m(\delta, \epsilon)$ domains drawn i.i.d according to $D$, then,  with probability at least $ 1 - \delta$ (over the sample of $S$) the following holds:
    \[
    \forall h \in \mathcal{H}, 
    \Big|
        \mathop{\mathbbm{E}}_{e \sim D}[\mathbbm{1}_{\mathcal{L}, \gamma}(h,e)] - \frac{1}{|S|} \sum_{e \in S} \mathbbm{1}_{\mathcal{L}, \gamma}(h,e)
    \Big |
    \le \epsilon.
    \]
\end{definition}
\begin{theorem}\label{VC-dim learnability}
    A hypothesis class $\mathcal{H}$ has the uniform convergence property with respect to a loss function $\mathcal{L}$ and threshold $\gamma$ if and only if $VCdim_{\mathcal{L}, \gamma}(\mathcal{H}) < \infty$. Moreover, if $d = VCdim_{\mathcal{L}, \gamma}(\mathcal{H}) < \infty$, the sample size from the uniform-convergence definition is $m(\delta, \epsilon) = \Theta(\frac{d + log(1/\delta)}{\epsilon^2})$.
\end{theorem}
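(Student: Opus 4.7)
The plan is to reduce the theorem to the standard Fundamental Theorem of Statistical Learning by exploiting the fact that the performance indicator $\mathbbm{1}_{\mathcal{L},\gamma}(h,e)$ is already a $\{0,1\}$-valued function of $e$. Concretely, I would associate to each hypothesis $h \in \mathcal{H}$ a binary classifier on the domain space, $f_h : \mathcal{E} \to \{0,1\}$ defined by $f_h(e) = \mathbbm{1}_{\mathcal{L}, \gamma}(h, e)$, and set $\mathcal{F}_{\mathcal{H}} = \{f_h : h \in \mathcal{H}\}$. Under this identification the ``data point'' is a domain $e$ sampled from $D$, and the empirical and population quantities appearing in Definition~\ref{uniform convergence} become exactly the empirical and population $0$--$1$ risks of $f_h$ against the constant labeling $y \equiv 0$.

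Next I would check that the notion of shattering in Definition~\ref{vcdim} coincides with classical VC shattering of $\mathcal{F}_{\mathcal{H}}$ on subsets of $\mathcal{E}$. Given a finite $C \subseteq \mathcal{E}$ and a subset $\hat{C} \subseteq C$, requiring $h \in \mathcal{H}$ with $\mathbbm{1}_{\mathcal{L},\gamma}(h,\cdot) = 1$ on $\hat{C}$ and $=0$ on $C \setminus \hat{C}$ is precisely requiring $f_h$ to realize the indicator pattern of $\hat{C}$ on $C$. Therefore $VCdim_{\mathcal{L},\gamma}(\mathcal{H}) = VCdim(\mathcal{F}_{\mathcal{H}})$ in the classical sense, and the two definitions of shattering agree set-for-set.

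With these two observations in hand, Theorem~\ref{VC-dim learnability} is an immediate instance of the classical Fundamental Theorem of Statistical Learning for binary hypothesis classes (e.g., Theorem~6.7 in Shalev-Shwartz and Ben-David, 2014): $\mathcal{F}_{\mathcal{H}}$ enjoys the uniform convergence property with respect to the $0$--$1$ loss and any distribution on $\mathcal{E}$ if and only if $VCdim(\mathcal{F}_{\mathcal{H}}) < \infty$, and in that case the required sample size is $m(\delta,\epsilon) = \Theta\bigl((d + \log(1/\delta))/\epsilon^2\bigr)$, where $d = VCdim(\mathcal{F}_{\mathcal{H}})$. Translating back through the identification yields exactly the statement of the theorem in both directions.

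The main thing to be cautious about is the cleanliness of this reduction rather than any genuine analytic difficulty: because $\mathbbm{1}_{\mathcal{L},\gamma}$ is binary-valued, one does not encounter the sharper concentration issues that arise with unbounded or continuous losses, and the usual symmetrization and Sauer--Shelah ingredients underlying the classical bound carry over verbatim. The only mild point worth stating explicitly is a measurability assumption on $e \mapsto \mathcal{L}(h,e)$ so that the probabilities in Definition~\ref{uniform convergence} are well-defined; under this standing assumption, no additional work beyond the above identification is needed.
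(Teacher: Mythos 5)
Your reduction to the derived binary class $\mathcal{F}_{\mathcal{H}} = \{e \mapsto \mathbbm{1}_{\mathcal{L},\gamma}(h,e) : h \in \mathcal{H}\}$ on $\mathcal{E}$ is exactly the device the paper uses, and it correctly delivers the direction ``$VCdim_{\mathcal{L},\gamma}(\mathcal{H}) < \infty$ implies uniform convergence'' together with the $\Theta\bigl((d+\log(1/\delta))/\epsilon^2\bigr)$ rate: the paper likewise lifts $D$ to a distribution $D^*$ on $\mathcal{E} \times \{0,1\}$ concentrated on the label $0$, observes that $\mathcal{L}_{0-1}\bigl(\mathbbm{1}_{\mathcal{L},\gamma}(h,e), 0\bigr) = \mathbbm{1}_{\mathcal{L},\gamma}(h,e)$, and invokes the classical bound. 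Your identification of the two notions of shattering is also correct.

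The gap is in the converse. Definition \ref{uniform convergence} quantifies only over distributions on $\mathcal{E}$, which under your identification corresponds to classical uniform convergence \emph{restricted} to distributions on $\mathcal{E} \times \{0,1\}$ whose label is identically $0$. The Fundamental Theorem as you cite it says that infinite VC dimension defeats uniform convergence for \emph{some} distribution over labeled pairs; the witness distribution in the standard proof is built from nontrivial labelings of a shattered set and need not be of the restricted label-$0$ form, so ``translating back'' does not immediately produce a violation of Definition \ref{uniform convergence}. Uniform convergence over a strictly smaller family of distributions is a weaker property, and its failure does not follow from the cited theorem. To close this you either need the Vapnik--Chervonenkis / uniform Glivenko--Cantelli theorem for classes of \emph{sets} (no labels), or a direct construction. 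The paper does the latter: it takes a shattered set $C$ of size $2m(\epsilon,\delta)$, puts the uniform distribution on $C$, and for any sample $S$ of size $m$ uses shattering to produce $h_S$ whose performance indicator averages $0$ on $S$ but has population mean $1/2$, contradicting uniform convergence with $\epsilon < 1/2$. This works precisely because the fluctuation comes from the hypothesis class shattering the domains rather than from adversarial labels; that is the one extra idea your write-up elides.
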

Theorem \ref{VC-dim learnability} is proved in appendix \ref{appendix: VC-dim learnability}, and provides a way to derive generalization bounds for the hierarchical setting by demonstrating a finite VC-dimension for a hypothesis class $\mathcal{H}$, analogous to how the VC-dimension is used in classical machine learning. While Theorem 3.4 closely mirrors classical VC-dimension theory, its proof requires special care due to two key differences from the standard setting. First, instead of using the 0-1 loss on individual data points, we work with a performance indicator \( \mathbf{1}_{L,\gamma}(h, e) \) that measures whether a hypothesis achieves satisfactory performance on an entire domain \( e \). Second, classical results apply to distributions over input-label pairs \( \mathcal{X} \times \{0,1\} \), whereas our setting involves distributions over domains \( \mathcal{E} \), which are more complicated objects. These differences necessitate a careful adaptation of the uniform convergence argument, including a mapping to a derived hypothesis class over \( \mathcal{E} \times \{0,1\} \) to apply classical VC theory. Further details are outlined in appendix \ref{appendix: VC-dim learnability}.

Proving a finite VC-dimension for certain hypothesis sets in the hierarchical context with the $\mathcal{L}_{recall}$ loss requires assumptions on the structure of $\mathcal{E}$, as we show in sections \ref{sec: failure to shatter} and \ref{sec: domain restrictions}. This is unlike standard results in the classic framework, where assumptions on the inputs are usually not required. Informally, this difference can be seen as stemming from the fact that the space of all domains is of infinite dimension, while in the classical framework inputs are typically considered to belong to finite-dimensional space. In accordance with that, the restrictions on $\mathcal{E}$ that we explore in section \ref{sec: domain restrictions} limit the domains to a finite-dimensional subspace.

In this section, we focus on generalization from training domains to new, unseen domains. We do not address generalization within individual domains, which can be viewed as effectively assuming an infinite sample size for each domain. In appendix \ref{appendix sec: sample complexity}, we provide a comprehensive analysis of both the number of domains required for effective generalization and the sample size needed within each domain.
        
    \subsection{Failure To shatter By Linear Hypotheses Set}\label{sec: failure to shatter}
        In the following sections, we will consider hypothesis sets of set-valued predictors. These predictors can be defined using $|\mathcal{Y}|$ binary classifiers $h_y: \mathcal{X} \longrightarrow \{0,1\}$ for ${y \in \mathcal{Y}}$ in the following manner:
$h(X) = \{ y \in \mathcal{Y} \ : \ h_y(X) = 1 \}$. If each  $h_y$ adheres to a particular structure (e.g., $h_y$ is a linear classifier), we say that $h$ possesses this structure. 


We demonstrate the divergence between classic supervised learning and multi-domain set-valued learning using the example of linear hypotheses sets.
In the classical sense of shattering, a linear hypotheses set cannot shatter more than $d+1$ points \citep{10.5555/2621980}, thus having a finite VC dimension. However, in the multi-domain set-valued setting, without imposing structural assumptions on the domains in $\mathcal{E}$, the finiteness of the VC dimension depends on the dimensionality $d$. 
\begin{theorem}\label{linear shattering 1d}
        For $d = 1$, linear $\mathcal{H}$ cannot shatter more than $2|\mathcal{Y}|$ domains with respect to $\mathcal{L}_{recall}$ and any $0 < \gamma < 1$.
\end{theorem}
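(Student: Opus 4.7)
The plan is to focus on a consequence of shattering that is particularly easy to control in the 1D linear set-valued setting: the achievability of singleton patterns. I will show that at most $2|\mathcal{Y}|$ singleton patterns can be realized by hypotheses in $\mathcal{H}$, which, together with the fact that shattering requires every singleton to be realizable, yields the bound by pigeonhole.

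First, I would decompose any $h \in \mathcal{H}$ into its per-label binary sub-classifiers $h_y$ for $y \in \mathcal{Y}$; in dimension $d = 1$, each has acceptance region $A_y \subseteq \mathbb{R}$ that is either a right ray $[t_y, \infty)$ or a left ray $(-\infty, t_y]$ (treating $\emptyset$ and $\mathbb{R}$ as degenerate limits). For each label $y$, define the ``bad set'' $B_y(h) = \{e \in \mathcal{E} : P_e[X \notin A_y \mid Y = y] \ge \gamma\}$. Since $\mathcal{L}_{recall}(h,e) \ge \gamma$ iff at least one label has per-label recall below $1 - \gamma$, one obtains $\mathbbm{1}_{\mathcal{L}_{recall}, \gamma}(h,e) = 1$ iff $e \in \bigcup_{y} B_y(h)$. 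Consequently, if $\mathcal{H}$ shatters $C = \{e_1, \ldots, e_M\}$, then for every $i$ there must exist $h \in \mathcal{H}$ with $\bigcup_y B_y(h) \cap C = \{e_i\}$, which in particular forces at least one sub-classifier $h_y$ to satisfy $B_y(h_y) \cap C = \{e_i\}$.

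The core combinatorial observation is that, for each label $y$ and each fixed orientation, the family of achievable sets $B_y(h_y) \cap C$ as the threshold $t_y$ varies traces out a monotone chain in $2^C$. For the right-ray orientation $A_y = [t, \infty)$, as $t$ increases the set $B_y(h_y)$ grows by incorporating domains in the order of their conditional $\gamma$-quantiles of $P_e[X \mid Y = y]$; hence the chain contains $\{e_i\}$ only if $e_i$ is the strict minimizer of that quantile within $C$. The left-ray orientation $A_y = (-\infty, t]$ produces an analogous chain (ordered by a complementary quantile), again containing at most one singleton. Thus each of the $2|\mathcal{Y}|$ pairs in $\mathcal{Y} \times \{+, -\}$ realizes the singleton $\{e_i\}$ for at most one index $i$, so in total at most $2|\mathcal{Y}|$ indices $i$ are ``realizable'' as singleton patterns.

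Combining these, if $M > 2|\mathcal{Y}|$ then by pigeonhole some $e_i$ is not the first element in any of the $2|\mathcal{Y}|$ quantile orderings, hence its singleton cannot be realized as $\bigcup_y B_y(h) \cap C$, contradicting the shattering assumption; therefore $M \le 2|\mathcal{Y}|$. The main subtlety I anticipate is the handling of ties and atoms in the conditional CDFs $F_{e, y}(t) = P_e[X \le t \mid Y = y]$, where quantile definitions require care and multiple domains may share a threshold; such degenerate cases only remove further singletons from the achievable set, and therefore preserve (or strengthen) the $2|\mathcal{Y}|$ upper bound.
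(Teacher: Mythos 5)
Your proof is correct and follows essentially the same route as the paper's: you decompose $h$ into per-label ray classifiers, order the domains by the conditional $\gamma$- (resp.\ $1-\gamma$-) quantiles of $P_e[X \mid Y=y]$ for each orientation, and observe that only the extremal domain in each of the $2|\mathcal{Y}|$ orderings can be the unique ``failing'' domain --- these extremizers are exactly the paper's $i_{\min,y}$ and $i_{\max,y}$. Your monotone-chain/pigeonhole packaging, and the explicit remark that ties only shrink the set of realizable singletons, is a slightly cleaner presentation of the same argument, but the key lemma is identical.
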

According to Definition \ref{vcdim} this means that for linear $\mathcal{H}$ and $d=1$, we have $VCdim_{\mathcal{L}_{recall}, \gamma}(\mathcal{H}) \le 2|\mathcal{Y}| $. \\
Proof for theorem \ref{linear shattering 1d} is given in appendix \ref{appendix: linear shattering 1d}. 
 \begin{theorem}\label{linear shattering 2d}
         For $d>1$, and any size $m$, there exists a set of $m$ domains that can be shattered by linear hypotheses $\mathcal{H}$ with respect to $\mathcal{L}_{recall}$ and any $0 < \gamma < 1$.
\end{theorem}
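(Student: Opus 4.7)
My plan is to exhibit, for any $m$, an explicit set of $m$ domains in $\mathbb{R}^2$ shattered by the linear set-valued hypothesis class; higher dimensions then follow by embedding. I would use ``single-label'' domains with $P_{e_i}[Y=y_0]=1$ for a fixed $y_0\in\mathcal{Y}$, and set the remaining classifiers $h_y$ (for $y\neq y_0$) to the linear constant-$1$ classifier. Then $\mathcal{L}_{recall}(h,e_i)$ collapses to $\mu_i(\{h_{y_0}=0\})$, where $\mu_i$ is the conditional distribution of $X$ given $Y=y_0$ in $e_i$, so the shattering problem reduces to controlling a single half-plane against $m$ chosen distributions $\mu_1,\ldots,\mu_m$.

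I would take $\mu_i$ to live on the unit circle in $\mathbb{R}^2$ with density $\rho_i(\theta)=\frac{1}{2\pi}(1+a\sin(k_i\theta))$ for some $a\in(0,1)$ and integer frequencies $k_i$ chosen below. Parameterizing half-planes as $H_{\phi,b}=\{x:x\cdot(\cos\phi,\sin\phi)\ge b\}$ and fixing the offset $b^{*}=\cos((1-\gamma)\pi)$, so that the arc $H_{\phi,b^{*}}\cap S^1$ has angular length $2(1-\gamma)\pi$, a direct Fourier computation gives
\[
    \mu_i(H_{\phi,b^{*}}) \;=\; (1-\gamma) \;+\; \frac{a\,\sin(k_i(1-\gamma)\pi)}{\pi k_i}\,\sin(k_i\phi).
\]
Hence domain $i$ is good iff $\varepsilon_i\sin(k_i\phi)>0$, where $\varepsilon_i=\operatorname{sign}(\sin(k_i(1-\gamma)\pi))$. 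Choosing $k_i=2^i$ yields $\varepsilon_i\in\{-1,+1\}$ for all but finitely many $\gamma$, and the remaining exceptional $\gamma$ can be handled by replacing any offending $k_i$ with a nearby integer satisfying $k_i(1-\gamma)\notin\mathbb{Z}$.

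For any $\hat C\subseteq[m]$ I then need a single angle $\phi^{*}$ with $\varepsilon_i\sin(k_i\phi^{*})>0$ iff $i\notin\hat C$; taking $h_{y_0}$ to be the linear classifier with positive half-space $H_{\phi^{*},b^{*}}$ completes the construction. The crux is therefore the purely combinatorial claim that the curve $\phi\mapsto(\sin(2\phi),\sin(4\phi),\ldots,\sin(2^m\phi))$ visits every open orthant of $\mathbb{R}^m$ as $\phi$ ranges over $(0,\pi)$. I would prove this by induction on $m$: assuming the first $m$ coordinates partition $(0,\pi)$ into $2^m$ open intervals of length $\pi/2^m$ each realizing a distinct sign pattern, the new coordinate $\sin(2^{m+1}\phi)$ has period exactly $\pi/2^m$, so it takes both signs on every such interval and splits each into two, realizing all $2^{m+1}$ sign patterns. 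This inductive/combinatorial step is the main obstacle; the Fourier integral and the reduction to single-label domains are routine.
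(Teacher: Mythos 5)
Your construction takes a genuinely different route from the paper's. Both arguments reduce to $d=2$ and put the feature distributions on the unit circle, but the paper then passes to arc classifiers of length $\pi$ (half-planes through the origin) and builds the $m$ densities by an explicit combinatorial ``sliding window'' over $2^m$ sub-intervals indexed by the subsets of $[m]$, whereas you fix a common offset $b^{*}$, use smooth trigonometric densities, and reduce shattering to the claim that $\phi\mapsto(\sin(2\phi),\dots,\sin(2^m\phi))$ realizes every sign pattern. Your Fourier identity checks out, the doubling induction for the orthant claim is correct (each cell of the previous partition is exactly one full period of the new coordinate, starting at one of its zeros), and working with strict inequalities at interior witness angles cleanly avoids the boundary case of coverage exactly equal to $1-\gamma$, which the paper's construction actually lands on. The single-label reduction needs a cosmetic repair, since $P_e[\,\cdot\mid Y=y]$ is undefined for labels of probability zero; the standard fix, used by the paper itself, is to give every label positive mass with the same conditional law $\mu_i$ and take $h_y=h_{y_0}$ for all $y$, which yields the same collapse $\mathcal{L}_{recall}(h,e_i)=\mu_i(\{h_{y_0}=0\})$.

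The one genuine gap is the handling of the exceptional $\gamma$. If $1-\gamma=p/2^{i_0}$ is a dyadic rational with $i_0\le m$, then $\sin(2^{i}(1-\gamma)\pi)=0$ for \emph{every} $i\ge i_0$, so it is not ``any offending $k_i$'' that must be replaced but all of $k_{i_0},\dots,k_m$ at once; after replacing them by nearby integers the frequencies are no longer in ratio $2$, the statement ``the new coordinate has period exactly the length of each cell'' no longer applies, and the orthant claim is unproven for the new family. Perturbing the arc half-width $c$ away from $(1-\gamma)\pi$ does not rescue it either: near such a point the surviving amplitude $a|\sin(2^ic)|/(\pi 2^i)\approx a|c-(1-\gamma)\pi|/\pi$ is strictly smaller than the constant offset $|c/\pi-(1-\gamma)|$ it introduces, since $a<1$, so the oscillating term cannot dominate. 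The clean repair is to observe that your induction works verbatim for frequencies $k_i=r^{i}$ for any integer $r\ge 2$ (each cell of the previous partition then contains exactly $r\ge 2$ consecutive half-periods of the new coordinate, hence both signs), and that $1-\gamma\in(0,1)$ cannot simultaneously be of the form $p/2^{i}$ and $q/3^{j}$; so take $k_i=2^{i}$ for non-dyadic $1-\gamma$ and $k_i=3^{i}$ otherwise. With that substitution the argument is complete and covers all $0<\gamma<1$.
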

According to Definition \ref{vcdim} this means that for linear $\mathcal{H}$ and $d>1$, we have $VCdim_{\mathcal{L}_{recall}, \gamma}(\mathcal{H}) = \infty$. \\
Proof for theorem \ref{linear shattering 2d} is given in appendix \ref{appendix: linear shattering d>1}. 
    
    \subsection{Imposing Restrictions on $\mathcal{E}$}\label{sec: domain restrictions}
        The previous results may seem discouraging, as an infinite VC-dimension suggests that generalization from observed to new domains via uniform convergence is impossible. However, when real-world data is collected from various environments, it is reasonable to assume that the Data Generation Processes (DGP) of these environments share some common structure. Thus, we examine cases where restrictions are placed on $\mathcal{E}$. Following \citet{NEURIPS2021_118bd558, rosenfeld2022domainadjustedregressionorerm} we examine the assumption that all domains in $\mathcal{E}$ are conditionally Gaussian:
\[ 
\forall e \in \mathcal{E} \ \ 
\forall y \in \mathcal{Y} \  \ 
[X |Y=y] \mathop{\sim}^{P_e} N(\mu_{e,y}, \Sigma_{e,y}).
\]
While the previous subsection shows negative results for learnability when there are no restrictions on the domains, the following result suggests that with some restrictions as mentioned above, learnability is possible with sufficient training domains.

 \begin{theorem}\label{linear learnability}
     If domains are restricted to being Conditionally Gaussian, and there exist $ \forall_{y \in \mathcal{Y}} \ \Sigma_y \in \mathbb{R}^{d \times d}$ such that for each $e,y$ $\Sigma_{e,y} = \sigma_{e,y}\Sigma_y$ for some $\sigma_{e,y} \in \mathbb{R}$, then for each $\epsilon, \delta > 0$ there exist $m = \Theta(\frac{|\mathcal{Y}|^2(d + log(|\mathcal{Y}|/\delta))}{\epsilon^2})$ such that if $|\mathcal{E}_{train}| \ge m$ then with probability higher than $1 - \delta$ over the training domains sample $\mathcal{E}_{train}$, and regardless of the distribution $D$ over $\mathcal{E}$ , 
     for any linear hypothesis $h$
     \[ \forall_{e \in \mathcal{E}_{train}} \ \mathbbm{1}_{\mathcal{L}_{recall}, \gamma}(h,e) = 0 
     \longrightarrow
     \mathop{\mathbbm{E}}_{e \sim D}[\mathbbm{1}_{\mathcal{L}_{recall}, \gamma}(h,e)] \le \epsilon.
     \]
\end{theorem}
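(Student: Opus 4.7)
The plan is to reduce the multi-label recall analysis to a per-label uniform convergence argument and recombine via union bounds. A linear set-valued hypothesis $h$ decomposes into $|\mathcal{Y}|$ linear binary classifiers $h_y(X) = \mathbbm{1}[w_y^\top X + b_y \ge 0]$, so the recall performance indicator factors as $\mathbbm{1}_{\mathcal{L}_{recall},\gamma}(h,e) = \bigvee_{y \in \mathcal{Y}} f_y(h,e)$, where $f_y(h,e) := \mathbbm{1}[P_e[h_y(X)=0 \mid Y=y] \ge \gamma]$. It therefore suffices to bound the VC-dimension of each per-label class $\mathcal{G}_y = \{e \mapsto f_y(h,e) : h \text{ linear}\}$ and stitch the pieces back together via union bounds.

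The core computation is to rewrite $f_y(h,e)$ explicitly. Under $\Sigma_{e,y} = \sigma_{e,y}\Sigma_y$, we have $w_y^\top X \mid Y=y \sim N(w_y^\top \mu_{e,y},\ \sigma_{e,y}\, w_y^\top \Sigma_y w_y)$, so writing $c_\gamma = \Phi^{-1}(\gamma)$ and $\|w_y\|_{\Sigma_y} = \sqrt{w_y^\top \Sigma_y w_y}$, the condition $f_y(h,e) = 1$ is equivalent to
\[ w_y^\top \mu_{e,y} + b_y + c_\gamma \sqrt{\sigma_{e,y}}\, \|w_y\|_{\Sigma_y} \le 0. \]
Encoding the domain (at label $y$) by the augmented vector $\psi_{e,y} := (\mu_{e,y}, \sqrt{\sigma_{e,y}}) \in \mathbb{R}^{d+1}$, this is an affine halfspace in $\psi_{e,y}$ with coefficients $(w_y, c_\gamma\|w_y\|_{\Sigma_y})$ and bias $b_y$. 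Hence $\mathcal{G}_y$ embeds into the class of affine halfspace classifiers on $\mathbb{R}^{d+1}$, so $VCdim(\mathcal{G}_y) \le d+2$.

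I expect this reparametrization to be the main obstacle. One needs to recognize that the Gaussian tail inequality $\Phi(\cdot)\ge\gamma$, viewed as a constraint on the domain, can be collapsed into a linear threshold in a finite-dimensional parameter, and the collapse uses the shared-direction restriction $\Sigma_{e,y} = \sigma_{e,y}\Sigma_y$ in an essential way: it reduces the covariance dependence to a single scalar per domain. Without that restriction, the per-label indicator would depend on the full matrix $\Sigma_{e,y}$, the induced function class would no longer sit inside a finite-VC family, and the argument would fail, consistent with the negative shattering results of Section~\ref{sec: failure to shatter}.

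Given $VCdim(\mathcal{G}_y) \le d+2$ for each $y$, the remainder is mechanical. Apply Theorem~\ref{VC-dim learnability} to each $\mathcal{G}_y$ (viewed as a hypothesis class on $\mathcal{E}$ with the identity loss) with parameters $\epsilon' = \epsilon/|\mathcal{Y}|$ and $\delta' = \delta/|\mathcal{Y}|$; each application requires
\[ m = \Theta\!\left(\frac{(d+2) + \log(|\mathcal{Y}|/\delta)}{(\epsilon/|\mathcal{Y}|)^2}\right) = \Theta\!\left(\frac{|\mathcal{Y}|^2 (d + \log(|\mathcal{Y}|/\delta))}{\epsilon^2}\right) \]
training domains. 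A union bound over $y \in \mathcal{Y}$ yields, with probability at least $1-\delta$,
\[ \Big| \mathop{\mathbbm{E}}_{e\sim D}[f_y(h,e)] - \tfrac{1}{m}\sum_{e \in \mathcal{E}_{train}} f_y(h,e) \Big| \le \epsilon/|\mathcal{Y}| \]
simultaneously for every linear $h$ and every $y$. On the event $\{\forall e \in \mathcal{E}_{train}:\ \mathbbm{1}_{\mathcal{L}_{recall},\gamma}(h,e) = 0\}$, each empirical mean of $f_y(h,\cdot)$ vanishes, so $\mathop{\mathbbm{E}}_{e\sim D}[f_y(h,e)] \le \epsilon/|\mathcal{Y}|$, and a final union bound over the OR gives $\mathop{\mathbbm{E}}_{e\sim D}[\mathbbm{1}_{\mathcal{L}_{recall},\gamma}(h,e)] \le \sum_y \mathop{\mathbbm{E}}_{e\sim D}[f_y(h,e)] \le \epsilon$, as required.
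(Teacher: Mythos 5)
Your proposal is correct and follows essentially the same route as the paper's proof: the key step in both is collapsing the Gaussian recall condition into a halfspace condition on the augmented vector $(\mu_{e,y},\sqrt{\sigma_{e,y}})\in\mathbb{R}^{d+1}$ (the paper's $\nu_i$), bounding the per-label VC-dimension by $O(d)$, and then combining labels via Theorem \ref{VC-dim learnability} with $\epsilon/|\mathcal{Y}|$, $\delta/|\mathcal{Y}|$ and union bounds. The only cosmetic difference is that you invoke the standard VC bound for affine halfspaces in $\mathbb{R}^{d+1}$ whereas the paper re-derives the shattering impossibility directly via a linear-dependence (Radon-type) argument; both yield the same $\Theta$ rate.
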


The proof is provided in Appendix \ref{appendix sec: linear learnability}, building on Theorem \ref{VC-dim learnability}.

Theorem \ref{linear learnability} provides a rigorous justification for seeking a linear predictor $h \in \mathcal{H}$ that achieves $\mathcal{L}_{recall}(h,e) < \gamma$ for each $e \in \mathcal{E}_{train}$. 
In the next section, we will introduce practical methods to find such predictors $h$, while also considering the efficiency of the set sizes generated by $h$.

While the result of \ref{linear learnability} is strictly valid for linear predictors, we experimentally evaluate its applicability to neural networks.\\
In addition, a  limitation of Theorem \ref{linear learnability} lies in its assumption that for each label  $y \in \mathcal{Y}$, the Gaussian DGPs across all domains share the same covariance structure, up to a scaling factor $\sigma_{e,y}$.
In Appendix \ref{apendix sec: more synthetic exp}, we empirically demonstrate that generalization is achievable even without this assumption. Specifically, we report the results of experiments on synthetic data with Gaussian DGPs that feature separate random covariance matrix for each domain.
Additionally, in Section \ref{sec: wilds exp} we empirically evaluate the generalization capability of our proposed method (introduced in the next section) to new domains with complex DGPs using real-world datasets.

\section{Practical Optimization Methods} \label{sec: practical methods}

    In previous sections we discussed the necessity of finding a set-predictor $h$ that consistently delivers satisfactory performance across all training domains, as measured by a loss function $\mathcal{L}(h,e)$, while also minimizing the size of the prediction set. In this section, we introduce our method, \textbf{SET Coverage Optimized with Empirical Robustness (SET-COVER)}, which addresses this dual objective. In addition, we present a series of baseline predictors that leverage conformal prediction for DG.

To ground our discussion, we will focus on the per-label recall metric as our performance criterion, continuing the earlier discussion on disease diagnosis system. However, the methods we develop can be naturally extended to other performance metrics. Throughout the next sections we will call this metric ``min-recall'', since achieving a required recall level for all labels simultaneously implies that the minimum recall across labels meets the required threshold. 

We begin by introducing some key notations.
Given a training dataset $S = \mathop{\bigcup}_{e \in \mathcal{E}_{train}} S_e$, we define: $G_{e,y} = \{ i \in S_e: \ Y_i = y\}$, and $G_{y} = \{ i \in S : \ Y_i = y\} = \bigcup_{e \in \mathcal{E}_{train}}G_{e,y}$.




    \subsection{SET-COVER: Optimized Set Prediction}\label{sec:optimized method}
        Our goal is to minimize prediction set size while maintaining the required min-recall level. We formalize the following constrained optimization problem:
\begin{align*}
&\min_{h \in \mathcal{H}}  \frac{1}{N} \sum_{i \in S}|h(x_i)|
\quad \\
&\text{s.t.} \,
\frac{1}{|G_{e,y}|}\sum_{i \in G_{e,y}}\mathbbm{1}[y \notin h(X_i)] \le \gamma,  \, \forall e \in \mathcal{E}_{train} \ \forall y \in \mathcal{Y}.
\end{align*}
Since $|h(X)| = \sum_{y \in \mathcal{Y}}h_y(X)$, this problem can be solved for each label $y$ individually:
\begin{align}\label{first_prob}
    & \min_{h \in \mathcal{H}} \ \frac{1}{N}\sum_{i \in S}h_y(X_i)  \nonumber \\ 
    & \text{s.t} \quad
      \frac{1}{|G_{e,y}|}\sum_{i \in G_{e,y}}\mathbbm{1}[h_y(X_i)=0] \le \gamma \quad \forall e \in \mathcal{E}_{train}.
\end{align}
Problem \ref{first_prob} is non-smooth and becomes computationally intractable for high dimensions and large datasets. To leverage standard gradient-based optimization methods, we introduce a relaxation of the optimization problem along with a parameterization of the predictor. 
Suppose the hypothesis class $\mathcal{H}$ is parameterized by $\theta \in \mathbb{R}^q$ (where $q$ may differ from the feature dimension $d$), such that
$h_y(X) = 1$ if and only if $h^{\theta}_y(X) \ge 0$.
This leads us to the following relaxed optimization problem:
\begin{align}\label{final_prob}
    & \min_{\theta} \sum_{i \in S} \max\{0, 1 + h^{\theta}_y(X_i)\} \nonumber \\ 
    & \text{s.t} \, \,
     \forall e \in E \, \frac{1}{|G_{e,y}|}\sum_{i \in G_{e,y}}\max\{0, 1 - h^{\theta}_y(X_i)\} \le \gamma.
\end{align}
In appendix \ref{appendix sec: optimization problem} we provide detailed derivations for this relaxation and show that its objective is a surrogate for the objective of problem \eqref{first_prob}, and that its constraints are also surrogate for those in the original formulation.
We note that \citet{JMLR:v24:23-0712} derive a similar optimization problem for a setting of OOD class detection with two domains.

We solve problem \ref{final_prob} by introducing Lagrange multipliers $C \in \mathbb{R}^{m \times |\mathcal{Y}|}$ and define the Lagrangian:
\begin{align*}
\nonumber
&L_y(\theta, C) =  \sum_{i \in S} \max\{0, 1 + h^{\theta}_y(X_i)\} + \\
& \sum_{e \in E_{train}}C_{e,y}\left(\frac{1}{|G_{e,y}|}\sum_{i \in G_{e,y}}\max\{0, 1 - h^{\theta}_y(X_i)\} - \gamma\right).
\end{align*}
Gradient descent can then be performed on $\theta$, combined with a gradient ascent on C. 

\textbf{To optimize $\theta$}, we
focus on the relevant part of the Lagrangian: 
\begin{equation}
\begin{aligned}
\nonumber
L_y(\theta, C) = 
& \sum_{i \in S} \max\{0, 1 + h^{\theta}_y(X_i)\} + \\
& \sum_{e \in E_{train}}C_{e,y}\frac{1}{|G_{e,y}|}\sum_{i \in G_{e,y}}\max\{0, 1 - h^{\theta}_y(X_i)\}.
\end{aligned}
\end{equation}
This can be simplified further by absorbing the $\frac{1}{|G_{e,y}|}$ factor into $C_{e,y}$, resulting in the following loss function for optimizing $\theta$:
\begin{equation}
\begin{aligned}
\nonumber
L_y(\theta, C) = 
& \sum_{i \in S} \max\{0, 1 + h^{\theta}_y(X_i)\} + \\
& \sum_{e \in E_{train}}\mathbbm{1}_{i \in G_{e,y}}C_{e,y} \max\{0, 1 - h^{\theta}_y(X_i)\}.
\end{aligned}
\end{equation}

For wrong classification, the above loss will punish the predictor $h$. For correct classification, the second term will not punish it, but the first term will do (specifically for true positives). We found during experimentation that changing the first term to punish only wrong classifications helps stabilizing the learning process (Further details are given in Appendix  \ref{apendix sec: loss comparison}). Therefore we further adjust the loss to be 
\begin{equation}
\begin{aligned}
\nonumber
L_y(\theta, C) = 
& \sum_{i \not \in G_y} \max\{0, 1 + h^{\theta}_y(X_i)\} + \\
& \sum_{e \in E_{train}}\mathbbm{1}_{i \in G_{e,y}}C_{e,y}\max\{0, 1 - h^{\theta}_y(X_i)\}.
\end{aligned}
\end{equation}
\textbf{To optimize $C$}, we suggest  updating it once every few batches (the frequency is treated as a training hyper-parameter) and at the end of each epoch. The updates are aimed to increase $C_{e,y}$ when the recall of $h^{\theta}_y$ in domain $e$ and label $y$ is lower than $1-\gamma$, and decrease $C_{e,y}$ otherwise. The implementation details of these updates are outlined in Algorithm \ref{alg} (in Appendix \ref{appendix sec: set-cover-algo}).

We call this approach SET Coverage Optimized with Empirical Robustness (SET-COVER) and outline it in Algorithm \ref{alg} (in Appendix \ref{appendix sec: set-cover-algo}). In Section \ref{sec: experiments} we demonstrate the effectiveness of this algorithm through experiments on synthetic and real-world data.

    \subsection{Conformal Prediction Baselines}\label{sec:conformal}
        To evaluate the performance of SET-COVER, we present several set-prediction baselines that extend conformal prediction to DG and compare SET-COVER to them. Conformal prediction models have gained popularity in recent years for providing well-grounded prediction sets with strong marginal coverage guarantees \citep{gibbs2021adaptive, romano2020classification, tibshirani2019conformal}. The central idea behind conformal prediction for classification tasks is to train a base model and use its predicted logits as ``conformal scores'' -- a measure of how well input features $X$ relate to each of the labels in $\mathcal{Y}$. Then, a calibration process produces thresholds for these conformal scores, so that prediction sets that include labels for which the conformal scores fall within these thresholds guarantee a certain level of marginal coverage \citep{shafer2008tutorial, vovk2005algorithmic}. 
Conformal predictors have been widely used for providing set-valued predictions in single-domain settings, offering marginal coverage guarantees \citep{lei2013distribution, fontana2023conformal}.

    \subsubsection{Robust Conformal for Domain Generalization}\label{sec: robust conformal}
        We suggest an adaptation of conformal prediction to provide robust performance across domains, calling it \textbf{Robust Conformal Prediction}:
\begin{itemize}
    \item We first train a single classifier on the pooled data from all training domains and use its output logits as conformal scores for each label separately. We denote this as $f(x) = \big< f(x)_1, ... , f(x)_{\mathcal{|Y|}} \big>$. 
    \item For each training domain separately, we calibrate thresholds so that conformal scores within these thresholds provide the desired coverage in that specific domain. 
    To achieve $1-\gamma$ recall for each label in $\mathcal{Y}$ and for each training domain, we define the following set of thresholds:
    \[
    t_{e,y} = \inf \{\alpha \ : \ \frac{1}{|G_{e,y}|}\sum_{i \in G_{e,y}}\mathbbm{1}[f(X_i)_y > \alpha ] \ge 1-\gamma\}.
    \]
    \item At prediction time, a label is added to the prediction set if its conformal score exceeds the threshold in any of the training domains:\\
    $
    h_y(X_i) = \max_{e \in \mathcal{E}_{train}}\mathbbm{1}[f(X_i)_y > t_{e,y}].
    $
\end{itemize}
    
    \subsubsection{Pooling CDFs}\label{sec: cdf pooling}
        \citet{dunn2018distribution} proposed \textbf{Pooling-CDFs} for regression tasks, a DG approach based on conformal prediction. Unlike robust conformal methods, Pooling-CDFs aims to achieve average coverage across domains. In Pooling-CDFs a base model is trained on a pooled training data, gathered from several training domains. Then, the calibration is done on calibration data, gathered from different calibration domains, and thresholds are calculated to provide an average coverage guarantee across domains.

For our experiments, we adapt Pooling-CDFs to our min-recall objective by calculating thresholds for each label individually, aiming for 90\% recall across domains. Additionally, we test a variant that performs both training and calibration on the full dataset available during training. We call this variant Pooling CDFs -- Train Calibration (TrainC), and the original variant we call Pooling CDFs -- Cross Validated Calibration (CVC).

\section{Experiments}\label{sec: experiments}

    We conduct several experiments to evaluate the performance of five different approaches to the problem of DG. In line with the setup presented in previous sections, we aim to achieve a 90\% recall for each label and each domain, while keeping the prediction set size as small as possible.
We evaluate the following predictors:

\textbf{ERM}: We train a Neural Network (NN) to make single-valued predictions by minimizing a cross-entropy loss across the entire shuffled training dataset. This method represents the conventional method for training and deploying machine learning models. \\
\textbf{Pooling CDFs (TrainC)} and \textbf{Pooling CDFs (CVC)}: We implement the two variants of Pooling CDFs as described in section \ref{sec: cdf pooling}, building on the ERM classifier as the base model. \\
\textbf{Robust Conformal}: We implement a robust conformal predictor as described in Section \ref{sec: robust conformal}, again building on the ERM classifier as the base model. \\
\textbf{SET-COVER}: We train a NN using the optimization process outlined in Section \ref{sec:optimized method}, with the aim of producing a model with a small prediction set size while maintaining the desired coverage.

Evaluating set-valued predictions presents challenges since models need to balance coverage with prediction set size. It also complicates model comparisons because it is unclear which metric should be prioritized. We choose to tackle this by presenting the following comparisons.
First, we compare models on two key aspects: (1) the average size of prediction sets and (2) the minimum recall per domain. We visualize these results using cross-plots where the y-axis represents min-recall, and the x-axis represents average set size. Each cross shows the median and the 25th and 75th percentiles for both metrics across domains, as shown in Figures \ref{Synthetic:cross} and \ref{wilds:cross}.
Additionally, we present in Table \ref{table: wilds} the percentage of domains achieving at least 90\% min-recall, which is our predefined performance target. 
All our experiments in this section are repeated with five different random seeds, with results averaged across the seeds.

While the experiments in this section compare mainly set predictors (with ERM as the singleton-prediction baseline), in Appendix \ref{appendix sec: ood baselines exp} we also compare SET-COVER to few common singleton-prediction OOD baselines.
    
    \subsection{Synthetic Data}
    \label{sec: synthetic exp}

We first evaluate our proposed methods on synthetic data. We follow a data generation process proposed by \citet{heinze2021conditional}, that incorporates both invariant features, i.e. features that do not depend on the domain, and domain-dependent features. However, unlike \citet{heinze2021conditional}'s original setup, where invariant features alone were sufficient for optimal prediction, we adjusted the data generation process, ensuring that ignoring variant features would lead to a decline in performance. 

The DGP is as follows:
\[
Z_e \sim U[u_{\text{low}}, u_{\text{high}}] \quad ; \quad
Y \sim Bernoulli(0.5) \]\[
X \sim Y(\mu + Z_e \nu) + N(0, \Sigma) 
\]
where $\mu, \nu \in \mathbb{R}^d$.

We experiment with $d = 10, 50$. For all methods we used a 2-layer multi-layer perceptron (MLP) as the base architecture. See details in Appendices \ref{appendix sec: exp parameters} and \ref{appendix sec: synthetic dgp parameters}. 

The results in Figure \ref{Synthetic:cross} consistently show that both the robust conformal method and the SET-COVER method achieve the required coverage in test domains. As expected, the SET-COVER approach maintains a smaller average prediction set size. 
 \begin{figure}[!htb]
    \centering
    \begin{subfigure}{0.48\columnwidth}
        \includegraphics[width=\linewidth]{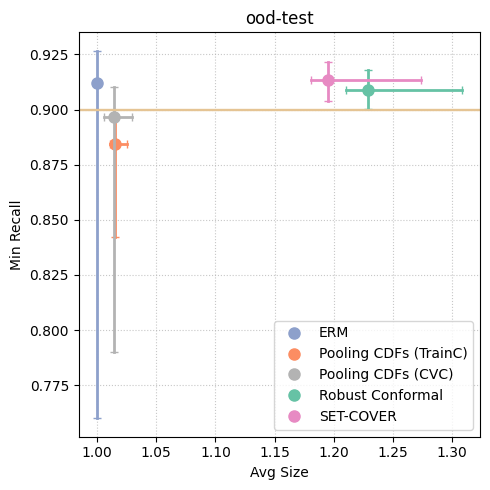}
        \caption{50d}
        \label{Synthetic:cross:50}
    \end{subfigure}
    \begin{subfigure}{0.48\columnwidth}
        \centering
        \includegraphics[width=\linewidth]{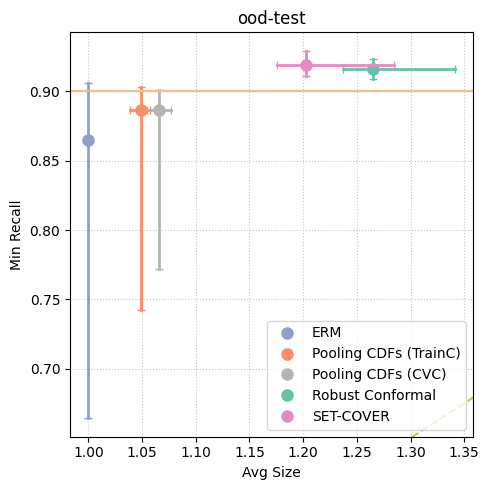}
        \caption{10d}
        \label{Synthetic:cross:10}
    \end{subfigure}
    \caption{Min Recall distribution VS Mean Set Size distri- bution. \textbf{Blue} represents ERM predictor, \textbf{Orange} represents Pooling CDFs (TrainC), \textbf{Grey} represents Pooling CDFs (CVC), \textbf{Green} represents robust conformal predictor, and \textbf{Pink} represents SET-COVER. The horizontal solid line represents the 90\% recall target value.}
    \label{Synthetic:cross}
\end{figure}

In Appendix \ref{apendix sec: more synthetic exp} we conduct experiments on a similar DGP where the covariance matrices vary randomly across domains. The results show that the performance of the models, including SET-COVER, remains stable even with domain-dependent covariance.
    
    \subsection{WILDS Data}\label{sec: wilds exp}

\setlength{\belowcaptionskip}{0pt}

We evaluate set-valued models on benchmarks from the WILDS \citep{koh2021wildsbenchmarkinthewilddistribution} suite of benchmarks, which is designed to test models against real-world distribution shifts across various datasets and modalities. The benchmarks we use include:

\textbf{Camelyon} \citep{bandi2018detection}: This dataset consists of pathological scans from 43 patients across 5 hospitals. Each scan is divided into thousands of patches, each labeled as 1 if it contains a tumor and 0 otherwise. Patients are treated as domains, and the task of predicting whether a tumor appears in a patch (an image) is a binary classification.\\
\textbf{FMoW} \citep{christie2018functional}: A satellite image dataset categorized by different land or building use types. Images come from different geographic areas and years, with each area-year pair defining a domain. We focus on the three most prevalent categories, constructing a multi-class classification task. \\
\textbf{iWildCam} \citep{beery2020iwildcam}: This dataset includes images of animals in the wild, taken from different locations, with the locations defining the domains. We again focus on the three most prevalent classes. The original images include a frame that reveals its domain, so we crop the images to discard these frames. \\
\textbf{Amazon} \citep{ni2019justifying}: A dataset of textual reviews, each is associated with a star rating from 1 to 5. The reviews are written by different reviewers, each has multiple reviews in the dataset. Each reviewer is considered a different domain, and the task is to predict the star rating of a review, i.e a multi-class classification task. 

We discard data instances if they come from any (domain, label) pair with less than 100 samples. For Camelyon, FMoW, and iWildCam data sets, we use a pre-trained ResNet-18 network \citep{He_2016_CVPR}, fine-tuning it on the training set. 
For the Amazon dataset we use the embeddings of a pre-trained Bert (bert-base-uncased) model \citep{DBLP:journals/corr/abs-1810-04805} to calculate average embedding of the input text, and then pass it through a 2-layer MLP. All training sets are composed of randomly sampled instances from randomly sampled domains, and for all data sets the methods are evaluated on unseen test domains. Further experimental details are available at Appendix \ref{appendix sec: exp parameters}. Training time comparisons are provided in Appendix~\ref{app:training-time}.

\textbf{Results}
Figure \ref{wilds:cross} and Table \ref{table: wilds} show consistently that both the robust conformal predictor and the SET-COVER predictor improve coverage in test domains, compared to the other methods. Furthermore, SET-COVER maintains a significantly smaller prediction set size compared to the robust conformal predictor. These results suggest that the SET-COVER method might be a promising step towards an efficient set-valued domain generalization. 

\begin{table}[H]
    \centering
    \caption{Summary of OOD Results on Wilds Datasets}
    \label{table: wilds}
    \small
    \begin{subtable}{\columnwidth}
    \centering
    \renewcommand{\arraystretch}{1.3}
    \begin{tabular}{@{}l@{}|ccc}
        \multirow{2}{*}{\textbf{Model}} 
        & \multicolumn{3}{c}{\textbf{Camelyon}} \\ 
        & \begin{tabular}[c]{@{}c@{}}Median \\ Min Recall $\uparrow$ \end{tabular}
            & \begin{tabular}[c]{@{}c@{}}Median\\ Avg Size $\downarrow$\end{tabular} 
            & \begin{tabular}[c]{@{}c@{}}Recall $\ge 90\%$\\ Pctg $\uparrow$\end{tabular} \\
        \hline \\
        \textbf{ERM} 
            & 0.93 $\pm$ 0.04 & 1.0 $\pm$ 0 & 0.63 $\pm$ 0.11 \\ 
        \begin{tabular}[c]{@{}l@{}}\textbf{CDF Pooling-}\\[-2pt] \textbf{(TrainC)} \end{tabular}
            & 0.88 $\pm$ 0.03 & 1.07 $\pm$ 0.15 & 0.45 $\pm$ 0.10 \\ 
        \begin{tabular}[c]{@{}l@{}}\textbf{CDF Pooling-}\\[-2pt] \textbf{(CVC)} \end{tabular}
            & 0.81 $\pm$ 0.14 & 1.24 $\pm$ 0.17 & 0.33 $\pm$ 0.19 \\ 
        \begin{tabular}[c]{@{}l@{}}\textbf{Robust}\\[-2pt] \textbf{Conformal} \end{tabular}
            & 0.98 $\pm$ 0.01 & 1.79 $\pm$ 0.16 & 0.93 $\pm$ 0.05 \\ 
        \textbf{SET-COVER} 
            & 0.96 $\pm$ 0.03 & 1.05 $\pm$ 0.03 & 0.71 $\pm$ 0.14 \\
    \end{tabular}
    \end{subtable}
    
     \vspace{0.5cm}

    \begin{subtable}{\columnwidth}
    \centering
    \renewcommand{\arraystretch}{1.3}
    \begin{tabular}{@{}l@{}|ccc}
        \multirow{2}{*}{\textbf{Model}} 
        & \multicolumn{3}{c}{\textbf{FMoW}} \\ 
        & \begin{tabular}[c]{@{}c@{}}Median \\ Min Recall $\uparrow$ \end{tabular}
            & \begin{tabular}[c]{@{}c@{}}Median\\ Avg Size $\downarrow$\end{tabular} 
            & \begin{tabular}[c]{@{}c@{}}Recall $\ge 90\%$\\ Pctg $\uparrow$\end{tabular} \\
        \hline \\
        \textbf{ERM} 
            & 0.76 $\pm$ 0.06 & 1.0 $\pm$ 0.00 & 0.08 $\pm$ 0.06 \\
        \begin{tabular}[c]{@{}l@{}}\textbf{CDF Pooling-}\\[-2pt] \textbf{(TrainC)} \end{tabular}
            & 0.81  $\pm$ 0.01 & 1.01 $\pm$ 0.05 & 0.07 $\pm$ 0.04 \\
        \begin{tabular}[c]{@{}l@{}}\textbf{CDF Pooling-}\\[-2pt] \textbf{(CVC)} \end{tabular}
            & 0.83 $\pm$ 0.03 & 1.10 $\pm$ 0.02 & 0.23 $\pm$ 0.10 \\
        \begin{tabular}[c]{@{}l@{}}\textbf{Robust}\\[-2pt] \textbf{Conformal} \end{tabular}
            & 0.89 $\pm$ 0.01 & 1.17 $\pm$ 0.07 & 0.43 $\pm$ 0.15 \\
        \textbf{SET-COVER} 
            & 0.91 $\pm$ 0.02 & 1.10 $\pm$ 0.04 & 0.72 $\pm$ 0.22\\
    \end{tabular}
    \end{subtable}

    \vspace{0.5cm}
    
    \begin{subtable}{\linewidth}
    \centering
    \renewcommand{\arraystretch}{1.3}
    \begin{tabular}{@{}l@{}|ccc}
        \multirow{2}{*}{\normalsize \textbf{Model}} 
        & \multicolumn{3}{c}{\textbf{iWildCam}} \\
        & \begin{tabular}[c]{@{}c@{}}Median \\ Min Recall $\uparrow$ \end{tabular}
            & \begin{tabular}[c]{@{}c@{}}Median\\ Avg Size $\downarrow$\end{tabular} 
            & \begin{tabular}[c]{@{}c@{}}Recall $\ge 90\%$\\ Pctg $\uparrow$\end{tabular} \\
        \hline \\
        \textbf{ERM} 
            & 0.99 $\pm$ 0.00 & 1.0 $\pm$ 0.00 & 0.71 $\pm$ 0.03 \\
        \begin{tabular}[c]{@{}l@{}}\textbf{CDF Pooling-}\\[-2pt] \textbf{(TrainC)} \end{tabular}
            & 0.92 $\pm$ 0.03 & 0.92 $\pm$ 0.03 & 0.53 $\pm$ 0.06 \\
        \begin{tabular}[c]{@{}l@{}}\textbf{CDF Pooling-}\\[-2pt] \textbf{(CVC)} \end{tabular}
            & 0.99 $\pm$ 0.01 & 1.14 $\pm$ 0.12 & 0.73 $\pm$ 0.06 \\
        \begin{tabular}[c]{@{}l@{}}\textbf{Robust}\\[-2pt] \textbf{Conformal} \end{tabular} 
            & 0.99 $\pm$ 0.01 & 2.00 $\pm$ 0.63 & 0.86 $\pm$ 0.07 \\
        \textbf{SET-COVER} 
            & 0.99 $\pm$ 0.00 & 1.01 $\pm$ 0.02 & 0.82 $\pm$ 0.09 \\
    \end{tabular}
    \end{subtable}

    \vspace{0.5cm}
    
    \begin{subtable}{\linewidth}
    \centering
    \renewcommand{\arraystretch}{1.3}
    \begin{tabular}{@{}l@{}|ccc}
        \multirow{2}{*}{\normalsize \textbf{Model}} 
        & \multicolumn{3}{c}{\textbf{Amazon}} \\
        & \begin{tabular}[c]{@{}c@{}}Median \\ Min Recall $\uparrow$ \end{tabular}
            & \begin{tabular}[c]{@{}c@{}}Median\\ Avg Size $\downarrow$\end{tabular} 
            & \begin{tabular}[c]{@{}c@{}}Recall $\ge 90\%$\\ Pctg $\uparrow$\end{tabular} \\
        \hline \\
        \textbf{ERM} 
            & 1.0 $\pm$ 0.00 & 1.0 $\pm$ 0.00 & 0.68 $\pm$ 0.06 \\
        \begin{tabular}[c]{@{}l@{}}\textbf{CDF Pooling-}\\[-2pt] \textbf{(TrainC)} \end{tabular}
            & 0.94 $\pm$ 0.01 & 4.31 $\pm$ 0.17 & 0.63 $\pm$ 0.02 \\
        \begin{tabular}[c]{@{}l@{}}\textbf{CDF Pooling-}\\[-2pt] \textbf{(CVC)} \end{tabular}
            & 0.99 $\pm$ 0.00 & 2.48 $\pm$ 0.53 & 0.70 $\pm$ 0.06\\
        \begin{tabular}[c]{@{}l@{}}\textbf{Robust}\\[-2pt] \textbf{Conformal} \end{tabular} 
            & 1.0 $\pm$ 0.00 & 4.68 $\pm$ 0.28 & 0.99 $\pm$ 0.00\\
        \textbf{SET-COVER} 
            & 1.0 $\pm$ 0.00 & 2.43 $\pm$ 0.09 & 0.96 $\pm$ 0.02 \\
    \end{tabular}
    \end{subtable}
\end{table}

 \begin{figure}
    \centering
    \begin{subfigure}{0.48\columnwidth}
        \includegraphics[width=\linewidth]{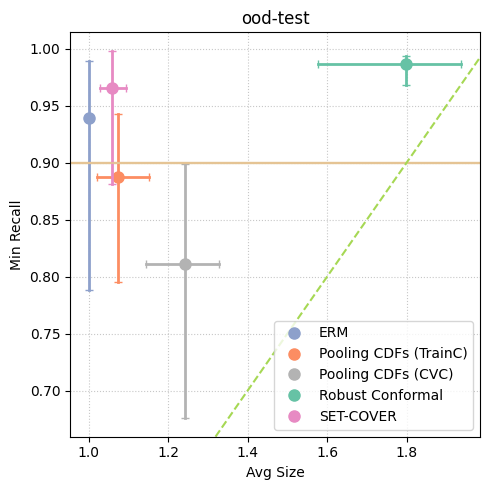}
        \caption{Camelyon}
        \label{wilds:cross:camelyon}
    \end{subfigure}
    \begin{subfigure}{0.48\columnwidth}
        \centering
        \includegraphics[width=\linewidth]{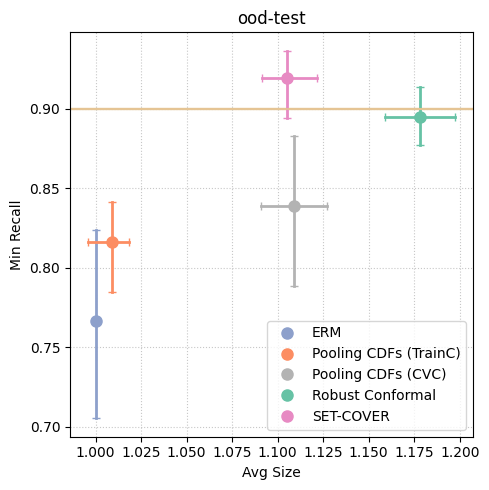}
        \caption{FMoW}
        \label{wilds:cross:fmow}
    \end{subfigure}
    \begin{subfigure}{0.48\columnwidth}
        \centering
        \includegraphics[width=\linewidth]{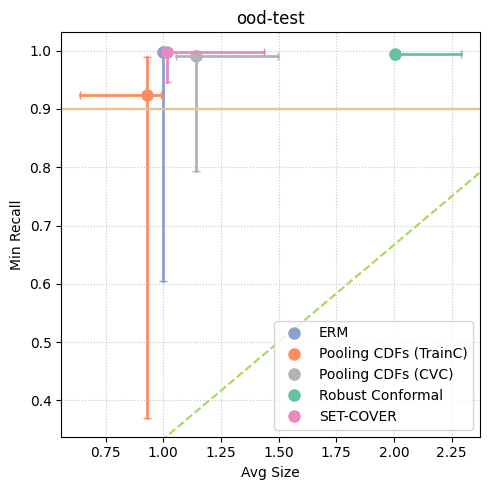}
        \caption{iWildCam}
        \label{wilds:cross:iwildcam}
    \end{subfigure}
    \begin{subfigure}{0.48\columnwidth}
        \includegraphics[width=\linewidth]{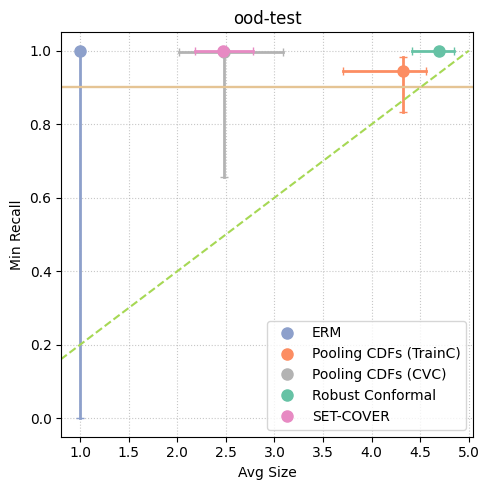}
        \caption{Amazon}
        \label{wilds:cross:amazon}
    \end{subfigure}
    \caption{Each figure represents Min-Recall over Avg Set Size cross. y-axis represents min-recall, and x-axis represents average set size. Each cross shows the median and the 25th and 75th percentiles for both metrics across domain. \textbf{Blue} represents ERM predictor, \textbf{Orange} represents Pooling CDFs (TrainC), \textbf{Grey} represents Pooling CDFs (CVC), \textbf{Green} represents robust conformal predictor, and \textbf{Pink} represents SET-COVER. The horizontal solid line represents the 90\% recall target value, and dashed yellow diagonal line represents performance of a random predictor.}
    \label{wilds:cross}
\end{figure}

\section*{Reproducibility Code}
We release all code and evaluation scripts at \url{https://github.com/ront65/set-valued-ood} to facilitate reproducibility.


\section*{Acknowledgments}
We would like to thank the anonymous reviewers of the paper for their helpful comments and suggestions.  RT and US were supported by ISF grant 2456/23. RT and DN were supported by ISF grant 827/21. 



\section*{Impact Statement}
This work proposes a novel approach to Out of Distribution (OOD) generalization by introducing set-valued predictors to tackle the challenges of domain shifts. The proposed method emphasizes robust, worst-case performance guarantees across unseen domains, which is crucial for high-stakes applications like healthcare. By advancing the understanding of domain generalization and providing practical tools for set-valued prediction, this research contributes to improving the reliability of machine learning systems in critical real-world settings.

As this paper presents work whose goal is to advance the field
of Machine Learning, there are many potential societal
consequences of this work, none which we feel must be
specifically highlighted here.





\bibliography{references}

\begin{thebibliography}{42}
\providecommand{\natexlab}[1]{#1}
\providecommand{\url}[1]{\texttt{#1}}
\expandafter\ifx\csname urlstyle\endcsname\relax
  \providecommand{\doi}[1]{doi: #1}\else
  \providecommand{\doi}{doi: \begingroup \urlstyle{rm}\Url}\fi

\bibitem[Arjovsky et~al.(2019)Arjovsky, Bottou, Gulrajani, and Lopez-Paz]{arjovsky2019invariant}
Arjovsky, M., Bottou, L., Gulrajani, I., and Lopez-Paz, D.
\newblock Invariant risk minimization.
\newblock \emph{arXiv preprint arXiv:1907.02893}, 2019.

\bibitem[Bandi et~al.(2018)Bandi, Geessink, Manson, Van~Dijk, Balkenhol, Hermsen, Bejnordi, Lee, Paeng, Zhong, et~al.]{bandi2018detection}
Bandi, P., Geessink, O., Manson, Q., Van~Dijk, M., Balkenhol, M., Hermsen, M., Bejnordi, B.~E., Lee, B., Paeng, K., Zhong, A., et~al.
\newblock From detection of individual metastases to classification of lymph node status at the patient level: the camelyon17 challenge.
\newblock \emph{IEEE Transactions on Medical Imaging}, 2018.

\bibitem[Beery et~al.(2020)Beery, Cole, and Gjoka]{beery2020iwildcam}
Beery, S., Cole, E., and Gjoka, A.
\newblock The iwildcam 2020 competition dataset.
\newblock \emph{arXiv preprint arXiv:2004.10340}, 2020.

\bibitem[Chen et~al.(2021)Chen, Rosenfeld, Sellke, Ma, and Risteski]{chen2021iterativefeaturematchingprovable}
Chen, Y., Rosenfeld, E., Sellke, M., Ma, T., and Risteski, A.
\newblock Iterative feature matching: Toward provable domain generalization with logarithmic environments, 2021.
\newblock URL \url{https://arxiv.org/abs/2106.09913}.

\bibitem[Christie et~al.(2018)Christie, Fendley, Wilson, and Mukherjee]{christie2018functional}
Christie, G., Fendley, N., Wilson, J., and Mukherjee, R.
\newblock Functional map of the world.
\newblock In \emph{Proceedings of the IEEE Conference on Computer Vision and Pattern Recognition}, 2018.

\bibitem[Devlin et~al.(2018)Devlin, Chang, Lee, and Toutanova]{DBLP:journals/corr/abs-1810-04805}
Devlin, J., Chang, M., Lee, K., and Toutanova, K.
\newblock {BERT:} pre-training of deep bidirectional transformers for language understanding.
\newblock \emph{CoRR}, abs/1810.04805, 2018.

\bibitem[Dunn et~al.(2018)Dunn, Wasserman, and Ramdas]{dunn2018distribution}
Dunn, R., Wasserman, L., and Ramdas, A.
\newblock Distribution-free prediction sets with random effects.
\newblock \emph{arXiv preprint arXiv:1809.07441}, 2018.

\bibitem[Eastwood et~al.(2022)Eastwood, Robey, Singh, Von~K{\"u}gelgen, Hassani, Pappas, and Sch{\"o}lkopf]{eastwood2022probable}
Eastwood, C., Robey, A., Singh, S., Von~K{\"u}gelgen, J., Hassani, H., Pappas, G.~J., and Sch{\"o}lkopf, B.
\newblock Probable domain generalization via quantile risk minimization.
\newblock \emph{Advances in Neural Information Processing Systems}, 35:\penalty0 17340--17358, 2022.

\bibitem[Fontana et~al.(2023)Fontana, Zeni, and Vantini]{fontana2023conformal}
Fontana, M., Zeni, G., and Vantini, S.
\newblock Conformal prediction: a unified review of theory and new challenges.
\newblock \emph{Bernoulli}, 29\penalty0 (1):\penalty0 1--23, 2023.

\bibitem[Gibbs \& Candes(2021)Gibbs and Candes]{gibbs2021adaptive}
Gibbs, I. and Candes, E.
\newblock Adaptive conformal inference under distribution shift.
\newblock \emph{Advances in Neural Information Processing Systems}, 34:\penalty0 1660--1672, 2021.

\bibitem[Guan \& Tibshirani(2022)Guan and Tibshirani]{BSOPC}
Guan, L. and Tibshirani, R.
\newblock {Prediction and Outlier Detection in Classification Problems}.
\newblock \emph{Journal of the Royal Statistical Society Series B: Statistical Methodology}, 84\penalty0 (2):\penalty0 524--546, 02 2022.
\newblock ISSN 1369-7412.
\newblock \doi{10.1111/rssb.12443}.
\newblock URL \url{https://doi.org/10.1111/rssb.12443}.

\bibitem[Gulrajani \& Lopez-Paz(2020)Gulrajani and Lopez-Paz]{gulrajani2020search}
Gulrajani, I. and Lopez-Paz, D.
\newblock In search of lost domain generalization.
\newblock \emph{arXiv preprint arXiv:2007.01434}, 2020.

\bibitem[He et~al.(2016)He, Zhang, Ren, and Sun]{He_2016_CVPR}
He, K., Zhang, X., Ren, S., and Sun, J.
\newblock Deep residual learning for image recognition.
\newblock In \emph{Proceedings of the IEEE Conference on Computer Vision and Pattern Recognition (CVPR)}, June 2016.

\bibitem[Heinze-Deml \& Meinshausen(2021)Heinze-Deml and Meinshausen]{heinze2021conditional}
Heinze-Deml, C. and Meinshausen, N.
\newblock Conditional variance penalties and domain shift robustness.
\newblock \emph{Machine Learning}, 110\penalty0 (2):\penalty0 303--348, 2021.

\bibitem[Kamath et~al.(2021)Kamath, Tangella, Sutherland, and Srebro]{kamath2021does}
Kamath, P., Tangella, A., Sutherland, D., and Srebro, N.
\newblock Does invariant risk minimization capture invariance?
\newblock In \emph{International Conference on Artificial Intelligence and Statistics}, pp.\  4069--4077. PMLR, 2021.

\bibitem[Koh et~al.(2021)Koh, Sagawa, Marklund, Xie, Zhang, Balsubramani, Hu, Yasunaga, Phillips, Gao, Lee, David, Stavness, Guo, Earnshaw, Haque, Beery, Leskovec, Kundaje, Pierson, Levine, Finn, and Liang]{koh2021wildsbenchmarkinthewilddistribution}
Koh, P.~W., Sagawa, S., Marklund, H., Xie, S.~M., Zhang, M., Balsubramani, A., Hu, W., Yasunaga, M., Phillips, R.~L., Gao, I., Lee, T., David, E., Stavness, I., Guo, W., Earnshaw, B.~A., Haque, I.~S., Beery, S., Leskovec, J., Kundaje, A., Pierson, E., Levine, S., Finn, C., and Liang, P.
\newblock {WILDS}: A benchmark of in-the-wild distribution shifts.
\newblock In \emph{International Conference on Machine Learning (ICML)}, 2021.

\bibitem[Krueger et~al.(2021)Krueger, Caballero, Jacobsen, Zhang, Binas, Zhang, Le~Priol, and Courville]{krueger2021out}
Krueger, D., Caballero, E., Jacobsen, J.-H., Zhang, A., Binas, J., Zhang, D., Le~Priol, R., and Courville, A.
\newblock Out-of-distribution generalization via risk extrapolation (rex).
\newblock In \emph{International conference on machine learning}, pp.\  5815--5826. PMLR, 2021.

\bibitem[Lei(2014)]{classification_with_confidence}
Lei, J.
\newblock Classification with confidence.
\newblock \emph{Biometrika}, 101\penalty0 (4):\penalty0 755--769, 2014.
\newblock ISSN 00063444.
\newblock URL \url{http://www.jstor.org/stable/43304686}.

\bibitem[Lei et~al.(2013)Lei, Robins, and Wasserman]{lei2013distribution}
Lei, J., Robins, J., and Wasserman, L.
\newblock Distribution-free prediction sets.
\newblock \emph{Journal of the American Statistical Association}, 108\penalty0 (501):\penalty0 278--287, 2013.

\bibitem[Li et~al.(2018{\natexlab{a}})Li, Yang, Song, and Hospedales]{li2018learning}
Li, D., Yang, Y., Song, Y.-Z., and Hospedales, T.
\newblock Learning to generalize: Meta-learning for domain generalization.
\newblock In \emph{Proceedings of the AAAI conference on artificial intelligence}, volume~32, 2018{\natexlab{a}}.

\bibitem[Li et~al.(2018{\natexlab{b}})Li, Pan, Wang, and Kot]{li2018domain}
Li, H., Pan, S.~J., Wang, S., and Kot, A.~C.
\newblock Domain generalization with adversarial feature learning.
\newblock In \emph{Proceedings of the IEEE conference on computer vision and pattern recognition}, pp.\  5400--5409, 2018{\natexlab{b}}.

\bibitem[Mauricio~Sadinle \& Wasserman(2019)Mauricio~Sadinle and Wasserman]{doi:10.1080/01621459.2017.1395341}
Mauricio~Sadinle, J.~L. and Wasserman, L.
\newblock Least ambiguous set-valued classifiers with bounded error levels.
\newblock \emph{Journal of the American Statistical Association}, 114\penalty0 (525):\penalty0 223--234, 2019.
\newblock \doi{10.1080/01621459.2017.1395341}.
\newblock URL \url{https://doi.org/10.1080/01621459.2017.1395341}.

\bibitem[Miller et~al.(2020)Miller, Krauth, Recht, and Schmidt]{miller2020effect}
Miller, J., Krauth, K., Recht, B., and Schmidt, L.
\newblock The effect of natural distribution shift on question answering models.
\newblock In \emph{International conference on machine learning}, pp.\  6905--6916. PMLR, 2020.

\bibitem[Nagarajan et~al.(2020)Nagarajan, Andreassen, and Neyshabur]{nagarajan2020understanding}
Nagarajan, V., Andreassen, A., and Neyshabur, B.
\newblock Understanding the failure modes of out-of-distribution generalization.
\newblock \emph{arXiv preprint arXiv:2010.15775}, 2020.

\bibitem[Ni et~al.(2019)Ni, Li, and McAuley]{ni2019justifying}
Ni, J., Li, J., and McAuley, J.
\newblock Justifying recommendations using distantly-labeled reviews and fine-grained aspects.
\newblock In \emph{Proceedings of the 2019 Conference on Empirical Methods in Natural Language Processing and the 9th International Joint Conference on Natural Language Processing (EMNLP-IJCNLP)}, 2019.

\bibitem[Peters et~al.(2016)Peters, B{\"u}hlmann, and Meinshausen]{peters2016causal}
Peters, J., B{\"u}hlmann, P., and Meinshausen, N.
\newblock Causal inference by using invariant prediction: identification and confidence intervals.
\newblock \emph{Journal of the Royal Statistical Society Series B: Statistical Methodology}, 78\penalty0 (5):\penalty0 947--1012, 2016.

\bibitem[Recht et~al.(2019)Recht, Roelofs, Schmidt, and Shankar]{recht2019imagenet}
Recht, B., Roelofs, R., Schmidt, L., and Shankar, V.
\newblock Do imagenet classifiers generalize to imagenet?
\newblock In \emph{International conference on machine learning}, pp.\  5389--5400. PMLR, 2019.

\bibitem[Romano et~al.(2020)Romano, Sesia, and Candes]{romano2020classification}
Romano, Y., Sesia, M., and Candes, E.
\newblock Classification with valid and adaptive coverage.
\newblock \emph{Advances in Neural Information Processing Systems}, 33:\penalty0 3581--3591, 2020.

\bibitem[Rosenfeld et~al.(2021)Rosenfeld, Ravikumar, and Risteski]{rosenfeld2021risksinvariantriskminimization}
Rosenfeld, E., Ravikumar, P., and Risteski, A.
\newblock The risks of invariant risk minimization, 2021.
\newblock URL \url{https://arxiv.org/abs/2010.05761}.

\bibitem[Rosenfeld et~al.(2022)Rosenfeld, Ravikumar, and Risteski]{rosenfeld2022domainadjustedregressionorerm}
Rosenfeld, E., Ravikumar, P., and Risteski, A.
\newblock Domain-adjusted regression or: Erm may already learn features sufficient for out-of-distribution generalization, 2022.
\newblock URL \url{https://arxiv.org/abs/2202.06856}.

\bibitem[Shafer \& Vovk(2008)Shafer and Vovk]{shafer2008tutorial}
Shafer, G. and Vovk, V.
\newblock A tutorial on conformal prediction.
\newblock \emph{Journal of Machine Learning Research}, 9\penalty0 (3), 2008.

\bibitem[Shalev-Shwartz \& Ben-David(2014)Shalev-Shwartz and Ben-David]{10.5555/2621980}
Shalev-Shwartz, S. and Ben-David, S.
\newblock \emph{Understanding Machine Learning: From Theory to Algorithms}.
\newblock Cambridge University Press, USA, 2014.
\newblock ISBN 1107057132.

\bibitem[Stojanov et~al.(2021)Stojanov, Li, Gong, Cai, Carbonell, and Zhang]{NEURIPS2021_cfc5d942}
Stojanov, P., Li, Z., Gong, M., Cai, R., Carbonell, J., and Zhang, K.
\newblock Domain adaptation with invariant representation learning: What transformations to learn?
\newblock In Ranzato, M., Beygelzimer, A., Dauphin, Y., Liang, P., and Vaughan, J.~W. (eds.), \emph{Advances in Neural Information Processing Systems}, volume~34, pp.\  24791--24803. Curran Associates, Inc., 2021.
\newblock URL \url{https://proceedings.neurips.cc/paper_files/paper/2021/file/cfc5d9422f0c8f8ad796711102dbe32b-Paper.pdf}.

\bibitem[Sun \& Saenko(2016)Sun and Saenko]{sun2016deep}
Sun, B. and Saenko, K.
\newblock Deep coral: Correlation alignment for deep domain adaptation.
\newblock In \emph{Computer Vision--ECCV 2016 Workshops: Amsterdam, The Netherlands, October 8-10 and 15-16, 2016, Proceedings, Part III 14}, pp.\  443--450. Springer, 2016.

\bibitem[Tibshirani et~al.(2019)Tibshirani, Foygel~Barber, Candes, and Ramdas]{tibshirani2019conformal}
Tibshirani, R.~J., Foygel~Barber, R., Candes, E., and Ramdas, A.
\newblock Conformal prediction under covariate shift.
\newblock \emph{Advances in neural information processing systems}, 32, 2019.

\bibitem[Veitch et~al.(2021)Veitch, D\textquotesingle~Amour, Yadlowsky, and Eisenstein]{NEURIPS2021_8710ef76}
Veitch, V., D\textquotesingle~Amour, A., Yadlowsky, S., and Eisenstein, J.
\newblock Counterfactual invariance to spurious correlations in text classification.
\newblock In Ranzato, M., Beygelzimer, A., Dauphin, Y., Liang, P., and Vaughan, J.~W. (eds.), \emph{Advances in Neural Information Processing Systems}, volume~34, pp.\  16196--16208. Curran Associates, Inc., 2021.
\newblock URL \url{https://proceedings.neurips.cc/paper_files/paper/2021/file/8710ef761bbb29a6f9d12e4ef8e4379c-Paper.pdf}.

\bibitem[Vovk et~al.(2005)Vovk, Gammerman, and Shafer]{vovk2005algorithmic}
Vovk, V., Gammerman, A., and Shafer, G.
\newblock \emph{Algorithmic learning in a random world}, volume~29.
\newblock Springer, 2005.

\bibitem[Wald et~al.(2021)Wald, Feder, Greenfeld, and Shalit]{NEURIPS2021_118bd558}
Wald, Y., Feder, A., Greenfeld, D., and Shalit, U.
\newblock On calibration and out-of-domain generalization.
\newblock In Ranzato, M., Beygelzimer, A., Dauphin, Y., Liang, P., and Vaughan, J.~W. (eds.), \emph{Advances in Neural Information Processing Systems}, volume~34, pp.\  2215--2227. Curran Associates, Inc., 2021.
\newblock URL \url{https://proceedings.neurips.cc/paper_files/paper/2021/file/118bd558033a1016fcc82560c65cca5f-Paper.pdf}.

\bibitem[Wang \& Qiao(2018)Wang and Qiao]{wang2018learning}
Wang, W. and Qiao, X.
\newblock Learning confidence sets using support vector machines.
\newblock \emph{Advances in Neural Information Processing Systems}, 31, 2018.

\bibitem[Wang \& Qiao(2023{\natexlab{a}})Wang and Qiao]{wang2023set}
Wang, W. and Qiao, X.
\newblock Set-valued support vector machine with bounded error rates.
\newblock \emph{Journal of the American Statistical Association}, 118\penalty0 (544):\penalty0 2847--2859, 2023{\natexlab{a}}.

\bibitem[Wang \& Qiao(2023{\natexlab{b}})Wang and Qiao]{JMLR:v24:23-0712}
Wang, Z. and Qiao, X.
\newblock Set-valued classification with out-of-distribution detection for many classes.
\newblock \emph{Journal of Machine Learning Research}, 24\penalty0 (375):\penalty0 1--39, 2023{\natexlab{b}}.
\newblock URL \url{http://jmlr.org/papers/v24/23-0712.html}.

\bibitem[Zhao et~al.(2019)Zhao, Des~Combes, Zhang, and Gordon]{zhao2019learning}
Zhao, H., Des~Combes, R.~T., Zhang, K., and Gordon, G.
\newblock On learning invariant representations for domain adaptation.
\newblock In \emph{International conference on machine learning}, pp.\  7523--7532. PMLR, 2019.

\end{thebibliography}
\bibliographystyle{icml2025}

\newpage
\appendix
\onecolumn

\section{Proofs For Theoretical Claims}
    \subsection{Proof of Theorem \ref{VC-dim learnability}}
        \label{appendix: VC-dim learnability}
        The definitions we provide for uniform convergence and shattering in the multi-domain setting differ slightly from those in classical machine learning. First, in classical ML, the connection between VC-dimension and uniform convergence is typically defined with respect to the 0-1 loss, whereas in our case, it is defined in terms of a performance indicator. Second, while classical ML defines uniform convergence with respect to distributions over $\mathcal{X} \times \{0,1\}$, in our setting, it is defined over distributions on $\mathcal{E}$ alone. These distinctions require careful consideration when linking VC-dimension to uniform convergence in the multi-domain framework, as we do in Theorem \ref{VC-dim learnability}. We will now address this and provide a proof for the theorem. 

\begin{retheorem}{VC-dim learnability}
    A hypothesis class $\mathcal{H}$ has the uniform convergence property with respect to a loss function $\mathcal{L}$ and threshold $\gamma$ if and only if $VCdim_{\mathcal{L}, \gamma}(\mathcal{H}) < \infty.$
    Moreover, if $d = VCdim_{\mathcal{L}, \gamma}(\mathcal{H}) < \infty$, the sample size from the uniform-convergence definition is $m(\delta, \epsilon) = \Theta(\frac{d + log(1/\delta)}{\epsilon^2})$.
\end{retheorem}

\begin{proof}
    \textbf{First Direction:}
    We start by proving that if $\mathcal{H}$ has the uniform convergence property, then $VCdim_{\mathcal{L}, \gamma}(\mathcal{H}) < \infty$. \\
    Assume $\mathcal{H}$ possesses the uniform convergence property. This means that for any $0 < \epsilon, \delta < 0.5$ there exists $m(\epsilon, \delta)$ such that for any distribution $D$ over $\mathcal{E}$ and for any training sample $S \subset \mathcal{E}$ drawn according to $D$ with $|S| \ge m(\epsilon, \delta)$ the probability (over the draw of $S$) that
    \[
    \exists h \in \mathcal{H} \ s.t \
    \Big |
    \mathop{\mathbbm{E}}_{e \sim D}[\mathbbm{1}_{\mathcal{L}, \gamma}(h,e)] - 
    \frac{1}{|S|}\sum_{e \in S} \mathbbm{1}_{\mathcal{L}, \gamma}(h,e)
    \Big |
    > \epsilon
    \]
    is less than $\delta$.
    
    Now, suppose by contradiction that $VCdim_{\mathcal{L}, \gamma}(\mathcal{H}) = \infty$. This means that there exist a set $C \subset \mathcal{E}$ with $|C| = 2m(\epsilon, \delta)$ that can be shattered by $\mathcal{H}$. Let us define a distribution $D$ over $\mathcal{E}$ such that:
    \[
        D(e) = 
        \begin{cases}
            \frac{1}{|C|} & \text{if $e \in C$} \\
            0 & \textit{otherwise}
        \end{cases}
    \]
    Since $\mathcal{H}$ shatters $C$, for any subset $S \subset C$ with $|S| = m(\epsilon, \delta) = \frac{|C|}{2}$ there exist $h_s \in \mathcal{H}$ such that:
    \[
        \mathbbm{1}_{\mathcal{L}, \gamma}(h_s,e) = 
        \begin{cases}
            0 & \text{if $e \in S$} \\
            1 & \text{else}
        \end{cases}
    \]
    For this hypothesis $h_s$, we have 
    \[
    \frac{1}{|S|}\sum_{e \in S} \mathbbm{1}_{\mathcal{L}, \gamma}(h_s,e) = 0
    \quad \text{and} \quad    
    \mathop{\mathbbm{E}}_{e \sim D}[\mathbbm{1}_{\mathcal{L}, \gamma}(h_s,e)] = 0.5
    \]
    Thus, for a specific choice of $S$ we have shown that:
    \[
    \exists h_s \in \mathcal{H} \ . \
    |\mathop{\mathbbm{E}}_{e \sim D}[\mathbbm{1}_{\mathcal{L}, \gamma}(h,e)] - 
    \frac{1}{|S|}\sum_{e \in S} \mathbbm{1}_{\mathcal{L}, \gamma}(h,e)
    | = 0.5 
    > \epsilon.
    \]
    Since this holds for any sample $S$, the probability of the above condition occuring is $1 > \delta$, which contradicts the uniform convergence property.
    Therefore, our assumption that $VCdim_{\mathcal{L}, \gamma}(\mathcal{H}) = \infty$ must be false.

    \textbf{Second direction:} Next, we will prove that if $VCdim_{\mathcal{L}, \gamma}(\mathcal{H}) < \infty$, then $\mathcal{H}$ has the uniform convergence property. 

    Let $0 < \epsilon, \delta < 1$. 
    Given a hypothesis set $\mathcal{H}$, loss function $\mathcal{L}$ and $0 \le \gamma \le 1$ we define a new hypothesis set $\mathcal{H}^*$ as follows:
    \[\mathcal{H}^* = \big \{ 
    \mathbbm{1}_{\mathcal{L}, \gamma}(h, \cdot) \ : \ h \in \mathcal{H}
    \big \}. \]
    Each hypothesis $\mathbbm{1}_{\mathcal{L}, \gamma}(h, \cdot)$ in $\mathcal{H}^*$ maps elements of $\mathcal{E}$ to $\{0, 1\}$.

    It is straight forward from the definitions that if $VCdim_{\mathcal{L}, \gamma}(\mathcal{H}) < \infty$ according to definition \ref{vcdim} of VCdim, than $VCdim(\mathcal{H}^*) < \infty$ according to the classical definition of VCdim, and that both VCdim values are equal (we denote them as $d$). By classical results on VCdim, this implies that $\mathcal{H}^*$ has the uniform convergence property, with sample size $m(\delta, \epsilon) = \Theta(\frac{d + log(1/\delta)}{\epsilon^2})$. 

    We would like to argue that if $\mathcal{H}^*$ has the uniform convergence property (according to the classical definition), so does $\mathcal{H}$ according to our definition \ref{uniform convergence}. However, there are two subtle differences in the definitions that require special care. 
    \begin{itemize}
    \item 
    First, the uniform convergence of $\mathcal{H}^*$ is with regard to destributions $D^*$ over $\mathcal{E} \times \{0,1\}$. Therefore we will translate our distribution $D$, which is over $\mathcal{E}$, to a destribution $D^*$ as required, and explain the connection between them. 
     \item
     Second, the uniform convergence of $\mathcal{H}^*$ is with regard to $\mathcal{L}_{0-1}$ loss. Therefore we will need to connect the $\mathcal{L}_{0-1}$ loss of $\mathcal{H}^*$ to our expression of interest, $\mathbbm{1}_{\mathcal{L}, \gamma}(h,e)$.
    \end{itemize}
    We will now tackle these two points to prove uniform convergence of $\mathcal{H}$.
    
    First, let $D$ be a distribution over $\mathcal{E}$, and define $D^*$ to be a distribution over the space $\mathcal{E} \times I$ where $I = \{0,1\}$ such that:
        \[
    D^*(e,i) =
    \begin{cases}
        D(e) & \text{if} \ i = 0 \\
        0 & \text{else}
    \end{cases}
    \]
    In words, $D^*$ has the same density over $\mathcal{E}$ as $D$ when $i=0$, and zero density otherwise. 

    By the uniform convergence property of $\mathcal{H}^*$, for a sample $S \subset \mathcal{E} \times I$ with $|S| \ge m(\epsilon, \delta) = \Theta(\frac{d + log(1/\delta)}{\epsilon^2})$ drawn according to $D^*$, it holds with probability at least $1 - \delta$ over $S$ that :
    \[
    \forall \ \mathbbm{1}_{\mathcal{L}, \gamma}(h, \cdot) \in \mathcal{H}^* \ 
    \Big |
    \mathop{\mathbbm{E}}_{e,i \sim D^*}\big[\mathcal{L}_{0-1}\big(\mathbbm{1}_{\mathcal{L}, \gamma}(h,e), i \big) \big] - 
    \frac{1}{|S|}\sum_{e,i \in S} \mathcal{L}_{0-1}\big(\mathbbm{1}_{\mathcal{L}, \gamma}(h,e), i \big)
    \Big |
    \le \epsilon
    \]

    Since $D^*$ is constructed such that $i=0$ always, we have :
    \[\mathcal{L}_{0-1}[\mathbbm{1}_{\mathcal{L}, \gamma}(h,e), i] = \mathbbm{1}_{\mathcal{L}, \gamma}(h,e)\]
    thus, we can rewrite the above as:
    
    \[
    \forall \ h \in \mathcal{H} \ . \
    |\mathop{\mathbbm{E}}_{e \sim D}[\mathbbm{1}_{\mathcal{L}, \gamma}(h,e)] - 
    \frac{1}{|S|}\sum_{e \in S} \mathbbm{1}_{\mathcal{L}, \gamma}(h,e)
    |
    \le \epsilon
    \]
    Overall we have shown that whenever $|S| \ge m(\epsilon, \delta) = \Theta(\frac{d + log(1/\delta)}{\epsilon^2})$, for any distribution $D$ over $\mathcal{E}$ the above condition holds with probability at least $1 - \delta$, which  precisely matches the definition of uniform convergence as outlined in Definition \ref{uniform convergence}.
\end{proof}


    \subsection{Proof of Theorem \ref{linear shattering 1d}}
        \label{appendix: linear shattering 1d}
        \begin{retheorem}{linear shattering 1d}
        For $d = 1$, linear $\mathcal{H}$ cannot shatter more than $2|\mathcal{Y}|$ domains with respect to $\mathcal{L}_{recall}$ and any $0 < \gamma < 1$.
\end{retheorem}

\begin{proof}
Let $X \in R$ and $y \in \mathcal{Y}$. Consider any set of n domains: $\hat{\mathcal{E}} = \{ e_1, e_2, ... e_n\}$. 

A linear classifier $h_y$ is of one of the forms: 
\[ h_y(x) = 
\begin{cases}
0 & \text{if } x \ge a \\
1 & \text{otherwise}
\end{cases} \quad
h_y(x) = 
\begin{cases}
0 & \text{if } x < a \\
1 & \text{otherwise}
\end{cases}.
\]
 
We start by showing that for each of these options, there is a domain in $\hat{\mathcal{E}}$ such that if $h_y$ archives a ``good'' coverage in that domain (i.e $p_e[h_y(X) =1 | Y=y] \ge 1-\gamma$), it must also achieve such coverage in all $\hat{\mathcal{E}}$. From this, we willl conclude that there exists a subset of domains in $\hat{\mathcal{E}}$ such that achieving a ``good'' recall-loss within this subset will necessarily result in a ``good'' recall-loss across all of $\hat{\mathcal{E}}$.

For each domain we can consider the conditional CDF $F_{e_i}|_{Y=y}(x) = P^{e_i}[X < x | Y=y]$. Note that $F^{-1}_{e_i}|_{Y=y}(q)$ is the value $x \in R$ such that $P^{e_i}[X < x | Y=y] = q$.

Assume $h_y$ is of the form 
\[
h_y(x) = 
\begin{cases}
0 & \text{if } x \ge a \\
1 & \text{otherwise}
\end{cases}
\] 

Let us define:
\[i_{max, y} = \arg\max_{ i \in [n] } F^{-1}_{e_i}|_{Y=y}(1 - \gamma)\]

Now assume that:
\[p_{e_{i_{max, y}}}[h_y(X) =1 | Y=y] \ge 1-\gamma \]

Then for each $i \in [n]$ is holds that

\[ a \ge F^{-1}_{e_{i_{max, y}}}|_{Y=y}(1-\gamma) \ge F^{-1}_{e_i}|_{Y=y}(1-\gamma) \Longrightarrow p_{e_{i}}[h_y(X) =1 | Y=y] \ge 1-\gamma\]

In a similar way, if $h_y$ is of the form 
\[
h_y(x) = 
\begin{cases}
0 & \text{if } x < a \\
1 & \text{otherwise}
\end{cases}
\]
Let us define:
\[i_{min, y} =  \arg\min_{ i \in [n] } F^{-1}_{e_i}|_{Y=y}(\gamma) \] 

and assume that:
\[p_{e_{i_{min, y}}}[h_y(X) =1 | Y=y] \ge 1-\gamma \]

Then for each $i \in [n]$ is holds that

\[ a \le F^{-1}_{e_{i_{min, y}}}|_{Y=y}(\gamma) \le F^{-1}_{e_i}|_{Y=y}(\gamma) \Longrightarrow p_{e_{i}}[h_y(X) =1 | Y=y] \ge 1-\gamma\]

We see that whenever the recall condition, given $Y=y$, holds for both $i_{min,y}$ and $i_{max,y}$, then it holds for all $i \in [n]$.
Now, if $n > 2|\mathcal{Y}|$, there must be some $i \in [n]$ such that $i \neq i_{min, y}$ and $i \neq i_{max, y}$ for any $y \in \mathcal{Y}$. \\
From the above results, it holds that it is not possible to achieve any assignment such that
\[
\forall_{y \in \mathcal{Y}} \ \mathbbm{1}_{\mathcal{L}_{recall}, \gamma}(h,e_{i_{min, y}}) = 1
\]
\[
\forall_{y \in \mathcal{Y}} \ \mathbbm{1}_{\mathcal{L}_{recall}, \gamma}(h,e_{i_{max, y}}) = 1
\]
\[
\mathbbm{1}_{\mathcal{L}_{recall}, \gamma}(h,e_i) = 0
\]

Therefore, linear classifiers cannot shatter any set of $n > 2|\mathcal{Y}|$ domains in the case of $\mathcal{X} \subseteq R$.

\end{proof}

    \subsection{Prof of Theorem \ref{linear shattering 2d}}
        \label{appendix: linear shattering d>1}

 \begin{retheorem}{linear shattering 2d}
     For $d>1$, and any size $n$, there exists a set of $n$ domains that can be shattered by linear hypotheses $\mathcal{H}$ with respect to $\mathcal{L}_{recall}$ and any $0 < \gamma < 1$.
\end{retheorem}

\begin{proof}
To prove this, it is enough to consider the case of $d = 2$, as for $d > 2$ we can always consider domains that represent degenerate distributions on a $2d$ subspace with linear classifiers that practically operate in that subspace.

In $R^2$ we can consider domains such that their feature distributions (i.e the marginal distributions over $\mathcal{X}$) have a support on the unit circle, and linear classifiers that pass through the origin. It is not hard to see that in that case the problem reduces to distributions in the interval $[0, 2\pi] \subset R$ with classifiers that are rectangles of length $\pi$.
In Lemma \ref{lemma: rectangular shattering} we prove that for any $n \in \mathbf{N}$ there are $n$ domains that can be shattered by rectagles. That proof also holds when we limit the distributions to be on a closed interval $I \subset R$, with classifiers that are rectangles of a fixed length (no matter what the length is, as long as it is smaller than the length of I). Therefore, the same proof holds in our reduced case. 
\end{proof}

\begin{lemma} \label{lemma: rectangular shattering}
    For any $d$ and any size $n$, there is a set of $n$ domains that can be shattered by rectangular $\mathcal{H}$ with respect to $\mathcal{L}_{recall}$ and any $0 < \gamma < 1$.
\end{lemma}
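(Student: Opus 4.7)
The plan is constructive. The key insight will be that the lemma only asks for the existence of some shatterable set of size $n$, so I am free to let the label space depend on $n$. I would take $\mathcal{Y} = \{1, \dots, n\}$ and give each domain a dedicated ``signature'' label, turning the max-over-labels in $\mathcal{L}_{recall}$ into $n$ independent per-label constraints, each of which a single rectangle can handle.

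The construction I have in mind is the following: fix two distinct points $q, p \in \mathbb{R}^d$, and for each $i \in \{1,\dots,n\}$ define the domain $e_i$ by the class-conditional distributions
\[
P_{e_i}[\,\cdot \mid Y = y\,] \;=\; \begin{cases} \delta_q, & y \neq i,\\ \delta_p, & y = i,\end{cases}
\]
together with any strictly positive marginal on $Y$. To realize a target subset $\hat{C} \subseteq \{e_1, \ldots, e_n\}$, I would build the hypothesis $h^{\hat{C}}$ from per-label rectangles $R_1,\dots,R_n$: if $e_y \in \hat{C}$ take $R_y$ to contain $q$ but not $p$, and otherwise take $R_y$ to contain both $q$ and $p$. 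The shattering check is then immediate. For $e_i \notin \hat{C}$, every class-conditional distribution places its mass at a point covered by the corresponding $R_y$, so $\mathcal{L}_{recall}(h^{\hat{C}}, e_i) = 0 < \gamma$ and the performance indicator vanishes. For $e_i \in \hat{C}$, the label $y = i$ has its mass at $p$ while $R_i$ excludes $p$ by construction, so the recall for that label is $0$, forcing $\mathcal{L}_{recall}(h^{\hat{C}}, e_i) = 1 \geq \gamma$ and the performance indicator equal to $1$. Because all recall values are either $0$ or $1$, the argument works uniformly across every $\gamma \in (0,1)$ and every dimension $d \geq 1$.

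For the use of this lemma in the proof of Theorem \ref{linear shattering 2d}, where the rectangles are restricted to a prescribed length $\ell$ (arcs of length $\pi$ on the circle $[0, 2\pi]$ in that reduction), the same construction carries over provided $q$ and $p$ are placed at distance strictly between $0$ and $\ell$: an arc of length $\ell$ can then be slid so as to cover $\{q, p\}$ (by starting at $q$) or only $\{q\}$ (by rotating slightly so $p$ falls just outside), realizing both rectangle choices. I expect no real technical obstacle once the construction is fixed; the main conceptual step is simply recognizing that letting $|\mathcal{Y}|$ grow with $n$ decouples the multi-label recall criterion into $n$ independent per-label conditions, which sidesteps the usual VC-dimension bound on rectangles that would otherwise limit us.
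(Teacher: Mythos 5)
Your construction is internally consistent, but it rests on a move the lemma does not permit: you let the label space $\mathcal{Y}$ grow with $n$. In this paper $\mathcal{Y}$ is fixed by the classification task before the hypothesis class is even defined ($h:\mathcal{X}\to\mathcal{P}(\mathcal{Y})$), the domains are distributions over $(\mathcal{X},\mathcal{Y})$ for that fixed $\mathcal{Y}$, and the VC-dimension of Definition \ref{vcdim} is a property of that fixed class $\mathcal{H}$; the existential quantifier in the lemma ranges over sets of domains, not over label spaces. The lemma's role is to supply, for a \emph{single} fixed $\mathcal{Y}$ (even binary classification), arbitrarily large shatterable sets of domains, so that Theorem \ref{linear shattering 2d} can conclude $VCdim_{\mathcal{L}_{recall},\gamma}(\mathcal{H})=\infty$. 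If you instead take $\mathcal{Y}=\{1,\dots,n\}$ you are changing $\mathcal{H}$ itself with $n$, and for a fixed label set your signature-label construction shatters only $|\mathcal{Y}|$ domains. That is exactly the order of the \emph{upper} bound $2|\mathcal{Y}|$ proved for $d=1$ in Theorem \ref{linear shattering 1d}, so it cannot separate the $d>1$ case from the $d=1$ case and leaves open the possibility that the VC-dimension is finite (of order $|\mathcal{Y}|$) for every $d$. The decoupling trick you identify as your main conceptual step is therefore precisely the step that loses the content of the lemma.

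The paper's proof goes the opposite way: it makes the class-conditional laws $P_e[X\mid Y=y]$ \emph{identical across all} $y$, so that a single one-dimensional rectangle, reused for every label, controls all $|\mathcal{Y}|$ recall constraints simultaneously, and all of the difficulty is pushed into the marginal feature distributions. Concretely, it indexes the $2^n$ subsets of $[n]$ by $2^n$ sub-intervals placed on either side of a central interval carrying bulk mass $0.99(1-\gamma)$, and shows by induction that a sliding window of fixed length captures conditional mass exactly $1-\gamma$ for the domains in the $j$-th subset and strictly less for the others. To repair your argument you would need a construction of this kind, one that shatters $n$ domains for arbitrary $n$ while keeping $\mathcal{Y}$ (even $|\mathcal{Y}|=2$) fixed; the point-mass, one-label-per-domain idea cannot be patched to achieve this.
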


\begin{proof}
    For a given $n \in \mathcal{N}$ we need to present a set of $n$ domain $C = \{e_1, ..., e_n\}$ such that:
    \[
    \forall_{I \subseteq [n]} \ 
    \exists_{h \in \mathcal{H}_{rect}}
    \ \ s.t \ \
    \forall_{y \in \mathcal{Y}} P_{e_i}[y \in h(X)|Y = y] \ge 1-\gamma \iff i \in I
    \]
    
    We consider few simplifications for the proof:
    \begin{itemize}
        \item First, it is enough to show the above for $d=1$. This is because for $d>1$ we can always define the domains in $C$ to be degenerate distributions on a $1d$ subspace, and then follow the same construction as will be presented here for $d=1$.
        \item We can define the domains in $C$ to be such that $P_e[X|Y=y]$ will not depend on the value of $y$. Then, because $h$ can be break down to $h_y$, it is enough to show that for some $y \in \mathcal{Y}$
        \[
        \forall_{I \subseteq [n]} \ 
        \exists_{h_y \in \mathcal{H}_{y-rect}}
        \ \ s.t \ \
        P_{e_i}[y \in h(X)|Y = y] \ge 1-\gamma \iff i \in I
        \]
         Where $\mathcal{H}_{y-rect}$ is the set of rectangular hypothesis for $h_y$. This is true because we can use the same function $h_y$ for all values of $y$ to get the original goal. 
    \end{itemize}

   We will now define distributions for $X|Y=y$ for the domains in $C$ and show that our simplified goal holds.

   Consider some order on all the subsets of $[n]$ such that the first subset is the empty one:
   \[I_0 = \emptyset, I_1, ..., I_{2^n} \in \mathcal{P}([n])\]
   We need to construct domains for $C$ and show that for each $0 \le j \le 2^n - 1$ there is a rectangular classifier $h_y^j$ such that 
   \[P_{e_i}[h_y^j(X)=1 | Y=y] \ge 1-\gamma \iff i \in I_j\]
   The idea of our construction is as follows. We will define the density of $X$ on the interval $(0,3)$. We split the intervals $(0,1)$ and $(2,3)$ to $2^n$ sub-intervals, and recognize these sub-intervals with the subgroups $I$ of $[n]$. Then, we will look at a sliding window, such that each progress of the window adds one sub-interval from $(2,3)$ to the cover, and discards one interval from $(0,1)$. We will spread the density between the sub-intervals such that the sub-interval that is being discarded causes the associated domains to loss cover, while the sub-interval that is being included causes the associated domains to gain required coverage. This way our sliding window represents rectangular classifies that can achieve any assignment for the domains. 

   The formal construction is as follows:
    \[
    P_{e_i}[X | Y=y] = 
    \begin{cases}
        \frac{2^n}{2^{n-1}+1}0.01(1-\gamma) &\text{if} \ 
        \exists_{0 \le j \le 2^n - 1} \ \frac{j}{2^n} \le X \le \frac{j+1}{2^n} \And i \in I_j \\
        0.99(1-\gamma) \quad &\text{if} \ 1 \le X \le 2 \\
        \frac{2^n}{2^{n-1}+1}0.01(1-\gamma) &\text{if} \ 
        \exists_{1 \le j \le 2^n - 1} \ (2+\frac{j-1}{2^n}) \le X \le (2+\frac{j}{2^n}) \And i \in I_j \\
        a &\text{if} \ -2 \le X \le -1 \\
        0 \quad &\text{otherwise}
    \end{cases}
    \]
    
    Where we assume that $1-\gamma$ is small enough such that the total integral for $X > 0$ is less than 1, and $a$ completes the density to totally integral to 1. If $1-\gamma$ is not small enough we just need to modify the factors 0.99 and 0.01 of the density, so WLOG we assume $1-\gamma$ is small enough.

    We will now prove by induction that for $0 \le j \le 2^n-1$ we have:
    \begin{align*}
    & i \in I_j \Longrightarrow P_{e_i}\Big[\frac{j}{2^n} \le X \le 2 + \frac{j}{2^n} \Big| Y=y \Big] = 1-\gamma \\
    & i \not\in I_j \Longrightarrow P_{e_i}\Big[\frac{j}{2^n} \le X \le 2 + \frac{j}{2^n} \Big| Y=y \Big] = \frac{2^{n-1}}{2^{n-1}+1}0.01(1-\gamma) + 0.99(1-\gamma) < 1-\gamma
    \end{align*}
    
    Therefore, for the rectangular binary classifier $h_y^j(X) = 1 \iff \frac{j}{2^n} \le X \le 2 + \frac{j}{2^n} $ we have
    \[P_{e_i}[h_y^j(X)=1 | Y=y] \ge 1-\gamma \iff i \in I_j\]
    This will conclude the proof, as it means that rectangular classifiers can shatter the set $C$ as required.

    \textbf{Induction base case, j=0}. 
    Each domain takes part in $2^{n-1}$ of the subgroups $I$, therefore from the domains construction the followng holds for each domain:
    \[ P_{e_i}[0 \le X \le 1 | Y=y] = 2^{n-1}\frac{1}{2^n}\frac{2^n}{2^{n-1}+1}0.01(1-\gamma) = \frac{2^{n-1}}{2^{n-1}+1}0.01(1-\gamma)\]
    Therefore for each domain $e_i$, 
    \[
    P_{e_i}\Big[\frac{j}{2^n} \le X \le 2 + \frac{j}{2^n} \Big| Y=y \Big] = 
    P_{e_i}[0 \le X \le 2 | Y=y] = 
    \frac{2^{n-1}}{2^{n-1}+1}0.01(1-\gamma) + 0.99(1-\gamma) < 1-\gamma
    \]
    Together with the fact that $I_0 = \emptyset$ this proves the base case.

    \textbf{Induction step, j $\longrightarrow$ j+1}. From the induction assumption we know that 
    \begin{align*}
    & i \in I_j \Longrightarrow P_{e_i}\Big[\frac{j}{2^n} \le X \le 2 + \frac{j}{2^n} \Big| Y=y \Big] = 1-\gamma \\
    & i \not\in I_j \Longrightarrow P_{e_i}\Big[\frac{j}{2^n} \le X \le 2 + \frac{j}{2^n} \Big| Y=y \Big] = \frac{2^{n-1}}{2^{n-1}+1}0.01(1-\gamma) + 0.99(1-\gamma)
    \end{align*}

    From the domains construction we have:
    \begin{align*}
    &i \in I_j \Longrightarrow P_{e_i}\Big[\frac{j}{2^n} \le X \le \frac{j+1}{2^n} \Big| Y=y \Big] = \frac{1}{2^n}\frac{2^n}{2^{n-1}+1}0.01(1-\gamma) \\
    &i \not\in I_j \Longrightarrow P_{e_i}\Big[\frac{j}{2^n} \le X \le \frac{j+1}{2^n} \Big| Y=y \Big] = 0
    \end{align*}

    Putting these together will give us:
    \begin{align*}
    &i \in I_j \Longrightarrow 
    P_{e_i}\Big[\frac{j+1}{2^n} \le X \le 2+\frac{j}{2^n} \Big| Y=y \Big] = 1-\gamma - \frac{1}{2^n}\frac{2^n}{2^{n-1}+1}0.01(1-\gamma) = \\
    & \quad \quad = 0.99(1-\gamma) + 0.01(1-\gamma) - \frac{1}{2^{n-1}+1}0.01(1-\gamma) = \frac{2^{n-1}}{2^{n-1}+1}0.01(1-\gamma) + 0.99(1-\gamma)\\
    &i \not\in I_j \Longrightarrow P_{e_i}\Big[\frac{j+1}{2^n} \le X \le 2+\frac{j}{2^n} \Big| Y=y \Big] = \frac{2^{n-1}}{2^{n-1}+1}0.01(1-\gamma) + 0.99(1-\gamma)
    \end{align*}

    i.e, we have for each domain 
    \[
    P_{e_i}\Big[\frac{j+1}{2^n} \le X \le 2+\frac{j}{2^n} \Big| Y=y \Big] = \frac{2^{n-1}}{2^{n-1}+1}0.01(1-\gamma) + 0.99(1-\gamma)
    \]
    Now, from the construction of the domains we have 
    \begin{align*}
    &i \in I_{j+1} \Longrightarrow P_{e_i}\Big[2+\frac{j}{2^n} \le X \le 2 + \frac{j+1}{2^n} \Big| Y=y \Big] = \frac{1}{2^n}\frac{2^n}{2^{n-1}+1}0.01(1-\gamma) \\
    &i \not\in I_{j+1} \Longrightarrow P_{e_i}\Big[2 + \frac{j}{2^n} \le X \le 2 + \frac{j+1}{2^n} \Big| Y=y \Big] = 0
    \end{align*}
    Therefore, we have 
    \begin{align*}
    i \in I_{j+1} \Longrightarrow &P_{e_i}\Big[\frac{j+1}{2^n} \le X \le 2 + \frac{j+1}{2^n} \Big| Y=y \Big] = \\
    & = \frac{2^{n-1}}{2^{n-1}+1}0.01(1-\gamma) + 0.99(1-\gamma) + \frac{1}{2^n}\frac{2^n}{2^{n-1}+1}0.01(1-\gamma) = \\
    & = \frac{2^{n-1}+1}{2^{n-1}+1}0.01(1-\gamma) + 0.99(1-\gamma) = 1-\gamma 
    \end{align*}
    \begin{align*}
    i \not\in I_{j+1} \Longrightarrow P_{e_i}\Big[\frac{j+1}{2^n} \le X \le 2 + \frac{j+1}{2^n} \Big| Y=y \Big] = \frac{2^{n-1}}{2^{n-1}+1}0.01(1-\gamma) + 0.99(1-\gamma) + 0
    \end{align*}
    That completes our induction.
\end{proof}
        
    \subsection{Proof of Theorem \ref{linear learnability}}
        \label{appendix sec: linear learnability}

\begin{retheorem}{linear learnability}
     If domains are restricted to being conditionally Gaussian, and there exists $ \forall_{y \in \mathcal{Y}} \ \Sigma_y \in \mathbbm{R}^{d \times d}$ such that for each $e,y$ $\Sigma_{e,y} = \sigma_{e,y}\Sigma_y$ for some $\sigma_{e,y} \in \mathbbm{R}$, then for each $\epsilon, \delta > 0$ there exist $m = \Theta(\frac{|\mathcal{Y}|^2(d + log(|\mathcal{Y}|/\delta))}{\epsilon^2})$ such that if $|\mathcal{E}_{train}| \ge m$ than with probability higher than $1 - \delta$ over the training domains sample $\mathcal{E}_{train}$, and regardless of the distribution $D$ over $\mathcal{E}$, any linear hypothesis $h$ such that  
     \[ \forall_{e \in \mathcal{E}_{train}} \ \mathbbm{1}_{\mathcal{L}_{recall}, \gamma}(h,e) = 0 
     \]
     will also have 
     \[
     \mathop{\mathbbm{E}}_{e \sim D}[\mathbbm{1}_{\mathcal{L}_{recall}, \gamma}(h,e)] \le \epsilon.
     \]
\end{retheorem}

\begin{proof}
 We begin by reminding that for a set-valued $h$ to be a linear hypothesis means it can be decomposed into $|\mathcal{Y}|$ linear binary classifiers  $h_y$. We will break the proof into two stages. First, we define for each $y \in \mathcal{Y}$ a loss $\mathcal{L}_y(h_y,e) = P_e[h_y(X) = 0 | Y=y]$
 and show that linear binary classifiers cannot shatter more than $d+1$ domains with regard to $\mathcal{L}_y, \gamma$. Then, we will use this result to show that linear binary classifiers have the uniform convergence property with regard to $\mathcal{L}_y, \gamma$, and with that in hand we will conclude the theorem.\\

\begin{lemma}\label{linear shattering}
    In the terms of theorem \ref{linear shattering}, linear binary classifiers cannot shatter more than $d+1$ domains with respect to $\mathcal{L}_{y}$ and any $0 < \gamma < 1$
\end{lemma}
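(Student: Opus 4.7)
The plan is to reduce the per-label shattering question to a purely geometric question about half-spaces containing Euclidean balls, and then apply Radon's theorem. First, by the linear change of variables $\tilde x = \Sigma_y^{-1/2} x$ (restricted to the range of $\Sigma_y$ if it is singular), I reduce to the case $\Sigma_y = I$: each conditional distribution becomes $N(\Sigma_y^{-1/2}\mu_{e,y},\, \sigma_{e,y} I)$, while a linear classifier $w^\top x + b$ pulls back to $(\Sigma_y^{1/2} w)^\top \tilde x + b$, so the class of linear binary classifiers is preserved. Once $\Sigma_y = I$, a direct Gaussian-CDF computation shows that $\mathbbm{1}_{\mathcal{L}_y,\gamma}(h_y,e) = 0$ if and only if $w^\top \mu_{e,y} + b > z_\gamma \sqrt{\sigma_{e,y}}\,\|w\|$, where $z_\gamma = \Phi^{-1}(1-\gamma)$. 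Writing $r_e := z_\gamma\sqrt{\sigma_{e,y}}$, this is precisely the statement that the Euclidean ball $B(\mu_{e,y}, r_e)$ fits strictly inside the half-space $\{x : w^\top x + b \ge 0\}$. So shattering by linear classifiers is equivalent to shattering these balls by the relation ``ball contained in half-space''.

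Next, I would argue by contradiction: suppose some set of $d+2$ domains $e_1,\dots,e_{d+2}$ is shattered. Applying Radon's theorem to the centers $\mu_{1,y},\dots,\mu_{d+2,y}\in\mathbb{R}^d$ produces a partition $[d+2] = A \sqcup B$ with $A, B$ both non-empty and convex weights $\alpha_i \ge 0$ ($i\in A$) and $\beta_j \ge 0$ ($j\in B$), each summing to $1$, together with a common convex combination $p = \sum_{i\in A}\alpha_i \mu_{i,y} = \sum_{j\in B}\beta_j \mu_{j,y}$. Shattering demands in particular that both sign patterns ``good on $A$, bad on $B$'' and ``good on $B$, bad on $A$'' be realizable, by classifiers $(w^A, b^A)$ and $(w^B, b^B)$ respectively. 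Since both $A$ and $B$ are non-empty, the trivial constant classifier $w = 0$ cannot realize either pattern, so $\|w^A\|, \|w^B\| > 0$.

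The hard part will be extracting a contradiction from the ball-containment conditions, since a single pattern only yields a one-sided comparison of weighted radii rather than an outright contradiction; the trick is to use both patterns. For pattern ``good on $A$, bad on $B$'', summing $w^{A\top}\mu_{i,y} + b^A > r_i\|w^A\|$ over $i\in A$ with weights $\alpha_i$ and $w^{A\top}\mu_{j,y} + b^A \le r_j\|w^A\|$ over $j\in B$ with weights $\beta_j$, the common point $p$ makes the $w^{A\top} p + b^A$ contributions coincide, yielding $\sum_{i\in A}\alpha_i r_i < \sum_{j\in B}\beta_j r_j$ (strictness is preserved because at least one weight on each side is strictly positive, and $\|w^A\|>0$). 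The symmetric argument on pattern ``good on $B$, bad on $A$'' gives $\sum_{j\in B}\beta_j r_j < \sum_{i\in A}\alpha_i r_i$, a contradiction. Hence no $(d+2)$-subset of domains can be shattered, and by Definition \ref{vcdim} we conclude $VCdim_{\mathcal{L}_y, \gamma}(\mathcal{H}) \le d+1$.
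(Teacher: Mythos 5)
Your proof is correct, but it takes a genuinely different route from the paper's. Both arguments begin identically, using the Gaussian CDF to turn the per-domain coverage condition into the algebraic statement $w^{\top}\mu_{e,y} \;(+\,b) \,\ge\, c\,\sqrt{\sigma_{e,y}}\,\lVert w\rVert_{\Sigma_y}$ with $c=\Phi^{-1}(1-\gamma)$. From there the paper linearizes the $\sqrt{\sigma_{e,y}}$ term by lifting each domain to $\nu_i=(\mu_i,\sqrt{\sigma_i})\in\mathbb{R}^{d+1}$ and the (bias-free) classifier to $\hat\theta=(\theta,\,-c\sqrt{\theta^{\top}\Sigma\theta})$, so that the condition becomes $\hat\theta^{\top}\nu_i\ge 0$; it then takes a linear dependence among any $d+2$ of the $\nu_i$ and reads off a single unrealizable sign pattern from the signs of the coefficients --- the classical VC bound for linearly parameterized concept classes. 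You instead keep the $\sqrt{\sigma_{e,y}}$ term as a ``radius'' $r_e$, apply Radon's theorem to the $d+2$ centers, and play the two complementary dichotomies against each other to obtain contradictory inequalities between the weighted radius sums $\sum_{A}\alpha_i r_i$ and $\sum_{B}\beta_j r_j$. Your version buys something concrete: it accommodates an affine classifier (bias term $b$) and still yields the bound $d+1$, whereas the paper's proof is written for homogeneous classifiers $\theta^{\top}X\ge 0$, and naively absorbing a bias into its lift would cost an extra dimension and only give $d+2$. The paper's version is marginally shorter since it needs only one unrealizable pattern rather than a complementary pair. Two small points to tidy in yours: the whitening step should explicitly handle singular $\Sigma_y$ (you note the restriction to the range, which suffices, since the effective dimension only drops), and under the paper's convention $\mathbbm{1}_{\mathcal{L},\gamma}=\mathbbm{1}[\mathcal{L}\ge\gamma]$ the ``good'' condition is the strict inequality $P_e[h_y(X)=1\mid Y=y]>1-\gamma$, which is exactly the strict/non-strict split your summation argument uses, so your bookkeeping is consistent.
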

We start by proving the Leamma.\\
As this Lemma focuses on a specific value of $y$, we will omit $y$ from some notations where it is clear from the context. \\
We assume $h_y$ is a linear binary classifiers, i.e there exists $\theta \in R^d$ such that
\[ h_y(X) = 1 \iff \theta^T X \ \ge 0\]
Let $C = \{ e_1, e_2, ..., e_n\}$ be a set of $n = d+2$ domains that are conditionally normal, i.e
\[ \forall 1 \le i \le n \quad [X|Y=y] \sim^{P_{e_i}} N(\mu_i, \sigma_i \cdot \Sigma). \]
We will mark 
\[
\mathbbm{1}_i = 1 \ \text{if} \ P_{e_i}[h_y(X) = 1|Y=y] \ge 1 - \gamma \quad \text{else} \ 0\]
and show that there must be an assignment of $\{\mathbbm{1}_i\}_{i=1}^n \in \{0,1\}^n$ that cannot be achieved. 

Given $\theta \in R^d$ let us consider the variable $Z = \theta^T X$. For each domain, this is a conditionally normal 1d variable: 
\[ [Z|Y=y] \sim^{e_i} N(\theta^T\mu_i,\  \sigma_i \theta^T \Sigma\theta).\]
Therefore,
\[
P^{e_i}[Z > 0 | Y=y] = P^{e_i}\Big[\frac{Z - \theta^T \mu_i} {\sqrt{\sigma_i \theta^T \Sigma \theta}} \ge - \frac{\theta^T \mu_i}{\sqrt{\sigma_i \theta^T \Sigma \theta}} \Big| Y=y \Big] = \]
\[ = \phi \Big(\frac{\theta^T \mu_i}{\sqrt{\sigma_i \theta^T \Sigma \theta}}\Big) \]
where $\phi(\cdot)$ is the CDF of a standaridized normal variable. 
Letting $c$ be such that $\phi(c) = 1-\gamma$, we have 
\[
\mathbbm{1}_i = 1 \iff 
P^{e_i}[Z > 0 | Y=y] \ge 1 - \gamma \iff 
\frac{\theta^T \mu_i}{\sqrt{\sigma_i \theta^T \Sigma \theta}} \ge c \iff 
\theta^T \mu_i - c\sqrt{\sigma_i \theta^T \Sigma \theta} \ge 0.
\]
Defining $\hat{\theta} = (\theta, c\sqrt{\theta^T \Sigma \theta})\in R^{d+1}, \ \nu_i = (\mu_i, \sqrt{\sigma_i}) \in R^{d+1}$ we get
\[
\mathbbm{1}_i = 1 \iff \hat{\theta}^T \nu_i \ge 0
\]

We note that the fact that $\hat{\theta}$ does not depend on i at all, and that $\nu_i$ does not depend on $\theta$ is important, as will be seen next. \\
Now, as we have $d+2$ domains in $C$, we have $d+2$ corresponding vectors $\nu_i$ in $R^{d+1}$, so there must be one vector that is a non-trivial linear combination of the others. WLOG assume that vector is $\nu_1$:
\[ \nu_1 = \sum_{i>1} \alpha_i \nu_i \]
Now, let us consider the following assignment for $\mathbbm{1}_i$:
\[ 
\mathbf{1}_i = 
\begin{cases}
1 & \quad \text{if } \alpha_i \ge 0 \\
0 & \quad \text{if } \alpha_i < 0 \\
0 & \quad i = 1
\end{cases}
\]

Assuming there is a $\hat{\theta}$ that achieves this assignment would mean that:
\[
i > 1 \Longrightarrow \alpha_i \hat{\theta}^T\nu_i \ge 0
\]
\[
\hat{\theta}^T \nu_1 = \sum_{i>1} \alpha_i \hat{\theta}^T \nu_i \ge 0 \Longrightarrow \mathbbm{1}_1 = 1
\]
which contradicts the requirement of $\mathbbm{1}_1 = 0$. As $\hat{\theta}$ depends solely on $\theta$ (and on $c, \Sigma$, which are constants), and does not depend on $i$, this means that $\theta$ cannot satisfy the above assignment.
As the suggested assignment is not dependent on $\theta$ (it depends only on the set of $\{\nu_i\}$, which are not dependent on $\theta$), it means that we have found an assignment that no $\theta$ can satisfy.

We have shown that no $\theta$ can satisfy the above assignment, which means linear classifiers are not able to shatter the set $C$, as required.

This proves the Lemma. Now, consider $\epsilon, \delta$ from the theorem statement. An immediate result of Lemma \ref{linear shattering} and theorem \ref{VC-dim learnability} is that, focusing on a single $y \in \mathcal{Y}$, the hypothesis set $\mathcal{H}_y$ of linear binary classifiers have the uniform convergence property with regard to $\mathcal{L}_y, \gamma$, i.e there exists $m_y(\frac{\epsilon}{|\mathcal{Y}|}, \frac{\delta}{|\mathcal{Y}|}) = \Theta(\frac{|\mathcal{Y}|^2(d + log(|\mathcal{Y}|/\delta))}{\epsilon^2})$ such that with probability higher than $1 - \frac{\delta}{|\mathcal{Y}|}$
\[
|S| > m_y(\frac{\epsilon}{|\mathcal{Y}|}, \frac{\delta}{|\mathcal{Y}|}) \longrightarrow 
\forall h_y \in \mathcal{H}_y \mathop{\mathbbm{E}}_{e \sim D}[\mathbbm{1}_{\mathcal{L}_y, \gamma}(h,e)] < \frac{1}{|S|}\sum_{e \in S}[\mathbbm{1}_{\mathcal{L}_y, \gamma}(h,e)] + \frac{\epsilon}{|\mathcal{Y}|}.
\]
If we take $m = \max_{y \in \mathcal{Y}} m_y(\frac{\epsilon}{|\mathcal{Y}|}, \frac{\delta}{|\mathcal{Y}|}) = \Theta(\frac{|\mathcal{Y}|^2(d + log(|\mathcal{Y}|/\delta))}{\epsilon^2})$ than the above holds simultaneously for all $y \in \mathcal{Y}$ with probability higher than $1 - \delta$.

Now, we note that $\mathcal{L}_{recall}(h,e) = \max_{y \in \mathcal{Y}} \mathcal{L}_y(h,e)$,
therefore if $|S| > m$ we have with proaility at least $1 - \delta$
\[
\forall_{h \in \mathcal{H}} \ 
\mathop{\mathbbm{E}}_{e \sim D}[\mathbbm{1}_{\mathcal{L}_{recall}, \gamma}(h,e)] = \mathop{P}_{e \sim D}[\mathcal{L}_{recall}(h,e) > \gamma] \le 
\sum_{y \in \mathcal{Y}} \mathop{P}_{e \sim D}[\mathcal{L}_{y}(h,e) > \gamma] \le \]
\[ \le
\sum_{y \in \mathcal{Y}} 
\big( 
\frac{1}{|S|}\sum_{e \in S} 
[\mathbbm{1}_{\mathcal{L}_y, \gamma}(h,e)] + \frac{\epsilon}{|\mathcal{Y}|} 
\big) \le
\frac{1}{|S|}\sum_{e \in S}\sum_{y \in \mathcal{Y}}[\mathbbm{1}_{\mathcal{L}_{y}, \gamma}(h,e)] + \epsilon
\]
And finally, as 
\[ \mathbbm{1}_{\mathcal{L}_{recall}, \gamma}(h,e) = 0 \longrightarrow \forall_{y \in \mathcal{Y}}\mathbbm{1}_{\mathcal{L}_{y}, \gamma}(h,e) = 0\]
for any $h$ such that 
 \[ \forall_{e \in \mathcal{E}_{train}} \mathbbm{1}_{\mathcal{L}_{recall}, \gamma}(h,e) = 0 
 \]
 it also holds that 
 \[
 \mathop{\mathbbm{E}}_{e \sim D}[\mathbbm{1}_{\mathcal{L}_{recall}, \gamma}(h,e)] \le \frac{1}{|S|}\sum_{e \in S}\sum_{y \in \mathcal{Y}}[\mathbbm{1}_{\mathcal{L}_{y}, \gamma}(h,e)] + \epsilon \le 
 \frac{1}{|S|}\sum_{e \in S}\sum_{y \in \mathcal{Y}} 0 + \epsilon
 \le \epsilon
 \]

\end{proof}

\clearpage
\section{Sample Size Analysis}
\label{appendix sec: sample complexity}
In this section, we examine the sample size requirements for achieving generalization from training domains with finite samples to new, unseen domains. 
We start with some notations:
\begin{itemize}
    \item We recall the definition of performance indicator:$ 
    \mathbbm{1}_{\mathcal{L}, \gamma}(h,e) = \mathbbm{1}[\mathcal{L}(h,e) \ge \gamma]
    $
    \item We denote the training set,  collected from the training domains $\mathcal{E}_{train}$, as $S = \bigcup_{e \in \mathcal{E}_{train}}S_e$, where $S_e$ is the sample set from domain $e$.
\end{itemize}
\subsection{Sample Size Within Domains - General Case}
\label{appendix sec: sample complexity: general}

In this subsection we assume $\mathcal{L}(h,e) = \mathop{\mathbbm{E}}_{(x,y)\sim P_e}[l(h(x), y)]$ for some cost function $l$.
\begin{theorem}
        Let $\mathcal{H}$, $\mathcal{L}$, $\gamma$ follow the definitions from theorem \ref{VC-dim learnability} . Assume that:
        \begin{enumerate}
            \item $\mathcal{H}$ has the uniform-convergence property with respect to $\mathcal{L}, \gamma$ according to definition \ref{uniform convergence} with sample size $m(\delta, \epsilon)$.
            \item $\mathcal{H}$ has also the uniform-convergence property according to the classical definition (in a single domain setting) with sample size $n(\delta, \epsilon)$.
        \end{enumerate}
        For each $\epsilon_1, \epsilon_2, \delta > 0$, if 
        $|\mathcal{E}_{train}| \ge m(\frac{\delta}{2}, \epsilon_2) := m$, and
        $\forall e \in \mathcal{E}_{train} \ |S_e| \ge n(\frac{\delta}{2m}, \epsilon_1) := n$,
        than with probability higher than $1 - \delta$, and regardless of the distribution $D$ over $\mathcal{E}$ and all the distributions $P_e$ for $e \in \mathcal{E}_{train}$, it holds that:
        \[
        \forall {h \in \mathcal{H}} \quad
        [\forall_{e \in \mathcal{E}_{train}} \ 
        \frac{1}{|S_e|}\sum_{i \in S_e} l(h(X_i), y_i) \le \gamma ]
        \Longrightarrow
        \mathop{\mathbbm{E}}_{e \sim D}[\mathbbm{1}_{\mathcal{L}, \gamma + \epsilon_1}(h,e)] \le \epsilon_2.
        \]
\end{theorem}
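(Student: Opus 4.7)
The plan is to combine a within-domain concentration step with the hierarchical uniform-convergence step, tied together by a two-level union bound. The overall target is to show that low empirical per-sample loss on every training domain implies, with high probability, low expected value of $\mathbbm{1}_{\mathcal{L}, \gamma + \epsilon_1}$ under $D$.

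First I would use assumption (2), the classical single-domain uniform-convergence property. For each fixed $e \in \mathcal{E}_{train}$, since $|S_e| \ge n(\tfrac{\delta}{2m}, \epsilon_1)$, with probability at least $1 - \tfrac{\delta}{2m}$ over the draw of $S_e$, every $h \in \mathcal{H}$ satisfies
\[
\Bigl|\tfrac{1}{|S_e|}\sum_{i \in S_e} l(h(X_i), y_i) - \mathcal{L}(h, e)\Bigr| \le \epsilon_1.
\]
A union bound over the $m$ training domains yields that, with probability at least $1 - \tfrac{\delta}{2}$, this concentration holds simultaneously on every training domain. On this event, whenever the empirical loss is at most $\gamma$ on each training domain, we obtain $\mathcal{L}(h,e) \le \gamma + \epsilon_1$ for all $e \in \mathcal{E}_{train}$, hence $\mathbbm{1}_{\mathcal{L}, \gamma + \epsilon_1}(h,e) = 0$ for all such $e$ (using the convention $\mathbbm{1}_{\mathcal{L}, \gamma'}(h,e)=\mathbbm{1}[\mathcal{L}(h,e)>\gamma']$, or by a negligible adjustment of $\epsilon_1$).

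Next I would invoke assumption (1), the hierarchical uniform-convergence property, applied with threshold $\gamma + \epsilon_1$ and tolerance $(\tfrac{\delta}{2}, \epsilon_2)$. Since $|\mathcal{E}_{train}| \ge m(\tfrac{\delta}{2}, \epsilon_2)$, with probability at least $1 - \tfrac{\delta}{2}$ over the draw of $\mathcal{E}_{train}$,
\[
\sup_{h \in \mathcal{H}}\Bigl|\mathop{\mathbbm{E}}_{e \sim D}[\mathbbm{1}_{\mathcal{L}, \gamma + \epsilon_1}(h,e)] - \tfrac{1}{|\mathcal{E}_{train}|}\sum_{e \in \mathcal{E}_{train}}\mathbbm{1}_{\mathcal{L}, \gamma + \epsilon_1}(h,e)\Bigr| \le \epsilon_2.
\]
A final union bound over the two events, each of probability mass at most $\tfrac{\delta}{2}$, shows both hold jointly with probability at least $1-\delta$. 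On this joint event, the hypothesized empirical bound of $\gamma$ on each training domain's sample forces the empirical average of $\mathbbm{1}_{\mathcal{L}, \gamma + \epsilon_1}$ over $\mathcal{E}_{train}$ to be zero, so the expectation under $D$ is at most $\epsilon_2$, which is the desired conclusion.

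The main obstacle I anticipate is the mild mismatch between the threshold at which uniform convergence is assumed ($\gamma$) and the threshold at which it is applied ($\gamma + \epsilon_1$). This is the standard $\gamma$-robustness issue: the natural resolution is to read the uniform-convergence hypothesis as holding for any threshold with the same sample-complexity function $m(\cdot,\cdot)$ (which is how Theorem \ref{VC-dim learnability} in fact delivers it, via $VCdim_{\mathcal{L},\gamma'}$ for the relevant $\gamma'$), so one simply applies it at $\gamma + \epsilon_1$. Apart from this bookkeeping, the argument is essentially a clean two-level concentration plus union bound and should involve no heavy calculation.
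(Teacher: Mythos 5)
Your proposal is correct and follows essentially the same route as the paper's proof: a per-domain application of classical uniform convergence with a union bound over the $m$ training domains (budget $\delta/2$), followed by the hierarchical uniform-convergence property applied at the shifted threshold $\gamma+\epsilon_1$ (budget $\delta/2$), combined by a final union bound. The threshold-mismatch bookkeeping you flag is handled implicitly (and less explicitly) in the paper in exactly the way you suggest.
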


\begin{proof}
let $\epsilon_1, \epsilon_2, \delta > 0$ be positive numbers, and assume $S = \bigcup_{e \in \mathcal{E}_{train}}S_e$ such that $|\mathcal{E}_{train}| \ge m$ and for each $e \in \mathcal{E}_{train}$ $|S_e| \ge n$. \\
From the uniform-convergence of $\mathcal{H}$ in the classical, single-domain, sense, we know for each domain $e \in \mathcal{E}_{train}$ that with probability at least $1 - \frac{\delta}{2m}$ it holds that:
\[
\forall {h \in \mathcal{H}} \quad
\mathcal{L}(h,e) \le 
\frac{1}{|S_e|}\sum_{i \in S_e} l(h(X_i), y_i) + \epsilon_1
\]
And so,
\[
\forall {h \in \mathcal{H}} \quad
\frac{1}{|S_e|}\sum_{i \in S_e} l(h(X_i), y_i) \le \gamma 
\Longrightarrow
\mathcal{L}(h,e) \le \gamma + \epsilon_1
\Longrightarrow
\mathbbm{1}_{\mathcal{L}, \gamma + \epsilon_1}(h,e) = 0
\]
This is true for each $e \in \mathcal{E}_{train}$, therefore with probability at leat $1 - \frac{\delta}{2}$ it is true for all training domains at once:
\[
\forall {h \in \mathcal{H}} \quad
[
\forall_{e \in \mathcal{E}_{train}} \ 
\frac{1}{|S_e|}\sum_{i \in S_e} l(h(X_i), y_i) \le \gamma 
]
\Longrightarrow
\frac{1}{|\mathcal{E}_{train}|}\sum_{e \in \mathcal{E}_{train}}\mathbbm{1}_{\mathcal{L}, \gamma + \epsilon_1}(h,e) = 0
\]

From the uniform-convergence property of $\mathcal{H}$ in the OOD sense, we know that with probability at least $1 - \frac{\delta}{2}$ it holds that 
\[
\forall {h \in \mathcal{H}} \quad
\Big|
    \mathop{\mathbbm{E}}_{e \sim D}[\mathbbm{1}_{\mathcal{L}, \gamma + \epsilon_1}(h,e)] - \frac{1}{|\mathcal{E}_{train}|} \sum_{e \in \mathcal{E}_{train}} \mathbbm{1}_{\mathcal{L}, \gamma + \epsilon_1}(h,e)
\Big |
\le \epsilon_2.
\]
And it the case of $\frac{1}{|\mathcal{E}_{train}|}\sum_{e \in \mathcal{E}_{train}}\mathbbm{1}_{\mathcal{L}, \gamma + \epsilon_1}(h,e) = 0$ we get:
\[
\forall {h \in \mathcal{H}} \quad
\Big|
    \mathop{\mathbbm{E}}_{e \sim D}[\mathbbm{1}_{\mathcal{L}, \gamma + \epsilon_1}(h,e)] 
\Big |
\le \epsilon_2.
\]

Overall, we have shown that with probability at least $1 - \delta $:
\[
\forall {h \in \mathcal{H}} \quad
\forall_{e \in \mathcal{E}_{train}} \ 
\frac{1}{|S_e|}\sum_{i \in S_e} l(h(X_i), y_i) \le \gamma 
\Longrightarrow
\Big|
    \mathop{\mathbbm{E}}_{e \sim D}[\mathbbm{1}_{\mathcal{L}, \gamma + \epsilon_1}(h,e)] 
\Big |
\le \epsilon_2.
\]
\end{proof}

\subsection{Sample Size Within Domains - Recall Loss}
\label{appendix sec: sample complexity: recall}
Now we focus on $\mathcal{L}_{recall}$. We start by reminding its definition:
\[ \mathcal{L}_{recall}(h, e) = \max_{y \in \mathcal{Y}}P_e[y \notin h(X) | Y = y].\] \\
Now we also assume that $\mathcal{H}$ is an hypothesis set of set-prediction hypotheses, where each $h \in \mathcal{H}$ can be decomposed to $|\mathcal{Y}|$ binary classifiers $h_y$ as presented in the paper. We assume also that the $h_y$ binary classifiers come from some $\mathcal{H}^*$ hypothesis set. \\

The following result differs from that of section \ref{appendix sec: sample complexity: general} mainly because $\mathcal{L}_{recall}$ is not an expectation over some other loss $l$. The result we derive for $\mathcal{L}_{recall}$ is almost the same as in the previous section, with only one change: Instead of requiring a sample size of $n(\frac{\delta}{2m}, \epsilon_1)$ at each training domain, we need to require a sample size of $n(\frac{\delta}{2m|\mathcal{Y}|}, \epsilon_1)$ for each training domain and each label $y \in \mathcal{Y}$.
For completeness we present here the full result for $\mathcal{L}_{recall}$ and provide a full proof for it.

We add a single notation to this section: 
\[S_{e,y} = \{i \in S_e \ : \ Y_i = y \}\]
\begin{theorem}
        Let $\mathcal{H}$, $\mathcal{L}_{recall}$, $\gamma$ follow the definitions from the paper . Assume that:
        \begin{enumerate}
            \item $\mathcal{H}$ has the uniform-convergence property with respect to $\mathcal{L}_{recall}, \gamma$ according to definition \ref{uniform convergence} with sample size $m(\delta, \epsilon)$.
            \item $\mathcal{H}^*$ has the uniform-convergence property according to the classical definition (in a single domain setting) with sample size $n(\delta, \epsilon)$.
        \end{enumerate}
        For each $\epsilon_1, \epsilon_2, \delta > 0$, if 
        $|\mathcal{E}_{train}| \ge m(\frac{\delta}{2}, \epsilon_2) := m$, and
        $\forall e \in \mathcal{E}_{train} \forall y \in \mathcal{Y} \ |S_{e,y}| \ge n(\frac{\delta}{2m|\mathcal{Y}|}, \epsilon_1) := n$,
        than with probability higher than $1 - \delta$, and regardless of the distribution $D$ over $\mathcal{E}$ and all the distributions $P_e$ for $e \in \mathcal{E}_{train}$, it holds that:
        \[
        \forall {h \in \mathcal{H}} \quad
        \big[\forall_{e \in \mathcal{E}_{train}} \ 
        \forall_{y \in \mathcal{Y}} \
        \frac{1}{|S_{e,y}|}\sum_{i \in S_{e,y}}[1-h_y(X_i)] \ \le \gamma \big]
        \Longrightarrow
        \mathop{\mathbbm{E}}_{e \sim D}[\mathbbm{1}_{\mathcal{L}_{recall}, \gamma + \epsilon_1}(h,e)] \le \epsilon_2.
        \]
\end{theorem}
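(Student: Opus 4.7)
The plan is to adapt the proof of the general-case theorem from Section~\ref{appendix sec: sample complexity: general} by inserting an additional union bound over $\mathcal{Y}$, reflecting the fact that $\mathcal{L}_{recall}$ is a maximum over $|\mathcal{Y}|$ conditional miss rates rather than a single per-domain expectation. The argument again decomposes into a within-domain step (controlling each conditional miss rate on $S_{e,y}$) and a cross-domain step (extending from training to unseen domains), each absorbing half of the failure budget $\delta$.

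First I would fix a training domain $e$ and a label $y$ and invoke the classical uniform convergence of $\mathcal{H}^{*}$ on the conditional subsample $S_{e,y}$, of size $|S_{e,y}| \ge n(\delta/(2m|\mathcal{Y}|), \epsilon_1)$. This yields that, with probability at least $1 - \delta/(2m|\mathcal{Y}|)$,
\[
P_e[y \notin h(X) \mid Y=y] \le \frac{1}{|S_{e,y}|}\sum_{i \in S_{e,y}}[1 - h_y(X_i)] + \epsilon_1
\]
holds for every $h_y \in \mathcal{H}^{*}$, and hence for every $h \in \mathcal{H}$ via the decomposition $h(X) = \{y : h_y(X)=1\}$. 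A union bound over the $m|\mathcal{Y}|$ pairs $(e,y)$ then yields a simultaneous guarantee over all training domains and all labels with total failure probability at most $\delta/2$.

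On this good event, the hypothesis that the empirical per-pair miss rate is at most $\gamma$ for every $(e,y)$ implies that the true conditional miss rate is at most $\gamma + \epsilon_1$; taking the maximum over $y$ then gives $\mathcal{L}_{recall}(h,e) \le \gamma + \epsilon_1$ for every $e \in \mathcal{E}_{train}$, so the indicator $\mathbbm{1}_{\mathcal{L}_{recall}, \gamma + \epsilon_1}(h,e)$ vanishes on every training domain. Consequently the empirical domain-level average $\frac{1}{|\mathcal{E}_{train}|}\sum_{e \in \mathcal{E}_{train}} \mathbbm{1}_{\mathcal{L}_{recall}, \gamma + \epsilon_1}(h,e)$ equals zero for every $h$ satisfying the hypothesis.

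Finally I would invoke the meta-level uniform convergence of $\mathcal{H}$ with respect to $\mathcal{L}_{recall}$ at threshold $\gamma + \epsilon_1$ on the sample of $|\mathcal{E}_{train}| \ge m(\delta/2, \epsilon_2)$ domains, which gives with probability $\ge 1 - \delta/2$ that the empirical domain-level average of the performance indicator is within $\epsilon_2$ of its expectation; combined with the previous step via a union bound over the two $\delta/2$-failure events this delivers $\mathop{\mathbbm{E}}_{e \sim D}[\mathbbm{1}_{\mathcal{L}_{recall}, \gamma+\epsilon_1}(h,e)] \le \epsilon_2$ with probability at least $1 - \delta$. The only notational subtlety, mirroring the general-case proof, is that the uniform-convergence assumption on $\mathcal{H}$ is stated for threshold $\gamma$ but applied at $\gamma + \epsilon_1$; this is treated as the standing convention that the property holds at any fixed threshold of interest. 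Otherwise the argument is a routine two-level union bound whose only substantive departure from the general-loss version is the extra factor of $|\mathcal{Y}|$ in the per-pair sample-size requirement, arising precisely from unioning over labels, and I expect this union over $\mathcal{Y}$ to be the main (albeit mild) obstacle to clean bookkeeping.
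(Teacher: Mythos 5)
Your proposal is correct and follows essentially the same route as the paper's proof: a union bound over all $m|\mathcal{Y}|$ (domain, label) pairs using the classical uniform convergence of $\mathcal{H}^*$ to control each conditional miss rate within $\epsilon_1$, followed by the meta-level uniform convergence of $\mathcal{H}$ at threshold $\gamma+\epsilon_1$ over the $m$ training domains, with the failure budget split $\delta/2 + \delta/2$. The threshold subtlety you flag (the assumption is stated at $\gamma$ but invoked at $\gamma+\epsilon_1$) is present in the paper's proof as well and is handled there with the same implicit convention.
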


\begin{proof}

The main difference in this proof will be to show that with high probability 
\[
\forall {h \in \mathcal{H}} \quad
\mathcal{L}_{recall}(h,e) \le 
\max_{y \in \mathcal{Y}}\frac{1}{|S_{e,y}|}\sum_{i \in S_{e,y}} [1-h_{y}(X_i)] + \epsilon_1
\]

For completness we provide below the full proof.\\

Let $\epsilon_1, \epsilon_2, \delta > 0$ be positive numbers, and assume $S = \bigcup_{e \in \mathcal{E}_{train}}S_e$ such that $|\mathcal{E}_{train}| \ge m$ and $\forall e \in \mathcal{E}_{train} \forall y \in \mathcal{Y} \ |S_{e,y}| \ge n$. \\
From the uniform-convergence of $\mathcal{H}^*$ in the classical, single-domain, sense, we know for each domain $e \in \mathcal{E}_{train}$ and for each $y \in \mathcal{Y}$ that with probability at least $1 - \frac{\delta}{2m|\mathcal{Y}|}$ it holds that:
\[
\forall {h \in \mathcal{H}^*} \quad
P_e[h(X) \neq 1| Y = y] = \mathbbm{E}_e[1 - h(X) | Y = y] \le 
\frac{1}{|S_{e,y}|}\sum_{i \in S_{e,y}} [1-h(X_i)] + \epsilon_1
\]
This is true for each $y$ separately, so with probability at least $1 - \frac{\delta}{2m}$ this is true for all $y$ at once:
\[
\forall {h \in \mathcal{H}^*} \ \forall {y \in \mathcal{Y}} \quad
P_e[h(X) \neq 1| Y = y] \le 
\frac{1}{|S_{e,y}|}\sum_{i \in S_{e,y}} [1-h(X_i)] + \epsilon_1
\]

We note that the above is true for hypotheses from $\mathcal{H}^*$, which are binary classifiers.

Now, for a given $h \in \mathcal{H}$, which is a set-valued predictor, let 
\[
y' = \arg\max_{y \in \mathcal{Y}} P_e[y \notin h(X) | Y = y].
\]
Then, with probability at least $1 - \frac{\delta}{2m}$ it holds that: 
\[
\mathcal{L}_{recall}(h,e) = 
\max_{y \in \mathcal{Y}} P_e[y \notin h(X) | Y = y] =
P_e[h_{y'}(X) \neq 1 | Y = y']
\le
\]
\[
\le 
\frac{1}{|S_{e,y'}|}\sum_{i \in S_{e,y'}} [1-h_{y'}(X_i)] + \epsilon_1
\le 
\max_{y \in \mathcal{Y}} \frac{1}{|S_{e,y}|}\sum_{i \in S_{e,y}} [1-h_y(X_i)] + \epsilon_1
\]

Overall, we have shown that with probability at least $1 - \frac{\delta}{2m}$:

\[
\forall {h \in \mathcal{H}} \quad
\mathcal{L}_{recall}(h,e) \le 
\max_{y \in \mathcal{Y}} \frac{1}{|S_{e,y}|}\sum_{i \in S_{e,y}} [1-h_y(X_i)] + \epsilon_1
\]

And so,
\[
\forall {h \in \mathcal{H}} \quad
\max_{y \in \mathcal{Y}} \frac{1}{|S_{e,y}|}\sum_{i \in S_{e,y}} [1-h_y(X_i)] \le \gamma 
\Longrightarrow
\mathcal{L}_{recall}(h,e) \le \gamma + \epsilon_1
\Longrightarrow
\]
\[
\Longrightarrow
\mathbbm{1}_{\mathcal{L}_{recall}, \gamma + \epsilon_1}(h,e) = 0
\]
This is true for each $e \in \mathcal{E}_{train}$, therefore with probability at leat $1 - \frac{\delta}{2}$ it is true for all training domains at once:
\[
\forall {h \in \mathcal{H}} \quad
[
\forall_{e \in \mathcal{E}_{train}} \ 
\max_{y \in \mathcal{Y}} \frac{1}{|S_{e,y}|}\sum_{i \in S_{e,y}} [1-h_y(X_i)] \le \gamma 
]
\Longrightarrow
\frac{1}{|\mathcal{E}_{train}|}\sum_{e \in \mathcal{E}_{train}}\mathbbm{1}_{\mathcal{L}_{recall}, \gamma + \epsilon_1}(h,e) = 0
\]

From the uniform-convergence property of $\mathcal{H}$ in the OOD sense, we know that with probability at least $1 - \frac{\delta}{2}$ it holds that 
\[
\forall {h \in \mathcal{H}} \quad
\Big|
    \mathop{\mathbbm{E}}_{e \sim D}[\mathbbm{1}_{\mathcal{L}_{recall}, \gamma + \epsilon_1}(h,e)] - \frac{1}{|\mathcal{E}_{train}|} \sum_{e \in \mathcal{E}_{train}} \mathbbm{1}_{\mathcal{L}_{recall}, \gamma + \epsilon_1}(h,e)
\Big |
\le \epsilon_2.
\]
And it the case of $\frac{1}{|\mathcal{E}_{train}|}\sum_{e \in \mathcal{E}_{train}}\mathbbm{1}_{\mathcal{L}_{recall}, \gamma + \epsilon_1}(h,e) = 0$ we get:
\[
\forall {h \in \mathcal{H}} \quad
\Big|
    \mathop{\mathbbm{E}}_{e \sim D}[\mathbbm{1}_{\mathcal{L}_{recall}, \gamma + \epsilon_1}(h,e)] 
\Big |
\le \epsilon_2.
\]

Overall, we have shown that with probability at least $1 - \delta $:
\[
\forall {h \in \mathcal{H}} \quad
\forall_{e \in \mathcal{E}_{train}} \ 
\forall_{y \in \mathcal{Y}} \
        \frac{1}{|S_{e,y}|}\sum_{i \in S_{e,y}}[1-h_y(X_i)] \ \le \gamma 
\Longrightarrow
\mathop{\mathbbm{E}}_{e \sim D}[\mathbbm{1}_{\mathcal{L}_{recall}, \gamma + \epsilon_1}(h,e)] 
\le \epsilon_2.
\]
\end{proof}

\subsection{Sample Size Within Domains - Recall Loss With Linear Hypotheses}
\label{appendix sec: sample complexity: recall and linear}
finally, we show the sample complexity for $\mathcal{L}_{recall}$ when $\mathcal{H}$ is the set of linear hypotheses. 

\begin{theorem}
        Let $\mathcal{H}$ be the hypothesis set of linesr set predictors in $\mathbbm{R}^d$. Assume domains are restricted to being Conditionally Gaussian as described in theorem \ref{linear learnability}.\\
        For each $\epsilon_1, \epsilon_2, \delta > 0$, if 
        $|\mathcal{E}_{train}| \ge \Theta(\frac{|\mathcal{Y}|^2(d + log(2|\mathcal{Y}|/\delta))}{\epsilon_2^2}) := m$, and
        $\forall e \in \mathcal{E}_{train} \forall y \in \mathcal{Y} \ |S_{e,y}| \ge  \Theta(\frac{d + log(2m|\mathcal{Y}|/\delta))}{\epsilon_1^2})$,
        than with probability higher than $1 - \delta$, and regardless of the distribution $D$ over $\mathcal{E}$ and all the distributions $P_e$ for $e \in \mathcal{E}_{train}$, it holds that:
        \[
        \forall {h \in \mathcal{H}} \quad
        \big[\forall_{e \in \mathcal{E}_{train}} \ 
        \forall_{y \in \mathcal{Y}} \
        \frac{1}{|S_{e,y}|}\sum_{i \in S_{e,y}}[1-h_y(X_i)] \ \le \gamma \big]
        \Longrightarrow
        \mathop{\mathbbm{E}}_{e \sim D}[\mathbbm{1}_{\mathcal{L}_{recall}, \gamma + \epsilon_1}(h,e)] \le \epsilon_2.
        \]
\end{theorem}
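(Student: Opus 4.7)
The plan is to obtain the result as a direct corollary of the general recall-loss sample complexity theorem in Appendix~\ref{appendix sec: sample complexity: recall}, by plugging in the two uniform convergence ingredients required there, each specialized to the linear-hypotheses / conditionally Gaussian setting. Specifically, the general theorem requires (i) an OOD uniform convergence sample size $m(\delta,\epsilon)$ for $\mathcal{H}$ with respect to $\mathcal{L}_{recall},\gamma$, and (ii) a classical (single-domain) uniform convergence sample size $n(\delta,\epsilon)$ for the per-label binary-classifier class $\mathcal{H}^*$. I will supply both, and then just propagate the $\Theta(\cdot)$ rates.

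For ingredient~(i), I invoke Theorem~\ref{linear learnability}: under the conditionally Gaussian restriction on $\mathcal{E}$, the linear set-predictor class has the OOD uniform convergence property of Definition~\ref{uniform convergence} with sample size
\[
m(\delta,\epsilon) \;=\; \Theta\!\left(\frac{|\mathcal{Y}|^2\bigl(d+\log(|\mathcal{Y}|/\delta)\bigr)}{\epsilon^{2}}\right).
\]
For ingredient~(ii), $\mathcal{H}^*$ is the class of linear binary classifiers in $\mathbb{R}^{d}$, which has classical VC-dimension $d+1$. By the standard fundamental theorem of statistical learning, this class has the (single-domain) uniform convergence property with sample size
\[
n(\delta,\epsilon) \;=\; \Theta\!\left(\frac{d+\log(1/\delta)}{\epsilon^{2}}\right).
\]

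Now I apply the recall-loss theorem of Appendix~\ref{appendix sec: sample complexity: recall} with parameters $\epsilon_1,\epsilon_2,\delta$. That theorem asks for $|\mathcal{E}_{train}| \ge m(\delta/2,\epsilon_2)$ and $|S_{e,y}| \ge n\bigl(\delta/(2m|\mathcal{Y}|),\epsilon_1\bigr)$ for every training domain $e$ and label $y$. Substituting the two rates above gives exactly
\[
|\mathcal{E}_{train}| \;\ge\; \Theta\!\left(\frac{|\mathcal{Y}|^{2}\bigl(d+\log(2|\mathcal{Y}|/\delta)\bigr)}{\epsilon_2^{2}}\right) \;=\; m,
\]
\[
|S_{e,y}| \;\ge\; \Theta\!\left(\frac{d+\log(2m|\mathcal{Y}|/\delta)}{\epsilon_1^{2}}\right),
\]
matching the hypotheses of the statement. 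The conclusion of the appendix theorem is then precisely the implication claimed here, so the result follows.

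The main obstacle is a bookkeeping one rather than a conceptual one: verifying that Theorem~\ref{linear learnability} indeed supplies uniform convergence in the sense of Definition~\ref{uniform convergence} (and not merely a one-sided realizable-case bound), so that it plugs cleanly into the general recall-loss theorem. Reading the proof in Appendix~\ref{appendix sec: linear learnability}, one sees that it establishes a finite VC-dimension via Lemma~\ref{linear shattering} and then appeals to Theorem~\ref{VC-dim learnability}, which yields the full two-sided uniform convergence property with the quoted rate. Hence the plug-in is legitimate, and no additional work beyond the union-bound arithmetic above is needed.
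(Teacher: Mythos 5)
Your proposal follows essentially the same route as the paper's proof: classical VC theory for linear binary classifiers (VC-dimension $d+1$) to handle the within-domain sample size, the argument of the recall-loss theorem in Appendix~\ref{appendix sec: sample complexity: recall} to pass from empirical per-label recall to the per-domain indicator, and Theorem~\ref{linear learnability} for the domain-level step, with the two failure probabilities combined by splitting $\delta$ in half. One point needs correcting, and it is exactly the spot you flagged as the main obstacle: Theorem~\ref{linear learnability} does \emph{not} supply the two-sided uniform convergence property of Definition~\ref{uniform convergence} for the set-valued class $\mathcal{H}$ with respect to $\mathcal{L}_{recall},\gamma$. Its proof establishes uniform convergence only per label (for the binary classes $\mathcal{H}_y$ with respect to $\mathcal{L}_y,\gamma$) and then combines these via $\mathbbm{1}[\max_y \mathcal{L}_y > \gamma] \le \sum_y \mathbbm{1}[\mathcal{L}_y > \gamma]$, which yields a one-sided bound whose right-hand side involves the empirical sum of per-label indicators rather than the empirical average of $\mathbbm{1}_{\mathcal{L}_{recall},\gamma}$; it is useful only when that sum vanishes, i.e., in the realizable case. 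So assumption~1 of the Appendix~\ref{appendix sec: sample complexity: recall} theorem is not literally verified by Theorem~\ref{linear learnability}. This turns out to be harmless: the proof of that appendix theorem only ever uses the one-sided, realizable-case direction of assumption~1, and that is precisely the implication Theorem~\ref{linear learnability} provides. The paper's own proof sidesteps the issue by invoking Theorem~\ref{linear learnability}'s conclusion directly rather than packaging it as uniform convergence. Your plug-in therefore goes through, but you should cite the realizable-case implication of Theorem~\ref{linear learnability} rather than asserting the full two-sided property.
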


\begin{proof}

Let $\epsilon_1, \epsilon_2, \delta > 0$ be positive numbers, and assume $S = \bigcup_{e \in \mathcal{E}_{train}}S_e$ such that $|\mathcal{E}_{train}| \ge \Theta(\frac{|\mathcal{Y}|^2(d + log(2|\mathcal{Y}|/\delta))}{\epsilon_2^2})$ and $\forall e \in \mathcal{E}_{train} \forall y \in \mathcal{Y} \ |S_{e,y}| \ge \Theta(\frac{d + log(2m|\mathcal{Y}|/\delta))}{\epsilon_1^2})$.

In the classical, single-domain context, linear hypotheses have $VC-dim = d+1$, and they hold the uniform-convergence property with $n(\delta, \epsilon) = \Theta(\frac{d + log(1/\delta))}{\epsilon^2})$ \citep{10.5555/2621980}. 

It holds that:
\[
\forall e \in \mathcal{E}_{train} \forall y \in \mathcal{Y} \quad 
|S_{e,y}| \ge  \Theta(\frac{d + log(2m|\mathcal{Y}|/\delta))}{\epsilon_1^2})
\Longrightarrow
|S_{e,y}| \ge n(\frac{\delta}{2m|\mathcal{Y}|}, \epsilon_1)
\]
Following the exact same steps from the proof of the previous section, we can derive that with probability at leat $1 - \frac{\delta}{2}$:
\[
\forall {h \in \mathcal{H}} \quad
[
\forall_{e \in \mathcal{E}_{train}} \ 
\max_{y \in \mathcal{Y}} \frac{1}{|S_{e,y}|}\sum_{i \in S_{e,y}} [1-h_y(X_i)] \le \gamma 
]
\Longrightarrow
\forall e \in \mathcal{E}_{train}\mathbbm{1}_{\mathcal{L}_{recall}, \gamma + \epsilon_1}(h,e) = 0
\]

Now, from theorem \ref{linear learnability} we know that  with probability at leat $1 - \frac{\delta}{2}$:
\[
\forall {h \in \mathcal{H}} \quad
\forall e \in \mathcal{E}_{train}\mathbbm{1}_{\mathcal{L}_{recall}, \gamma + \epsilon_1}(h,e) = 0
\Longrightarrow
\mathop{\mathbbm{E}}_{e \sim D}[\mathbbm{1}_{\mathcal{L}_{recall}, \gamma + \epsilon_1}(h,e)] \le \epsilon_2.
\]

Together, we get that with probability at least $1 - \delta$:
\[
\forall {h \in \mathcal{H}} \quad
[
\forall_{e \in \mathcal{E}_{train}} \ 
\max_{y \in \mathcal{Y}} \frac{1}{|S_{e,y}|}\sum_{i \in S_{e,y}} [1-h_y(X_i)] \le \gamma 
]
\Longrightarrow
\mathop{\mathbbm{E}}_{e \sim D}[\mathbbm{1}_{\mathcal{L}_{recall}, \gamma + \epsilon_1}(h,e)] \le \epsilon_2.
\]

\end{proof}

\section{Developing set-size optimization method}
    \label{appendix sec: optimization problem}

In section \ref{sec:optimized method} we presented our learning objective as a constrained optimization problem for each $y \in \mathcal{Y}$ separately:

\begin{align}\label{first_prob}
    \nonumber
    & \min_{h \in \mathcal{H}} \ \frac{1}{N}\sum_{i \in S}h_y(X_i)   \\ 
    \nonumber
    & \text{s.t} \quad
      \forall e \in \mathcal{E}_{train} \frac{1}{|G_{e,y}|}\sum_{i \in G_{e,y}}\mathbbm{1}[h_y(X_i) = 0] \le \gamma.
\end{align}
This problem can also be written as:
\begin{equation}\label{appendix: opt problem}
\begin{aligned}
    & \min_{h \in \mathcal{H}} \ \frac{1}{N}\sum_{i = 1}^Nh_y(X_i) \\ 
    \text{s.t} \quad
    & \forall e \in \mathcal{E}_{train} \frac{1}{|G_{e,y}|}\sum_{i \in G_{e,y}}\big(1-h_y(X_i)\big) \le \gamma.
\end{aligned}
\end{equation}

We now investigate the problem and further develop it, until we get a lagrangian that approximates the original problem, and which is possible to optimize using common methods.  



\subsection{Relaxed Problem}

In order to leverage common gradient-based optimization methods, we parameterize our predictors as mentioned in section \ref{sec:optimized method} and present slack variables to form the following relaxed problem:

\begin{equation}\label{relaxed}
\begin{aligned}
 \min_{\theta} \quad &\frac{1}{N}\sum_i\xi_i \\
 \text{s.t} \quad  
& \forall i \quad h^{\theta}_y(X) \le -1 + \xi_i \\
& \forall i\in G_y \quad h^{\theta}_y(X) \ge 1 - \zeta_i\\
& \forall e \in E \quad \frac{1}{G_{e,y}}\sum_{i \in G_{e,y}}\zeta_i \le \epsilon \\
& \forall i \quad \zeta_i \ge 0 \\
& \forall i \quad \xi_i \ge 0 \\
\end{aligned}
\end{equation}

\begin{theorem}
The objective of the relaxed problem (\ref{relaxed}) is a surrogate for the objective of the original problem (\ref{appendix: opt problem})
\end{theorem}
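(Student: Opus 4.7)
The plan is to identify the relaxed objective as the hinge-loss surrogate of the original $0/1$-style objective, and then verify pointwise that the hinge loss upper-bounds the indicator, so that minimizing the relaxed objective necessarily controls the original one. This is the standard surrogate argument familiar from SVM-style derivations, adapted here to our parameterization $h_y(X) = \mathbbm{1}[h^{\theta}_y(X) \ge 0]$.

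Concretely, I would first eliminate the slack variables $\xi_i$. In problem~\eqref{relaxed}, each $\xi_i$ appears only in the constraints $\xi_i \ge 0$ and $\xi_i \ge 1 + h^{\theta}_y(X_i)$ and is being minimized, so at any optimum of the inner minimization (for fixed $\theta$) we have
\[
\xi_i^{\star}(\theta) = \max\bigl\{0,\, 1 + h^{\theta}_y(X_i)\bigr\}.
\]
Thus the relaxed objective as a function of $\theta$ reduces to $\frac{1}{N}\sum_{i} \max\{0, 1 + h^{\theta}_y(X_i)\}$, which is precisely the hinge-type loss that already appears in problem~\eqref{final_prob}.

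Next, I would establish the pointwise inequality
\[
\max\bigl\{0,\, 1 + h^{\theta}_y(X_i)\bigr\} \;\ge\; \mathbbm{1}\!\left[h^{\theta}_y(X_i) \ge 0\right] \;=\; h_y(X_i),
\]
by splitting on the sign of $h^{\theta}_y(X_i)$. If $h^{\theta}_y(X_i) \ge 0$, then $1 + h^{\theta}_y(X_i) \ge 1 = h_y(X_i)$, and the hinge is at least $1$. If $h^{\theta}_y(X_i) < 0$, then $h_y(X_i) = 0$ while the hinge is always nonnegative. Summing and dividing by $N$ gives $\frac{1}{N}\sum_i \xi_i^{\star}(\theta) \ge \frac{1}{N}\sum_i h_y(X_i)$, which is the surrogate property: the relaxed objective is a valid upper bound on the objective of~\eqref{appendix: opt problem} for every parameter $\theta$, and is driven to zero exactly when all points are classified with margin at least one in the correct direction.

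The only subtle point, which I would flag but not belabor, is the boundary case $h^{\theta}_y(X_i) = 0$: depending on convention $h_y(X_i)$ equals $1$ there, and the hinge equals $1$, so the inequality still holds at the boundary. Beyond this, the argument is entirely routine, so the main obstacle is mostly one of exposition: pinning down exactly what "surrogate" means in the paper's usage (upper bound that coincides with the target objective at its optimum, or at least tightly dominates it) and tying the reduction back to the form of the Lagrangian actually optimized in Section~\ref{sec:optimized method}, so the reader sees that the practical loss used in SET-COVER inherits this surrogate guarantee.
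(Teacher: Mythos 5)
Your proposal is correct and follows essentially the same route as the paper: both arguments reduce to the pointwise domination $\xi_i \ge h_y(X_i)$ forced by the constraint $\xi_i \ge \max\{0,\, 1 + h^{\theta}_y(X_i)\}$, split on the sign of $h^{\theta}_y(X_i)$. Your explicit elimination of the slack variables and handling of the boundary case $h^{\theta}_y(X_i)=0$ is, if anything, slightly more careful than the paper's version, which states the case split as a (not-quite-accurate) pair of equivalences rather than inequalities.
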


\begin{proof}
From the first constraint of (\ref{relaxed}) we have for each $\theta$ 
\[
\xi_i \geq 1 \iff h^{\theta}_y(X_i) \ge 0 \iff h_y(x_i) = 1 
\]
\[
 0 \leq \xi_i < 1 \iff h^{\theta}_y(X_i) < 0 \iff h_y(x_i) = 0
\]
Thus, $\xi_i \ge h_y(X_i)$, making $\sum_{i} \xi_i$ a surrogate loss for $\sum_{i}h_y(X_i)$. \\ \\

\end{proof}

\begin{theorem}
    The recall constraint of the relaxed problem (\ref{relaxed}) is a surrogate for the recall constraint of the original problem \ref{appendix: opt problem})
\end{theorem}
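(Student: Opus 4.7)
The goal is to show that the constraint $\frac{1}{|G_{e,y}|}\sum_{i\in G_{e,y}} \zeta_i \le \epsilon$ in (\ref{relaxed}), together with the auxiliary constraints defining the $\zeta_i$'s, is more restrictive than the original recall constraint in (\ref{appendix: opt problem}) once we identify $\epsilon$ with $\gamma$. Concretely, I plan to show the pointwise inequality $\zeta_i \ge 1 - h_y(X_i)$ for every $i \in G_y$, since averaging this over $G_{e,y}$ then gives
\[
\gamma \ \ge\ \frac{1}{|G_{e,y}|}\sum_{i\in G_{e,y}} \zeta_i \ \ge\ \frac{1}{|G_{e,y}|}\sum_{i\in G_{e,y}} \bigl(1-h_y(X_i)\bigr),
\]
which is exactly the original recall constraint. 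So the only real content of the proof is the pointwise bound on $\zeta_i$.

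To establish $\zeta_i \ge 1 - h_y(X_i)$, the plan is a short case analysis on the binary value $h_y(X_i)\in\{0,1\}$, using the two relevant constraints of (\ref{relaxed}): $\zeta_i \ge 0$ and $h_y^\theta(X_i) \ge 1 - \zeta_i$, i.e., $\zeta_i \ge 1 - h_y^\theta(X_i)$. Recall that by the parameterization, $h_y(X_i)=1 \iff h_y^\theta(X_i)\ge 0$. In the case $h_y(X_i)=1$, the right-hand side $1-h_y(X_i)=0$ and the bound follows from $\zeta_i\ge 0$. In the case $h_y(X_i)=0$, we have $h_y^\theta(X_i)<0$, hence $\zeta_i \ge 1 - h_y^\theta(X_i) > 1 = 1-h_y(X_i)$. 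Either way, the desired inequality holds.

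Combining the two pieces then yields that every feasible $\theta$ (together with its associated $\zeta$'s) for problem (\ref{relaxed}) with $\epsilon=\gamma$ induces a predictor $h_y$ that is feasible for the recall constraint of (\ref{appendix: opt problem}). This is exactly the sense in which the relaxed recall constraint is a valid surrogate for the original one. There is no serious obstacle here: the argument mirrors the structure of the preceding theorem about the objective being a surrogate, with the sign direction on $h_y^\theta$ swapped (it is the standard hinge-loss-style relaxation of an indicator), and the only subtlety worth flagging is that the bound $\zeta_i\ge 1-h_y(X_i)$ relies crucially on the non-negativity constraint $\zeta_i\ge 0$ to handle the correctly-classified case, which should be stated explicitly in the write-up.
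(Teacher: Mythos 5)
Your proof is correct and follows essentially the same route as the paper: both establish the pointwise bound $\zeta_i \ge 1 - h_y(X_i)$ for $i \in G_y$ (using the constraint $\zeta_i \ge 1 - h^{\theta}_y(X_i)$ together with $\zeta_i \ge 0$) and then average over $G_{e,y}$. Your version is if anything slightly more careful, since you make the role of the non-negativity constraint and the final averaging step explicit where the paper leaves them implicit.
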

\begin{proof}
From the second constraint of (\ref{relaxed}) we have for $i \in G_y$:

\begin{align}
\nonumber
&\zeta_i \geq 1 \iff h^{\theta}_y(X_i) \le 0  \iff h_y(x_i) = 0 \iff 1 - h_Y(X_i) = 1  \\
\nonumber
& 0 \leq \zeta_i < 1 \iff h^{\theta}_y(X_i) > 0 \iff h_y(x_i) = 1 \iff 1 - h_Y(X_i) = 0
\end{align}

Thus, $\zeta_i \ge 1 - h_Y(X_i)$ for $i \in G_y$, making $\frac{1}{|G_{e,y}|} \sum_{i \in G_{e,1}} \zeta_i$ a surrogate loss for $\frac{1}{|G_{e,y}|} \sum_{i \in G_{e,y}}1 - h_Y(X_i)$.
\end{proof}

\subsection{Lagrangian Form}

Solving the relaxed problem for $\xi_i$ and $\zeta_i$ we get:
\begin{align}
\nonumber
&\xi_i = max[0, 1 + h^{\theta}_y(X_i)] \\
\nonumber
&\zeta_i = max[0, 1 - h^{\theta}_y(X_i)]
\end{align}
Where the expression for $\zeta_i$ is one of the optimal solutions for it (there are additional optimal solutions).\\
Substituting these into the optimization problem (\ref{relaxed}) yields:
\begin{equation}
\begin{aligned}
\label{hinge loss}
\min_{\theta} \quad & \sum_{i} max[0, 1 + h^{\theta}_y(X_i)] \\ 
\text{s.t} \quad
& \forall e \in E \quad \frac{1}{|G_{e,1}|}\sum_{i \in G_{e,y}}max[0, 1 - h^{\theta}_y(X_i)] \le \gamma 
\end{aligned}
\end{equation}

This problem can be optimized using common methods like Stochastic Gradient Descent (SGD) by defining the lagrangian:
\[
L(\theta, C) = 
\sum_{i} max[0, 1 + h^{\theta}_y(X_i)] + 
\sum_{e \in E_{train}}C_{e,y}(\frac{1}{|G_{e,y}|}\sum_{i \in G_{e,y}}max[0, 1 - h^{\theta}_y(X_i)] - \gamma)
\]
Solving $\min_{\theta} \max_C L(\theta, C)$ will give $\theta$ that solves the constrained optimization problem \ref{hinge loss}. common approach is to use gradient descent on $\theta$, and gradient ascent on C. This is being done by SET-COVER, as described in section \ref{sec:optimized method} \\ \\

\section{SET-COVER Pseudo Code}
    \label{appendix sec: set-cover-algo}
    \begin{algorithm}[H]
\caption{SET-COVER}\label{alg}
\begin{algorithmic}
\STATE Initialize $\theta, C$

\FOR{i from 1 to NUM EPOCHS}
\FOR{$b$ in batches}
     \STATE Call \textsc{compute $L_y(\theta, C)$}
    \STATE $ L(\theta, C) = \sum_{y \in \mathcal{Y}}L_y(\theta, C)$
    \STATE perform GD step for $\theta$ with respect to $L(\theta, C)$
    \STATE \textbf{IF} i \% C UPDATE FREQUENCY == 0 \textbf{do}
    \STATE \quad Call \textsc{update\_c}

\ENDFOR
\STATE Call \textsc{update\_c}
\ENDFOR

\STATE \textbf{Subroutine:} \textsc{compute $L_y(\theta, C)$}
\vspace{-2em}
\STATE \begin{align*}
        L_y(\theta, C) = &\sum_{i \in b} 1_{Y_i \neq y} \max\{0, 1 + h^{\theta}_y(X_i)\} + \\
        & 1_{Y_i=y} \cdot C_{e_i,y} \cdot \max\{0, 1 - h^{\theta}_y(X_i)\}
            \end{align*}

\STATE \textbf{Subroutine:} \textsc{update\_c}
\STATE \quad \textbf{for} {$ e \in \mathcal{E}_{train}$} \textbf{do}
\STATE \quad \quad \textbf{for} {$ y \in \mathcal{Y}$} \textbf{do}
\STATE \quad \quad \quad $\text{coverage} \gets \frac{1}{|G_{e,y}|}\sum_{i \in G_{e,y}}\mathbbm{1}[h^{\theta}_y(X_i) > 0]$
\STATE \quad \quad \quad $\nu \gets 1 - (\text{coverage}_{e,y} - (1 - \gamma)) $
\STATE \quad \quad \quad $s \gets 2 \quad \text{if} \ \nu > 1 \quad \text{else} \ 1 $
\STATE \quad \quad \quad $C_{e,y} \gets C_{e,y}\cdot s \cdot \nu$
\STATE \quad \quad \textbf{end for}
\STATE \quad \textbf{end for}

\end{algorithmic}

\end{algorithm}

\section{Experiments}
    \subsection{Experiments Hyper-Parameters}
\label{appendix sec: exp parameters}
\begin{table}[h!]
    \centering
    \caption{hyper-parameters used for our experiments}
    \label{table: exp params}
    \small
    \begin{tabular}{c|ccccc}
        \textbf{Hyper-Parameter} & \textbf{Camelyon} & \textbf{Fmow} & \textbf{Iwildcam} & \textbf{Amazon} & \textbf{Synthetic} \\
        \hline
        \text{Batch Size} 
            & 128 & 64 & 64 & 128 & 128 \\
        \text{Learning Rate} 
            & 0.001 & 0.001 & 0.001 & 0.001 & 0.001 \\
        \text{Number of Epochs} 
            & 5 & 5 & 5 & 5 & 30 \\
        \text{Number of train domains} 
            & 20 & 20 & 80 & 500 & 25 \\
        \text{Number of test domains} 
            & 20 & 18 & 40 & 100 & 25 \\
        \text{Max train domain size} 
            & 6,000 & 4,500 & 3,000 & 1,000 & 2,000 \\
        \text{Max test domain size} 
            & 2,000 & 3,000 & 1,000 & 1,000 & 1,000 \\
        \textbf{Relevant for SET-COVER:} & & & & \\
        \text{Initial C value} 
            & 5 & 5 & 5 & 5 &5 \\
        \text{Frequency of C values update} 
            & 500 & 500 & 500 & 500 & 500 \\

    \end{tabular}
\end{table}

\begin{table}[h!]
    \centering
    \caption{Hidden dimension of 2-layer MLP (in the relevant experiments)}
    \label{table: mlp hidden dims}
    \small
    \begin{tabular}{c|ccc}
        \textbf{} & {Synthetic, d=10} & {Synthetic, d=50} & {Amazon} \\
        \hline
        \text{Hidden dim} 
            & 5 & 25 & 10  
    \end{tabular}
\end{table}

\subsection{Synthetic Data Generarion Process Parameters}
\label{appendix sec: synthetic dgp parameters}
We set $\Sigma$ to be a diagonal covariance matrix with $\sigma$ value on the diagonal. $\sigma$ is detailed in table \ref{table: synthetic dgp} below.\\
Also, we set $\nu \in \mathbbm{R}^d$ to be a concatenation of two vectors $\nu_1, \nu_2 \in \mathbbm{R}^{0.5d}$, i.e $\nu = (\nu_1, \nu_2)$. $\nu_1, \nu_2$ are detailed in table \ref{table: synthetic dgp} below.

\begin{table}[bh]
    \centering
    \caption{Synthetatic data generation parameters.
    }
    \label{table: synthetic dgp}
    \small
    \begin{tabular}{c|cccccc}
        \textbf{dimension d} & \textbf{$u_{low}$} &
        \textbf{$u_{high}$} &
        \textbf{$\mu \in \mathbbm{R}^d$} &
        \textbf{$\nu_1 \in \mathbbm{R}^{0.5d}$} &
        \textbf{$\nu_2 \in \mathbbm{R}^{0.5d}$} &
        \textbf{$\sigma$}
        \\
        \hline
        \text{$d=10$} 
            & -0.5 & 0.5 & (0.1, ... , 0.1) & (1, ... , 1) & (-1, ... , -1) & 0.2 \\
        \text{$d=50$} 
            & -0.3 & 0.3 & (0.05, ... , 0.05) & (1, ... , 1) & (-1, ... , -1) & 0.25
    \end{tabular}
\end{table}

\subsection{Additional Synthetic Data Experiments} \label{apendix sec: more synthetic exp}
In Theorem \ref{linear learnability} we have shown a theoretical generalization result, but under the limitation of shared covariance structure across domains (up to a scaling factor). Our results in the synthetic data experiment, presented in section \ref{sec: synthetic exp} empirically support this result. 
In this section we want to test whether the generalization to new domains can hold also in DGPs where the covariance between domains does not share exactly same structure.
To this end, we recall the DGP presented in section \ref{sec: synthetic exp}:
\begin{align*}
&Z_e \sim U[u_{\text{low}}, u_{\text{high}}] \\
&Y \sim Bernoulli(0.5) \\
&X \sim Y(\mu + Z_e \nu) + N(0, \Sigma) 
\end{align*}
In the following experiment we change the covariance matrix to be domain-specific in the following way:
\begin{enumerate}
    \item We sample for each domain a diagonal matrix, $D_e$, with diagonal values sampled from a normal distribtuion with $\mu = \sigma$ and $std = 0.05$ (this process generates std values, which are than squared to form the diagonal values of $D_e$). $\sigma$ values are the same as set in the original experiment from section \ref{sec: synthetic exp}.
    \[
    D_e = diag([D_{e,1}^2, ..., D_{e,d}^2]
    \]
    \[
    \forall 1 \le i \le d \quad D_{e,i} \sim N(\sigma, 0.05)
    \]
    \item For each domain we sample uniformly a rotation matrix $Q_e$.
    \item For each domain we set the covaraince matrix $\Sigma_e = Q_e^TD_eQ_e$
\end{enumerate}

All other experiments' hyper-parameters are the same as the original experiment from section \ref{sec: synthetic exp}.
The results are presented in Figure \ref{appendix: synthetic exp: cross} and Table \ref{appendix: synthetic exp: table}.

 \begin{figure}[!htb]
    \centering
    \begin{subfigure}{0.48\columnwidth}
        \includegraphics[width=\linewidth]{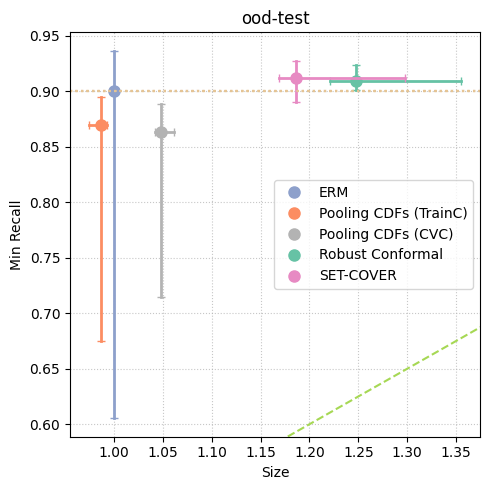}
        \caption{50d}
        \label{appendix: synthetic exp: cross: 50}
    \end{subfigure}
    \begin{subfigure}{0.48\columnwidth}
        \centering
        \includegraphics[width=\linewidth]{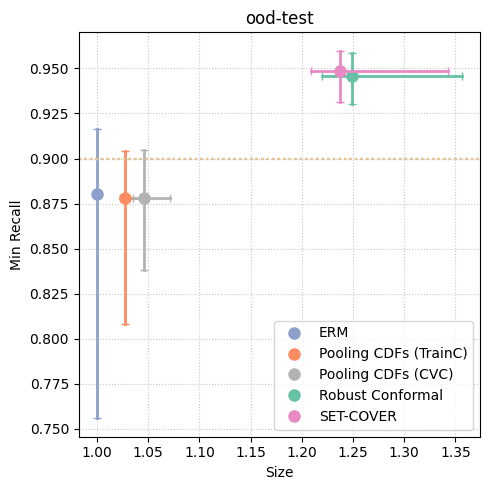}
        \caption{10d}
        \label{appendix: synthetic exp: cross: 10}
    \end{subfigure}
    \vspace{.3in}
    \caption{Min Recall distribution VS Mean Set Size distribution. \textbf{Blue} represents ERM model, \textbf{Orange} represents Pooling CDFs (TrainC), \textbf{Grey} represents Pooling CDFs (CVC), \textbf{Green} represents robust conformal, and \textbf{Pink} represents SET-COVER. The horizontal solid line represents the 90\% recall target value.}
    \label{appendix: synthetic exp: cross}
\end{figure}

\begin{table}[H]
    \centering
    \footnotesize
    \caption{OOD Performance on synthetic Datasets with Random Covariance}
    \label{appendix: synthetic exp: table}
    \begin{tabular}{@{}l@{}|ccc|ccc}
        \multirow{2}{*}{\textbf{Model}} 
        & \multicolumn{3}{c}{\textbf{10d}}
        & \multicolumn{3}{c}{\textbf{50d}}\\ 
        
    & \begin{tabular}[c]{@{}c@{}}Median \\ Min Recall $\uparrow$ \end{tabular}
    & \begin{tabular}[c]{@{}c@{}}Median\\ Avg Size $\downarrow$\end{tabular} 
    & \begin{tabular}[c]{@{}c@{}}Recall $\ge 90\%$\\ Pctg $\uparrow$\end{tabular}
    & \begin{tabular}[c]{@{}c@{}}Median \\ Min Recall $\uparrow$ \end{tabular}
    & \begin{tabular}[c]{@{}c@{}}Median\\ Avg Size $\downarrow$\end{tabular} 
    & \begin{tabular}[c]{@{}c@{}}Recall $\ge 90\%$\\ Pctg $\uparrow$\end{tabular} \\
    \hline
    \textbf{ERM} 
        & 0.88 & 1.0 & 0.39 
        & 0.90 & 1.0 & 0.52 \\
    \begin{tabular}[c]{@{}l@{}}\textbf{CDF Pooling-}\\ \textbf{(TrainC)} \end{tabular}
        & 0.87 & 1.02 & 0.30 
        & 0.86 & 0.98 & 0.19 \\
    \begin{tabular}[c]{@{}l@{}}\textbf{CDF Pooling-}\\ \textbf{(CVC)} \end{tabular}
        & 0.87 & 1.04 & 0.32 
        & 0.86 & 1.04 & 0.27 \\
    \begin{tabular}[c]{@{}l@{}}\textbf{Robust-}\\ \textbf{Conformal} \end{tabular}
        & 0.94 & 1.24 & 0.94 
        & 0.90 & 1.24 & 0.71 \\
    \textbf{SET-COVER} 
        & 0.94 & 1.23 & 0.92 
        & 0.91 & 1.18 & 0.68 \\
    \end{tabular}
\end{table}

\clearpage

\subsection{Exploring Different $\gamma$ Values}
The $\gamma$ parameter sets the desired recall level, and is supposed to be set in practice by practitioners according to task requirement. In our main experiments, which are presented at the body of this work, we targeted at a 0.9 recall level, which is associated with $\gamma = 0.1$. In this subsection we present results also for targeted recall levels of 0.8 and 0.95.

\subsubsection{\textbf{0.8 Recall}}

 \begin{figure}[H]
    \centering
    \begin{subfigure}{0.48\columnwidth}
        \includegraphics[width=\linewidth]{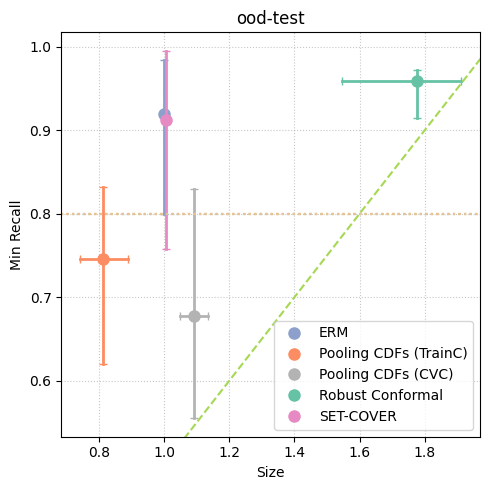}
        \caption{Camelyon}
        \label{gamma:080:camelyon}
    \end{subfigure}
    \begin{subfigure}{0.48\columnwidth}
        \centering
        \includegraphics[width=\linewidth]{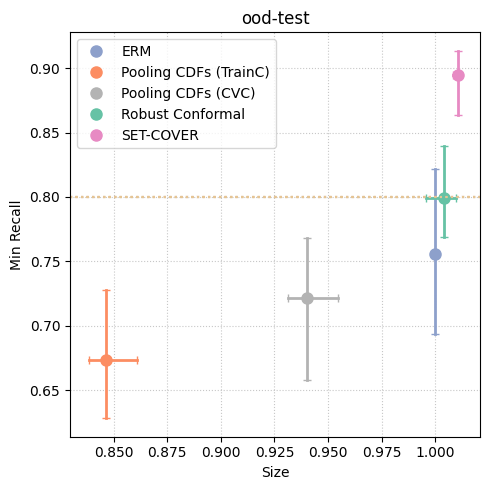}
        \caption{FMoW}
        \label{gamma:080:fmow}
    \end{subfigure}
    \begin{subfigure}{0.48\columnwidth}
        \centering
        \includegraphics[width=\linewidth]{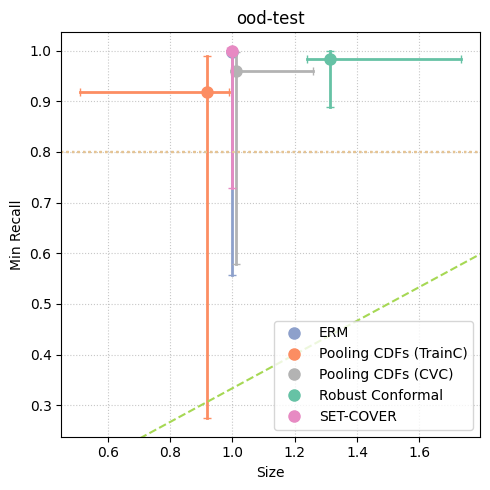}
        \caption{iWildCam}
        \label{gamma:080:iwildcam}
    \end{subfigure}
    \begin{subfigure}{0.48\columnwidth}
        \includegraphics[width=\linewidth]{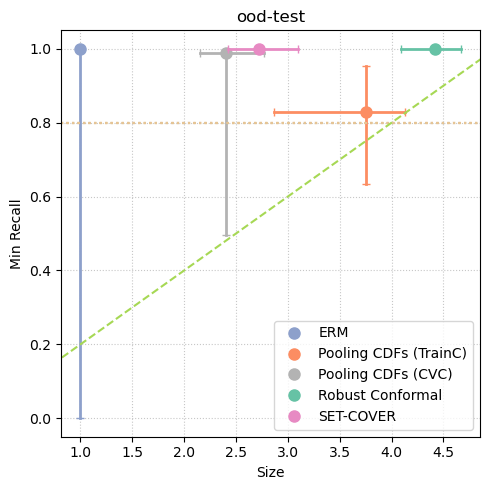}
        \caption{Amazon}
        \label{gamma:080:amazon}
    \end{subfigure}
    \caption{Results for recall target of 0.8 ($\gamma = 0.2$)}
    \label{gamma:080}
\end{figure}

\begin{table}[H]
    \centering
    \caption{Summary of OOD Results for recall level of 0.8 ($\gamma = 0.2$)}
    \label{table:gamma:080}
    \small
    \begin{subtable}{\columnwidth}
    \centering
    \renewcommand{\arraystretch}{1.3}
    \begin{tabular}{@{}l@{}|ccc|ccc}
        \multirow{2}{*}{\textbf{Model}} 
        & \multicolumn{3}{c}{\textbf{Camelyon}}
        & \multicolumn{3}{c}{\textbf{FMoW}}\\ 
        & \begin{tabular}[c]{@{}c@{}}Median \\ Min Recall $\uparrow$ \end{tabular}
            & \begin{tabular}[c]{@{}c@{}}Median\\ Avg Size $\downarrow$\end{tabular} 
            & \begin{tabular}[c]{@{}c@{}}Recall $\ge 90\%$\\ Pctg $\uparrow$\end{tabular}
            & \begin{tabular}[c]{@{}c@{}}Median \\ Min Recall $\uparrow$ \end{tabular}
            & \begin{tabular}[c]{@{}c@{}}Median\\ Avg Size $\downarrow$\end{tabular} 
            & \begin{tabular}[c]{@{}c@{}}Recall $\ge 90\%$\\ Pctg $\uparrow$\end{tabular}
            \\
        \hline \\
        \textbf{ERM} 
             & 0.91 & 1.0 & 0.75
            & 0.75 & 1.0 & 0.42 \\
        \begin{tabular}[c]{@{}l@{}}\textbf{CDF Pooling-}\\[-2pt] \textbf{(TrainC)} \end{tabular}
            & 0.74 & 0.81 & 0.38
            & 0.67 & 0.84 & 0.05 \\
        \begin{tabular}[c]{@{}l@{}}\textbf{CDF Pooling-}\\[-2pt] \textbf{(CVC)} \end{tabular}
            & 0.67 & 1.09 & 0.45
            & 0.72 & 0.94 & 0.16 \\
        \begin{tabular}[c]{@{}l@{}}\textbf{Robust}\\[-2pt] \textbf{Conformal} \end{tabular}
            & 0.95 & 1.77 & 0.90
            & 0.79 & 1.00 & 0.50 \\
        \textbf{SET-COVER} 
            & 0.91 & 1.00 & 0.71
            & 0.89 & 1.01 & 0.94 \\
    \end{tabular}
    \end{subtable}

    \vspace{0.5cm}
    
    \begin{subtable}{\linewidth}
    \centering
    \renewcommand{\arraystretch}{1.3}
    \begin{tabular}{@{}l@{}|ccc|ccc}
        \multirow{2}{*}{\normalsize \textbf{Model}} 
        & \multicolumn{3}{c}{\textbf{iWildCam}}
        & \multicolumn{3}{c}{\textbf{Amazon}}
        \\

        & \begin{tabular}[c]{@{}c@{}}Median \\ Min Recall $\uparrow$ \end{tabular}
            & \begin{tabular}[c]{@{}c@{}}Median\\ Avg Size $\downarrow$\end{tabular} 
            & \begin{tabular}[c]{@{}c@{}}Recall $\ge 90\%$\\ Pctg $\uparrow$\end{tabular} 
            & \begin{tabular}[c]{@{}c@{}}Median \\ Min Recall $\uparrow$ \end{tabular}
            & \begin{tabular}[c]{@{}c@{}}Median\\ Avg Size $\downarrow$\end{tabular} 
            & \begin{tabular}[c]{@{}c@{}}Recall $\ge 90\%$\\ Pctg $\uparrow$\end{tabular}
            \\
        \hline \\
        \textbf{ERM} 
            & 0.99 & 1.0 & 0.71 
            & 1.0 & 1.0 & 0.69
            \\
        \begin{tabular}[c]{@{}l@{}}\textbf{CDF Pooling-}\\[-2pt] \textbf{(TrainC)} \end{tabular}
            & 0.91 & 0.91 & 0.60 
            & 0.83 & 3.75 & 0.56
            \\
        \begin{tabular}[c]{@{}l@{}}\textbf{CDF Pooling-}\\[-2pt] \textbf{(CVC)} \end{tabular}
            & 0.95 & 1.01 & 0.70 
            & 0.99 & 2.40 & 0.70
            \\
        \begin{tabular}[c]{@{}l@{}}\textbf{Robust}\\[-2pt] \textbf{Conformal} \end{tabular} 
            & 0.98 & 1.31 & 0.76 
            & 1.0 & 4.41 & 1.0
            \\
        \textbf{SET-COVER} 
            & 0.99 & 1.00 & 0.71 
            & 1.0 & 2.72 & 0.98
            \\
    \end{tabular}
    \end{subtable}

\end{table}

\subsubsection{\textbf{0.95 Recall}}

 \begin{figure}[H]
    \centering
    \begin{subfigure}{0.48\columnwidth}
        \includegraphics[width=\linewidth]{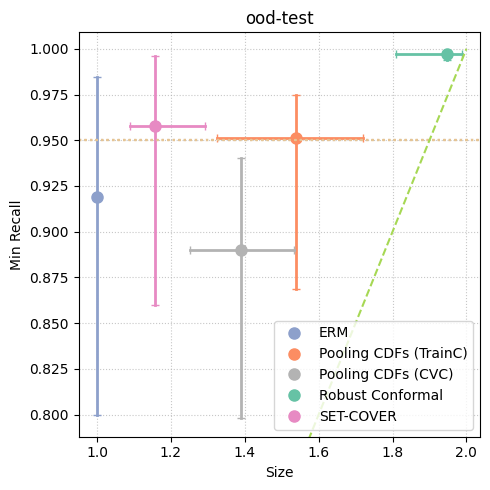}
        \caption{Camelyon}
        \label{gamma:095:camelyon}
    \end{subfigure}
    \begin{subfigure}{0.48\columnwidth}
        \centering
        \includegraphics[width=\linewidth]{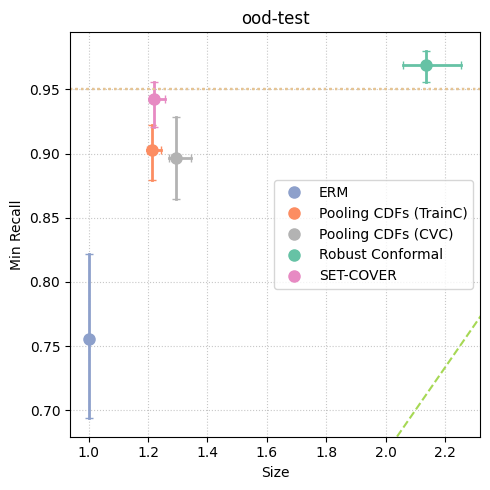}
        \caption{FMoW}
        \label{gamma:095:fmow}
    \end{subfigure}
    \begin{subfigure}{0.48\columnwidth}
        \centering
        \includegraphics[width=\linewidth]{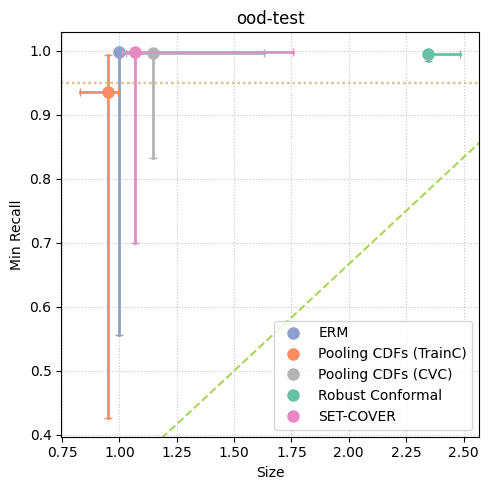}
        \caption{iWildCam}
        \label{gamma:095:iwildcam}
    \end{subfigure}
    \begin{subfigure}{0.48\columnwidth}
        \includegraphics[width=\linewidth]{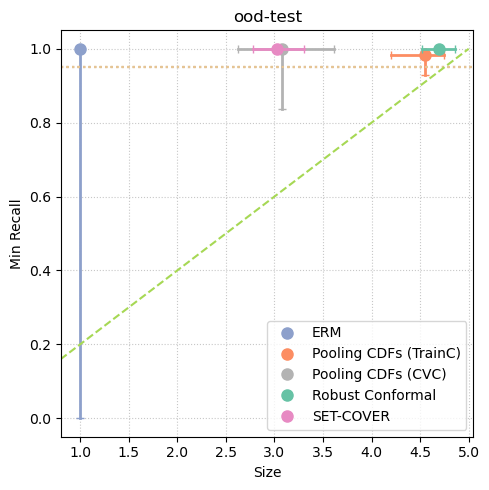}
        \caption{Amazon}
        \label{gamma:095:amazon}
    \end{subfigure}
    \caption{Results for recall target of 0.95 ($\gamma = 0.05$)}
    \label{gamma:095}
\end{figure}

\begin{table}[H]
    \centering
    \caption{Summary of OOD Results for recall level of 0.95 ($\gamma = 0.05$)}
    \label{table:gamma:095}
    \small
    \begin{subtable}{\columnwidth}
    \centering
    \renewcommand{\arraystretch}{1.3}
    \begin{tabular}{@{}l@{}|ccc|ccc}
        \multirow{2}{*}{\textbf{Model}} 
        & \multicolumn{3}{c}{\textbf{Camelyon}}
        & \multicolumn{3}{c}{\textbf{FMoW}}\\ 
        & \begin{tabular}[c]{@{}c@{}}Median \\ Min Recall $\uparrow$ \end{tabular}
            & \begin{tabular}[c]{@{}c@{}}Median\\ Avg Size $\downarrow$\end{tabular} 
            & \begin{tabular}[c]{@{}c@{}}Recall $\ge 90\%$\\ Pctg $\uparrow$\end{tabular}
            & \begin{tabular}[c]{@{}c@{}}Median \\ Min Recall $\uparrow$ \end{tabular}
            & \begin{tabular}[c]{@{}c@{}}Median\\ Avg Size $\downarrow$\end{tabular} 
            & \begin{tabular}[c]{@{}c@{}}Recall $\ge 90\%$\\ Pctg $\uparrow$\end{tabular}
            \\
        \hline \\
        \textbf{ERM} 
             & 0.91 & 1.0 & 0.48
            & 0.75 & 1.0 & 0.03 \\
        \begin{tabular}[c]{@{}l@{}}\textbf{CDF Pooling-}\\[-2pt] \textbf{(TrainC)} \end{tabular}
            & 0.95 & 1.53 & 0.5
            & 0.90 & 1.21 & 0.07 \\
        \begin{tabular}[c]{@{}l@{}}\textbf{CDF Pooling-}\\[-2pt] \textbf{(CVC)} \end{tabular}
            & 0.88 & 1.39 & 0.26
            & 0.89 & 1.29 & 0.14 \\
        \begin{tabular}[c]{@{}l@{}}\textbf{Robust}\\[-2pt] \textbf{Conformal} \end{tabular}
            & 0.99 & 1.94 & 0.93
            & 0.96 & 2.13 & 0.64 \\
        \textbf{SET-COVER} 
            & 0.95 & 1.15 & 0.65
            & 0.94 & 1.22 & 0.53 \\
    \end{tabular}
    \end{subtable}

    \vspace{0.5cm}
    
    \begin{subtable}{\linewidth}
    \centering
    \renewcommand{\arraystretch}{1.3}
    \begin{tabular}{@{}l@{}|ccc|ccc}
        \multirow{2}{*}{\normalsize \textbf{Model}} 
        & \multicolumn{3}{c}{\textbf{iWildCam}}
        & \multicolumn{3}{c}{\textbf{Amazon}}
        \\

        & \begin{tabular}[c]{@{}c@{}}Median \\ Min Recall $\uparrow$ \end{tabular}
            & \begin{tabular}[c]{@{}c@{}}Median\\ Avg Size $\downarrow$\end{tabular} 
            & \begin{tabular}[c]{@{}c@{}}Recall $\ge 90\%$\\ Pctg $\uparrow$\end{tabular} 
            & \begin{tabular}[c]{@{}c@{}}Median \\ Min Recall $\uparrow$ \end{tabular}
            & \begin{tabular}[c]{@{}c@{}}Median\\ Avg Size $\downarrow$\end{tabular} 
            & \begin{tabular}[c]{@{}c@{}}Recall $\ge 90\%$\\ Pctg $\uparrow$\end{tabular}
            \\
        \hline \\
        \textbf{ERM} 
            & 0.99 & 1.0 & 0.70 
            & 1.0 & 1.0 & 0.69
            \\
        \begin{tabular}[c]{@{}l@{}}\textbf{CDF Pooling-}\\[-2pt] \textbf{(TrainC)} \end{tabular}
            & 0.93 & 0.94 & 0.45 
            & 0.98 & 4.54 & 0.69
            \\
        \begin{tabular}[c]{@{}l@{}}\textbf{CDF Pooling-}\\[-2pt] \textbf{(CVC)} \end{tabular}
            & 0.99 & 1.14 & 0.72 
            & 1.0 & 3.07 & 0.72
            \\
        \begin{tabular}[c]{@{}l@{}}\textbf{Robust}\\[-2pt] \textbf{Conformal} \end{tabular} 
            & 0.99 & 2.34 & 0.85 
            & 1.0 & 4.69 & 1.0
            \\
        \textbf{SET-COVER} 
            & 0.99 & 1.07 & 0.77 
            & 1.0 & 3.02 & 0.97
            \\
    \end{tabular}
    \end{subtable}

\end{table}

\subsubsection{Relationship Between Recall and Set Size}

In Figure \ref{gamma:recall_vs_size} we illustrate how varying the recall target affects the resulting prediction set size, across methods and datasets. The results are based on the three $\gamma$ values presented earlier, corresponding to target recall levels of 0.8, 0.9, and 0.95. As expected, increasing the desired recall level generally leads to a corresponding increase in set size. This trade-off reflects the fundamental tension between coverage and specificity: higher recall necessitates larger prediction sets to ensure that the correct label is included. The trend is consistent across datasets and methods, although the magnitude of the size increase varies. Notably, SET-COVER tends to achieve high recall with relatively smaller increases in set size, indicating better efficiency in balancing recall and compactness. These observations reinforce the importance of selecting $\gamma$ (and the associated recall target) with awareness of the practical constraints and cost associated with larger prediction sets.

\begin{figure}[H]
    \centering
    \includegraphics[width=\linewidth]{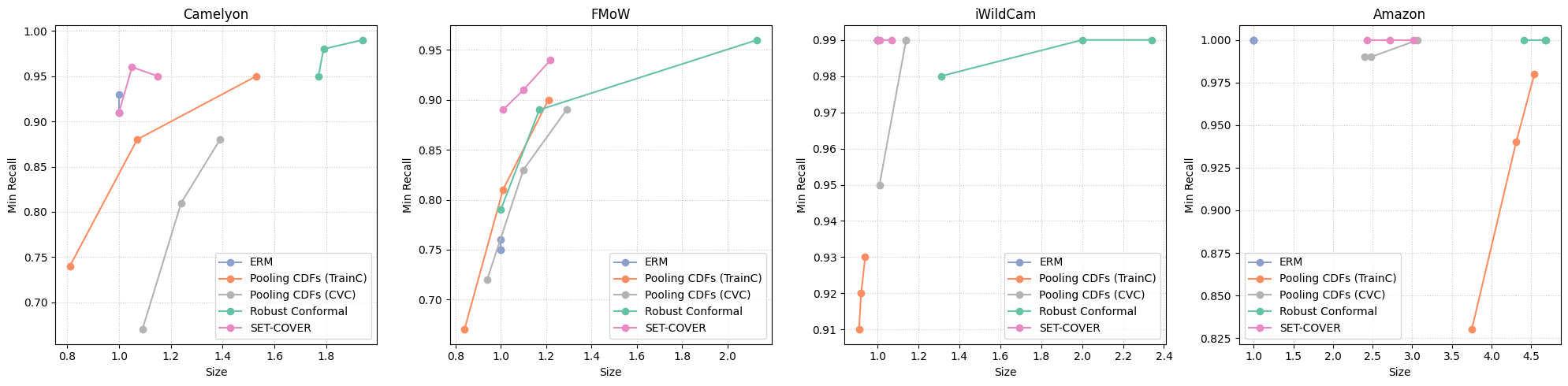}
    \caption{Relationship between recall and set size. Each curve corresponds to a method and shows results for three target recall levels ($\gamma \in$ \{0.2, 0.1, 0.05\}, corresponding to target recalls of 0.8, 0.9, and 0.95). The Y-axis indicates the actual minimum recall achieved, while the X-axis shows the corresponding prediction set size.}
    \label{gamma:recall_vs_size}
\end{figure}

\subsection{Experiments on Other OOD Baselines}
\label{appendix sec: ood baselines exp}
For the main experiments of this work we chose the ERM method as the single-prediction basekine, due to its vast popularity in real-world applications, and its superior, or at least compatible performance in various OOD baselines \cite{koh2021wildsbenchmarkinthewilddistribution, gulrajani2020search}. In the next section we compare SET-COVER to additional common OOD baselines. These include IRM \citep{arjovsky2019invariant}, VREx \citep{krueger2021out}, MMD \citep{li2018domain}, and CORAL \citep{sun2016deep}. We use the DomainBed \cite{gulrajani2020search} package to train these models. 

The results show that common single-prediction baselines do not maintain the 90\% min-recall target across most OOD domains. SET-COVER presents an advantage in getting the target min-recall level across unseen domains, suggesting that set-valued predictors may be a step in the right direction for robust OOD generalization.

\begin{figure}[H]
    \centering
    \begin{subfigure}{0.30\columnwidth}
        \includegraphics[width=\linewidth]{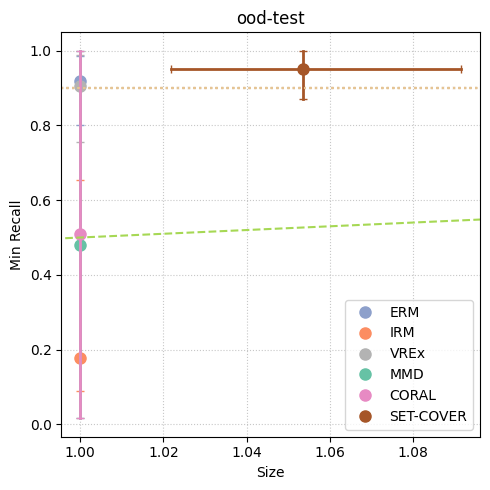}
        \caption{Camelyon}
        \label{irm:cross:camelyon}
    \end{subfigure}
    \begin{subfigure}{0.30\columnwidth}
        \centering
        \includegraphics[width=\linewidth]{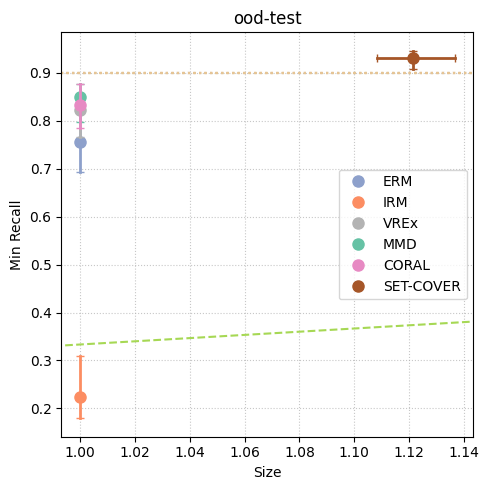}
        \caption{FMoW}
        \label{irm:cross:iwildcam}
    \end{subfigure}
    \begin{subfigure}{0.30\columnwidth}
        \centering
        \includegraphics[width=\linewidth]{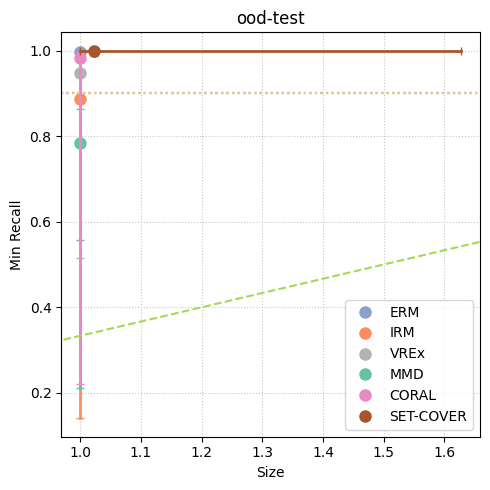}
        \caption{iWildCam}
        \label{irm:cross:fmow}
    \end{subfigure}
    \caption{Each figure represents Min-Recall over Avg Set Size cross. y-axis represents min-recall, and x-axis represents average set size. Each cross shows the median and the 25th and 75th percentiles for both metrics across domain. The horizontal solid line represents the 90\% recall target value, and dashed yellow diagonal line represents performance of a random predictor.}
    \label{irm:cross}
\end{figure}

\begin{figure}[H]
    \centering
    \begin{subfigure}{0.30\columnwidth}
        \includegraphics[width=\linewidth]{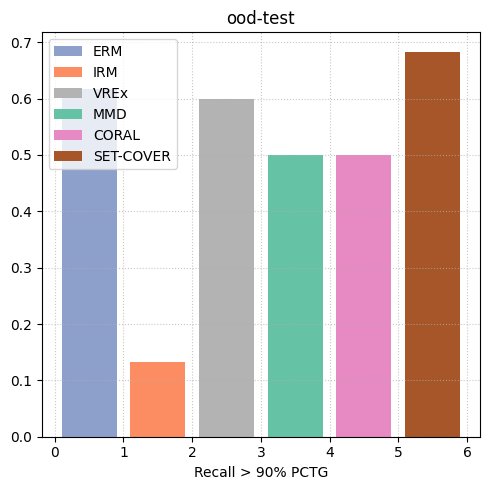}
        \caption{Camelyon}
        \label{irm:correction:camelyon}
    \end{subfigure}
    \begin{subfigure}{0.30\columnwidth}
        \centering
        \includegraphics[width=\linewidth]{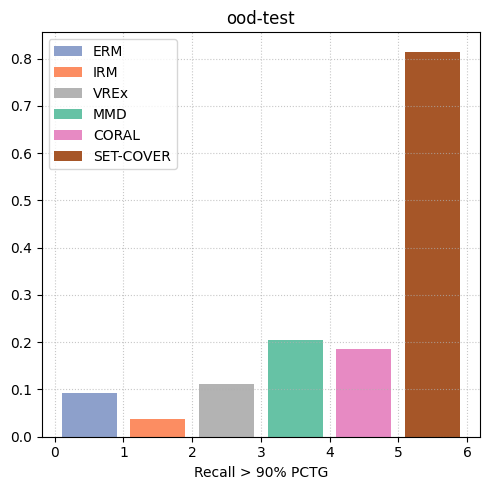}
        \caption{FMoW}
        \label{irm:correction:iwildcam}
    \end{subfigure}
    \begin{subfigure}{0.30\columnwidth}
        \centering
        \includegraphics[width=\linewidth]{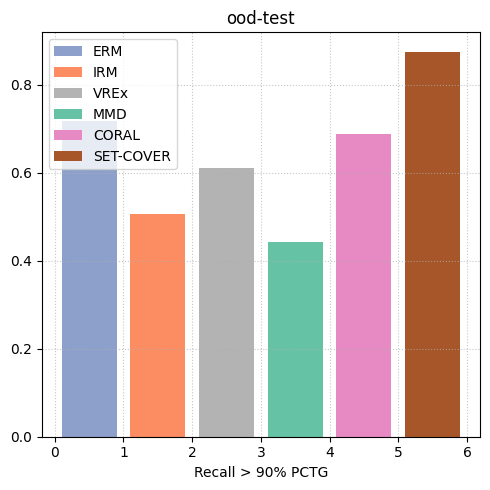}
        \caption{iWildCam}
        \label{irm:correction:fmow}
    \end{subfigure}
    \caption{Percentage of OOD domains where the min-recall is higher than 90\%. Each bar represents a different model.}
    \label{irm:correction}
\end{figure}

\begin{figure}[H]
    \centering
    \begin{subfigure}{0.30\columnwidth}
        \includegraphics[width=\linewidth]{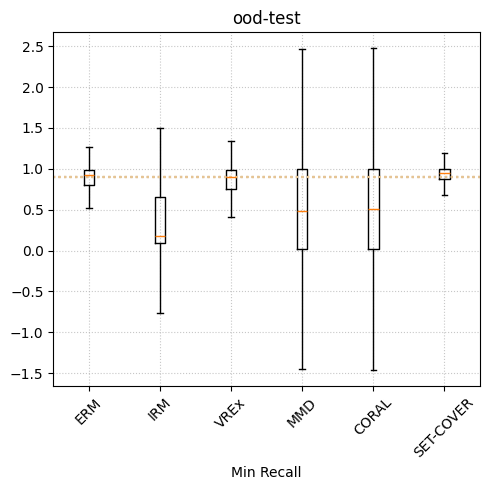}
        \caption{Camelyon}
        \label{irm:recall:camelyon}
    \end{subfigure}
    \begin{subfigure}{0.30\columnwidth}
        \centering
        \includegraphics[width=\linewidth]{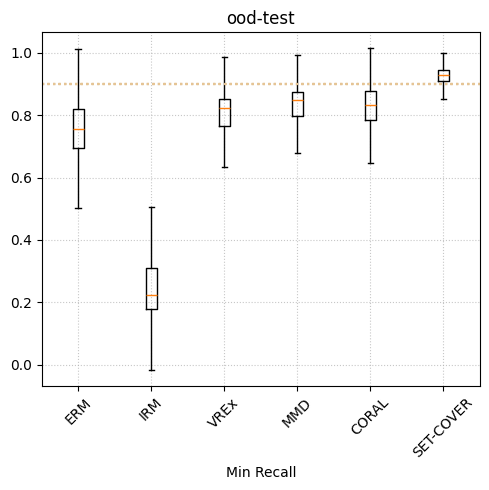}
        \caption{FMoW}
        \label{irm:recall:iwildcam}
    \end{subfigure}
    \begin{subfigure}{0.30\columnwidth}
        \centering
        \includegraphics[width=\linewidth]{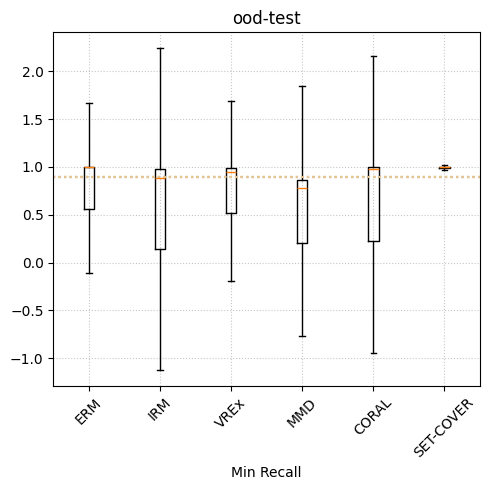}
        \caption{iWildCam}
        \label{irm:recall:fmow}
    \end{subfigure}
    \caption{Boxplots represent the distribution of min-recall across OOD domains. }
    \label{irm:recall}
\end{figure}

\begin{table}[H]
    \centering
    \caption{Summary of OOD Results for different OOD baselines.}
    \label{table: irm}
    \small
    \begin{subtable}{\columnwidth}
    \centering
    \renewcommand{\arraystretch}{1.3}
    \begin{tabular}{@{}l@{}|ccc|ccc}
        \multirow{2}{*}{\textbf{Model}} 
        & \multicolumn{3}{c}{\textbf{Camelyon}}
        & \multicolumn{3}{c}{\textbf{FMoW}}\\ 
        & \begin{tabular}[c]{@{}c@{}}Median \\ Min Recall $\uparrow$ \end{tabular}
            & \begin{tabular}[c]{@{}c@{}}Median\\ Avg Size $\downarrow$\end{tabular} 
            & \begin{tabular}[c]{@{}c@{}}Recall $\ge 90\%$\\ Pctg $\uparrow$\end{tabular}
            & \begin{tabular}[c]{@{}c@{}}Median \\ Min Recall $\uparrow$ \end{tabular}
            & \begin{tabular}[c]{@{}c@{}}Median\\ Avg Size $\downarrow$\end{tabular} 
            & \begin{tabular}[c]{@{}c@{}}Recall $\ge 90\%$\\ Pctg $\uparrow$\end{tabular}
            \\
        \hline \\
        \textbf{ERM} 
             & 0.91 & 1.0 & 0.61
            & 0.75 & 1.0 & 0.09 \\
        \textbf{IRM}
            & 0.17 & 1.00 & 0.13
            & 0.22 & 1.00 & 0.03 \\
        \textbf{VREx}
            & 0.90 & 1.00 & 0.60
            & 0.82 & 1.00 & 0.11 \\
        \textbf{MMD}
            & 0.48 & 1.00 & 0.50
            & 0.84 & 1.00 & 0.20 \\
        \textbf{CORAL}
            & 0.50 & 1.00 & 0.50
            & 0.83 & 1.00 & 0.18 \\
        \textbf{SET-COVER} 
            & 0.95 & 1.05 & 0.68
            & 0.93 & 1.12 & 0.81 \\
    \end{tabular}
    \end{subtable}

    \vspace{0.5cm}
    
    \begin{subtable}{\linewidth}
    \centering
    \renewcommand{\arraystretch}{1.3}
    \begin{tabular}{@{}l@{}|ccc}
        \multirow{2}{*}{\normalsize \textbf{Model}} 
        & \multicolumn{3}{c}{\textbf{iWildCam}}
        \\

        & \begin{tabular}[c]{@{}c@{}}Median \\ Min Recall $\uparrow$ \end{tabular}
            & \begin{tabular}[c]{@{}c@{}}Median\\ Avg Size $\downarrow$\end{tabular} 
            & \begin{tabular}[c]{@{}c@{}}Recall $\ge 90\%$\\ Pctg $\uparrow$\end{tabular} 
            \\
        \hline \\
        \textbf{ERM} 
            & 0.99 & 1.0 & 0.71 
            \\
        \textbf{IRM}
            & 0.88 & 1.00 & 0.50 
            \\
        \textbf{VREx}
            & 0.94 & 1.00 & 0.60 
            \\
        \textbf{MMD}
            & 0.78 & 1.00 & 0.44 
            \\
        \textbf{CORAL}
            & 0.98 & 1.00 & 0.68 
            \\
        \textbf{SET-COVER} 
            & 1.00 & 1.02 & 0.87 
            \\
    \end{tabular}
    \end{subtable}

\end{table}

\subsection{Additional Results of Main Experiments}
We present here additional results of the WILDS experiments presented in section \ref{sec: wilds exp}.

\subsubsection{Coverage Generalization Plot}

 \begin{figure}[H]
    \centering
    \begin{subfigure}{0.24\textwidth}
        \includegraphics[width=\linewidth]{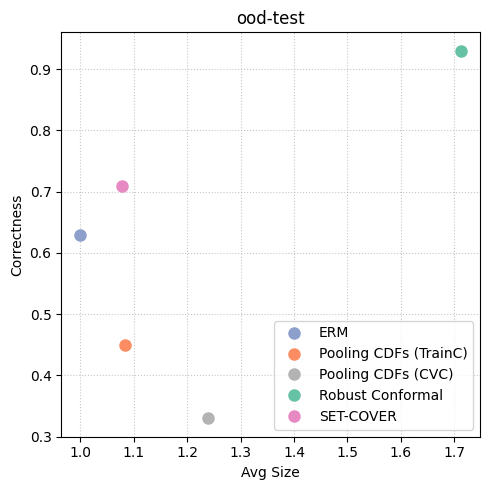}
        \caption{Camelyon}
        \label{coverage:wilds:cross:camelyon}
    \end{subfigure}
    \begin{subfigure}{0.24\textwidth}
        \centering
        \includegraphics[width=\linewidth]{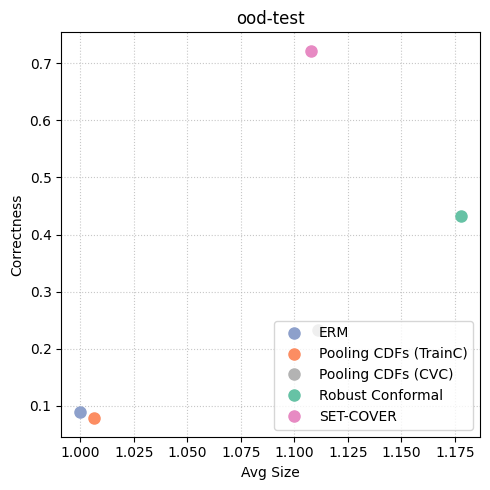}
        \caption{Fmow}
        \label{coverage:wilds:cross:fmow}
    \end{subfigure}
    \begin{subfigure}{0.24\textwidth}
        \centering
        \includegraphics[width=\linewidth]{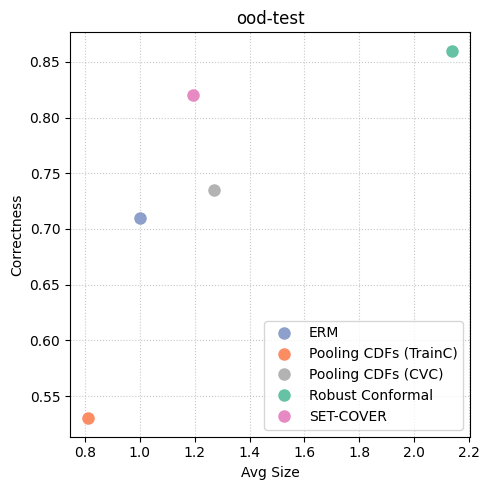}
        \caption{Iwildcam}
        \label{coverage:wilds:cross:iwildcam}
    \end{subfigure}
    \begin{subfigure}{0.24\textwidth}
        \includegraphics[width=\linewidth]{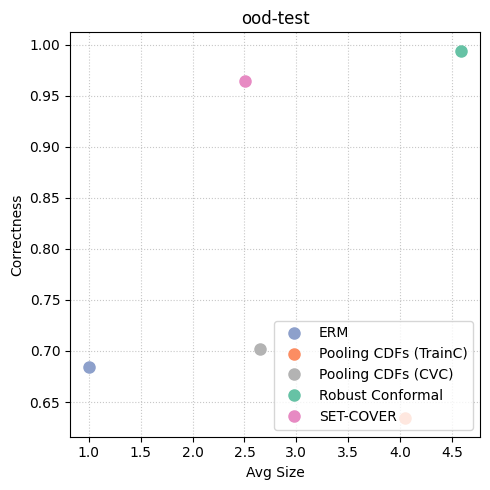}
        \caption{Amazon}
        \label{coverage:wilds:cross:amazon}
    \end{subfigure}
    \caption{Y-axis represents the percentage of domains with min-recall $\ge 90\%$. X-axis represents average set size. \textbf{Blue} represents ERM predictor, \textbf{Orange} represents Pooling CDFs (TrainC), \textbf{Grey} represents Pooling CDFs (CVC), \textbf{Green} represents robust conformal predictor, and \textbf{Pink} represents SET-COVER.}
    \label{appendix: coverage generalization}
\end{figure}

\subsubsection{SET-COVER Loss Variations Comparisson}
\label{apendix sec: loss comparison}
In section \ref{sec:optimized method} we argue that SET-COVER should not penalize a correctly predicted label in the set-size loss term of the Lagrangian (the first addend of the Lagrangian). This led us to update the Lagrangian we optimize from

\begin{equation}
\begin{aligned}
\nonumber
L_y(\theta, C) = 
& \sum_{i \in S} \max\{0, 1 + h^{\theta}_y(X_i)\} + \\
& \sum_{e \in E_{train}}\mathbbm{1}_{i \in G_{e,y}}C_{e,y} \max\{0, 1 - h^{\theta}_y(X_i)\},
\end{aligned}
\end{equation}

to

\begin{equation}
\begin{aligned}
\nonumber
L_y(\theta, C) = 
& \sum_{i \not \in G_y} \max\{0, 1 + h^{\theta}_y(X_i)\} + \\
& \sum_{e \in E_{train}}1_{i \in G_{e,y}}C_{e,y}\max\{0, 1 - h^{\theta}_y(X_i)\}.
\end{aligned}
\end{equation}

Here we compare the two variants. The first variant we call ``Full Set Penalty'', as it penalizes also the correct label in the set-size term of the Lagrangian.
The second variant is called ``Wrong Prediction Penalty'', as it only penalizes the wrong labels in the prediction set.
Figure \ref{appendix: loss comparisons} shows that the ``Full Set Penalty'' does not consistently improve set size, and in Camelyon and Iwildcam it even outputs somewhat larger sets. In addition, it does not lead to an improvement in coverage, and in Camelyon and Iwildcam it even leads to a moderate degradation in coverage.

 \begin{figure}[H]
    \centering
    \begin{subfigure}{0.24\textwidth}
        \includegraphics[width=\linewidth]{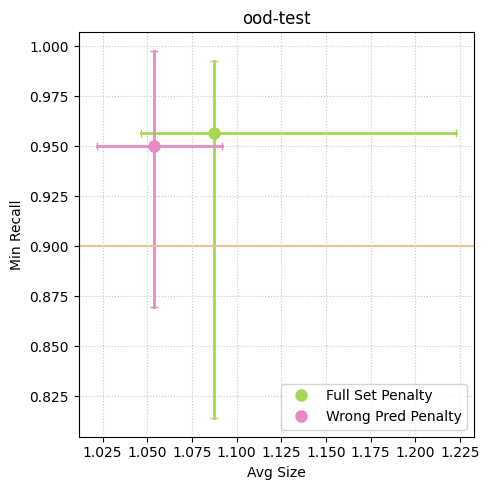}
        \caption{Camelyon}
        \label{alt_loss:wilds:cross:camelyon}
    \end{subfigure}
    \begin{subfigure}{0.24\textwidth}
        \centering
        \includegraphics[width=\linewidth]{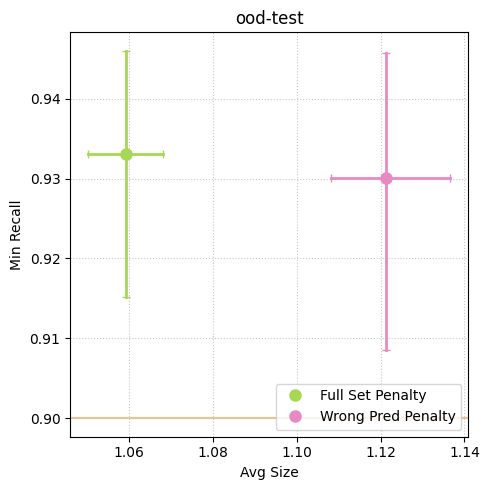}
        \caption{Fmow}
        \label{alt_loss:wilds:cross:fmow}
    \end{subfigure}
    \begin{subfigure}{0.24\textwidth}
        \centering
        \includegraphics[width=\linewidth]{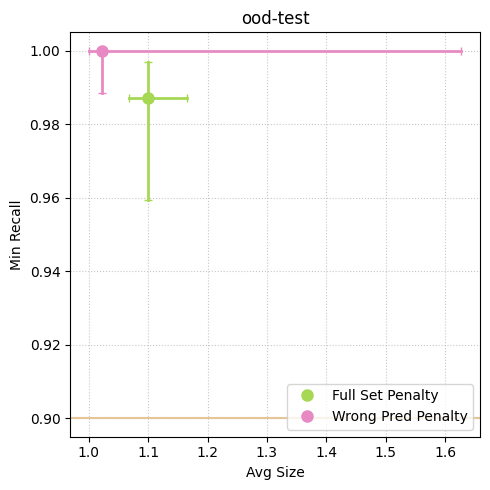}
        \caption{Iwildcam}
        \label{alt_loss:wilds:cross:iwildcam}
    \end{subfigure}
    \begin{subfigure}{0.24\textwidth}
        \includegraphics[width=\linewidth]{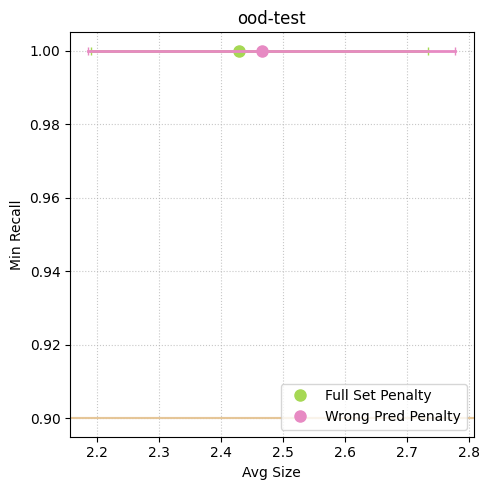}
        \caption{Amazon}
        \label{alt_loss:wilds:cross:amazon}
    \end{subfigure}
    \caption{Each figure represents Min-Recall over Avg Set Size cross. y-axis represents min-recall, and x-axis represents average set size. Each cross shows the median and the 25th and 75th percentiles for both metrics across domain. \textbf{Pink} represents SET-COVER as presented in the paper, i.e the ``Wrong Prediction Penalty'' variant. \textbf{Yellow} represents SET-COVER with loss penalizing entire set size, including correct labels, i.e the ``Full Set Penalty''.}
    \label{appendix: loss comparisons}
\end{figure}

\subsubsection{Training Time and Computational Cost}
\label{app:training-time}

We report the average training times (measured on a single NVIDIA GPU) for ERM and SET-COVER on each dataset in Table~\ref{tab:train-times}. SET-COVER incurs a moderate increase in training time (approximately 30\% on average) compared to ERM, primarily due to the optimization of Lagrange multipliers ($C$ in our algorithm). Aside from this, SET-COVER shares similar computational requirements with ERM, relying on hinge-loss-based optimization without substantial architectural complexity.
\begin{table}[h]
\centering
\caption{Average training times (in minutes) for ERM and SET-COVER across datasets.}
\label{tab:train-times}
\begin{tabular}{lcc}
\toprule
\textbf{Dataset} & \textbf{ERM (min)} & \textbf{SET-COVER (min)} \\
\midrule
Camelyon  & 98  & 133 \\
FMoW      & 45  & 56  \\
iWildCam  & 46  & 58  \\
Amazon    & 12  & 15  \\
\bottomrule
\end{tabular}
\end{table}

The current implementation of SET-COVER can be further optimized by, for example, exploiting GPU parallelism more effectively. We anticipate that such improvements would significantly reduce the additional computational overhead.

Other set-valued predictors (Pooling CDFs, Robust Conformal) are trained in two stages where the first stage involves training an ERM classifier, which dominates the overall runtime. For simplicity, we approximate their training time by that of ERM. Additionally, some OOD baselines discussed in Appendix E.5 rely on DomainBed implementations, which apply various runtime optimizations, making direct training time comparisons inconsistent.

\subsubsection{Cross Plots on Training Domains}
We present in Figure \ref{appendix: cross plot all sets} the cross-plot that is presented in Figure \ref{wilds:cross}, Section \ref{sec: wilds exp}, but here we present it for Train set and In-Domain test set, alongside the OOD test set (which is also presented in Figure \ref{wilds:cross}).

 \begin{figure}[tbh]
    \centering
    \begin{subfigure}{0.9\textwidth}
        \includegraphics[width=\linewidth]{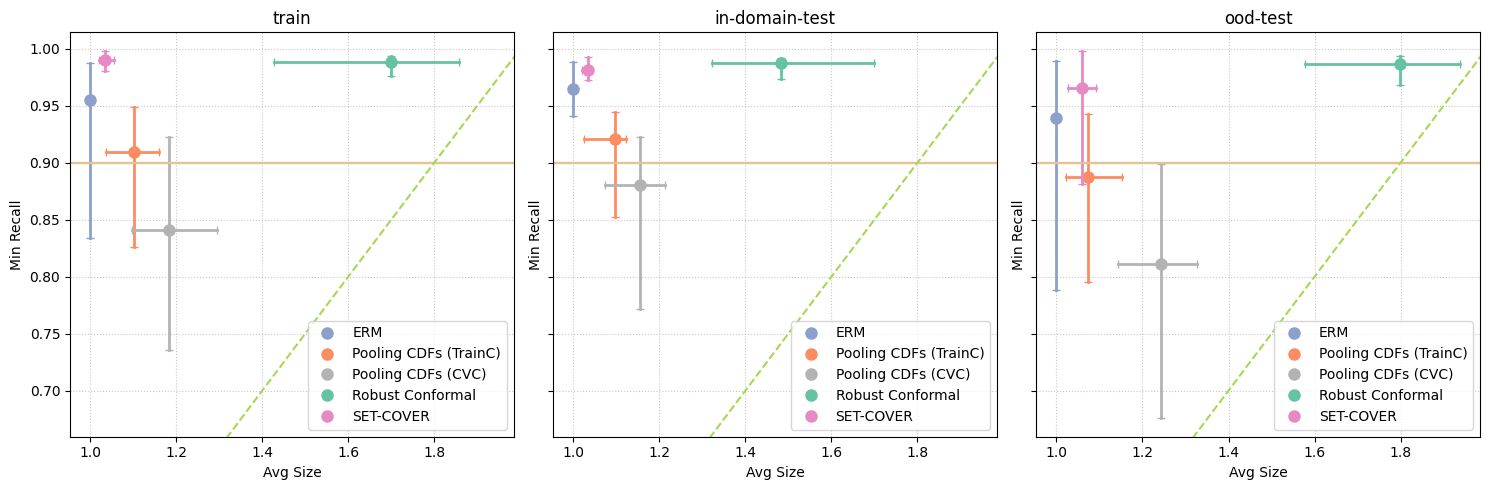}
        \caption{Camelyon}
        \label{train_data:wilds:cross:camelyon}
    \end{subfigure}
    \begin{subfigure}{0.9\textwidth}
        \centering
        \includegraphics[width=\linewidth]{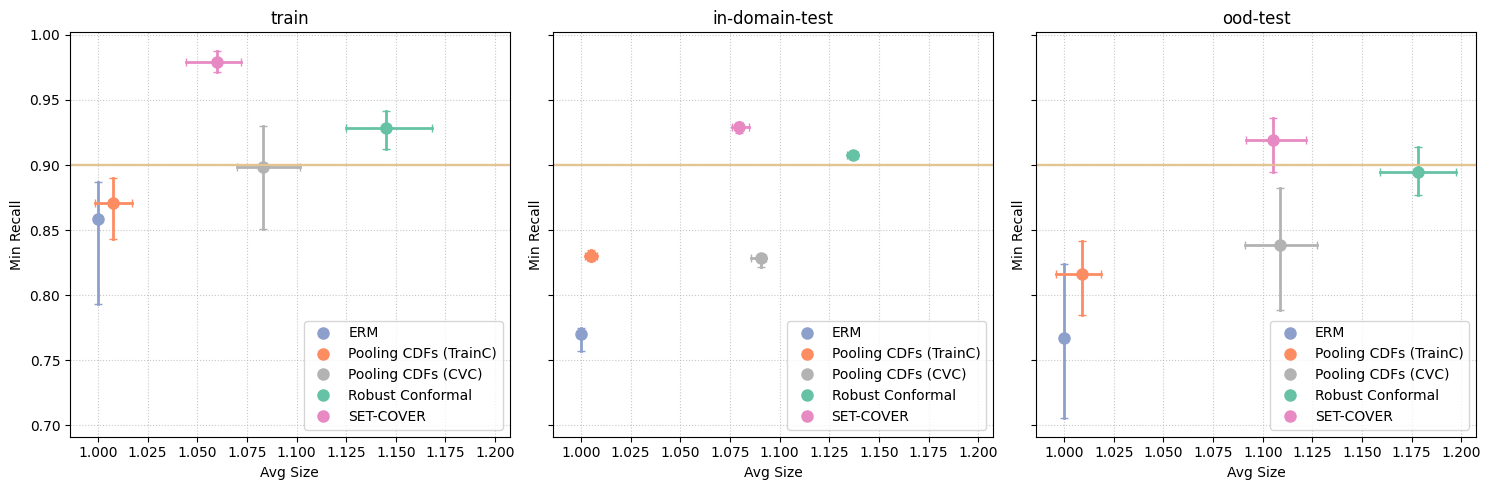}
        \caption{Fmow}
        \label{train_data:wilds:cross:fmow}
    \end{subfigure}
    \begin{subfigure}{0.9\textwidth}
        \centering
        \includegraphics[width=\linewidth]{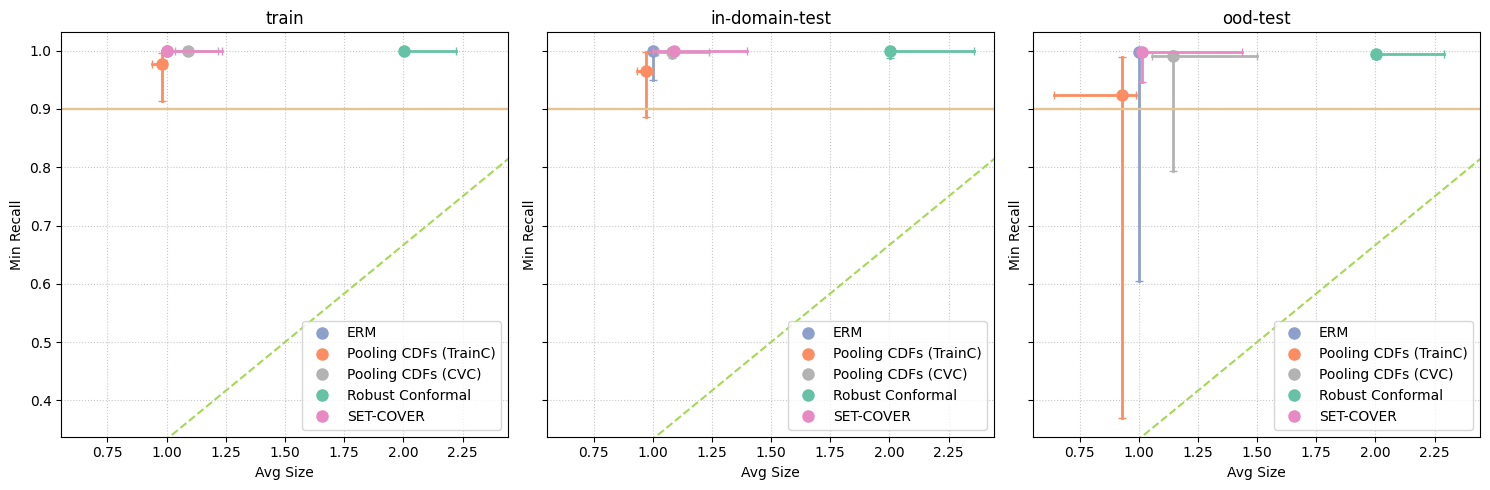}
        \caption{Iwildcam}
        \label{train_data:wilds:cross:iwildcam}
    \end{subfigure}
    \begin{subfigure}{0.9\textwidth}
        \includegraphics[width=\linewidth]{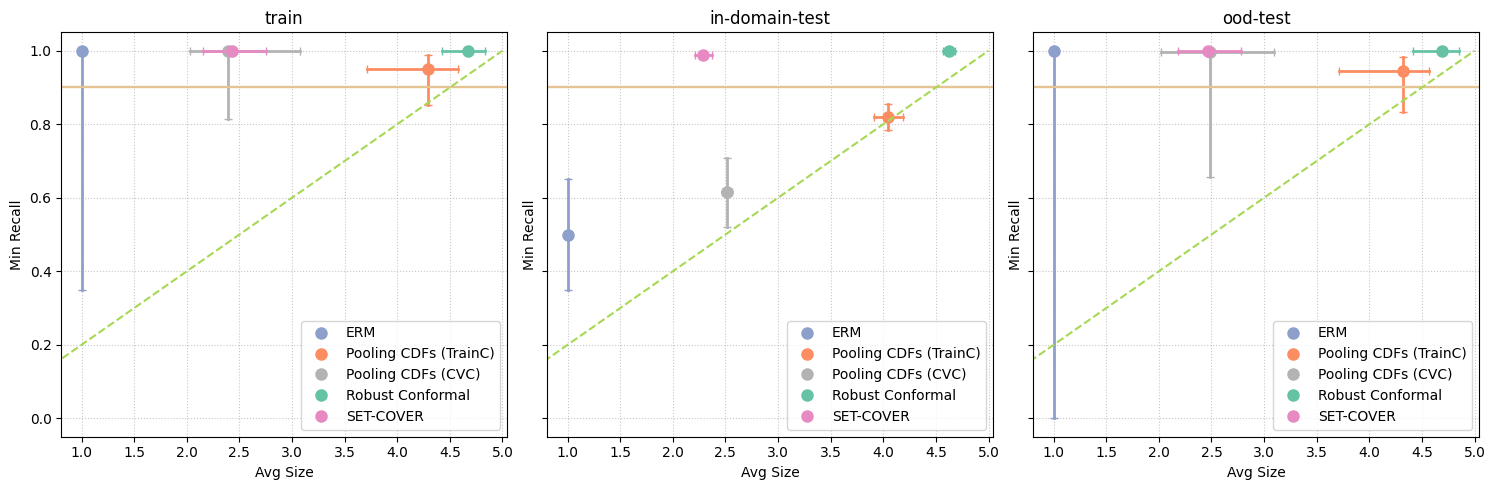}
        \caption{Amazon}
        \label{train_data:wilds:cross:amazon}
    \end{subfigure}
    \caption{Cross Plots For All Data Sets}
    \label{appendix: cross plot all sets}
\end{figure}


\end{document}